\documentclass{article}

\usepackage{microtype}
\usepackage{graphicx}
\usepackage{subcaption}
\usepackage{hyperref}

\usepackage[preprint]{icml2026}

\usepackage{amsmath}
\usepackage{amssymb}
\usepackage{mathtools}
\usepackage{amsthm}

\usepackage[capitalize,noabbrev]{cleveref}

\usepackage{thmtools,thm-restate}  

\theoremstyle{plain}
\newtheorem{theorem}{Theorem}[section]
\newtheorem{proposition}[theorem]{Proposition}
\newtheorem{lemma}[theorem]{Lemma}

\theoremstyle{definition}
\newtheorem{definition}[theorem]{Definition}
\newtheorem{assumption}[theorem]{Assumption}
\theoremstyle{remark}
\newtheorem{remark}[theorem]{Remark}
\theoremstyle{example}
\newtheorem{example}[theorem]{Example}

\icmltitlerunning{Fast Sampling for Flows and Diffusions with Lazy and Point Mass Stochastic Interpolants}

\usepackage{mathtools}
\usepackage{bm}
\usepackage{bbm}
\usepackage{derivative}
\usepackage{interval}
\intervalconfig{soft open fences}
\usepackage{etoolbox}
\usepackage{xparse}
\graphicspath{{figures/}}

\NewDocumentCommand\blank{}{\cdot}

\NewDocumentCommand\NewSurroundExp{mmm}{
  \NewDocumentCommand{#1}{sm}{
    \IfBooleanTF{##1}
    {#2\ifblank{##2}{\blank}{##2}#3}
    {\mathopen{}\left#2\ifblank{##2}{\blank}{##2}\right#3\mathclose{}}
  }
}

\NewDocumentCommand\NewSurroundExpTwo{mmm}{
  \NewDocumentCommand{#1}{smm}{
    \IfBooleanTF{##1}
    {#2\ifblank{##2}{\blank}{##2},\ifblank{##3}{\blank}{##3}#3}
    {\mathopen{}\left#2\ifblank{##2}{\blank}{##2},\ifblank{##3}{\blank}{##3}\right#3\mathclose{}}
  }
}

\NewDocumentCommand\NewScalableRelOp{mm}{
  \NewDocumentCommand{#1}{s}{
    \IfBooleanTF{##1}
    {#2}
    {\middle#2}
  }
}

\let\intervalOld\interval{}
\RenewDocumentCommand\interval{somm}{
  \IfBooleanTF{#1}
  {\IfValueTF{#2}
    {\intervalOld[#2]{#3}{#4}}
    {\intervalOld{#3}{#4}}
  }
  {\IfValueTF{#2}
    {\intervalOld[scaled,#2]{#3}{#4}}
    {\intervalOld[scaled]{#3}{#4}}
  }
}

\NewSurroundExp\ps{\lparen}{\rparen}
\NewSurroundExp\bs{\lbrack}{\rbrack}

\NewSurroundExp\set{\lbrace}{\rbrace}
\NewSurroundExp\abs{\lvert}{\rvert}
\NewSurroundExp\norm{\lVert}{\rVert}
\NewSurroundExp\floor{\lfloor}{\rfloor}
\NewSurroundExp\ceil{\lceil}{\rceil}

\NewSurroundExp\quadraticvar{\langle}{\rangle}

\let\define\definel{}

\NewDocumentCommand\given{}{\mid}

\DeclareMathOperator*{\argmin}{argmin}

\NewDocumentCommand\eps{}{\varepsilon}

\NewDocumentCommand\numsetfont{}{\mathbb}

\NewDocumentCommand\reals{}{\numsetfont{R}}
\NewDocumentCommand\posreals{}{\interval[open right]{0}{\infty}}
\NewDocumentCommand\extreals{}{\overline{\numsetfont{R}}}
\NewDocumentCommand\extposreals{}{\interval{0}{\infty}}

\NewDocumentCommand\ex{}{\mathbb{E}}
{}

\NewDocumentCommand\grad{}{\nabla}

\DeclareMathOperator{\id}{\mathbf{I}}
\DeclareMathOperator{\law}{Law}
\DeclareMathOperator{\kldiv}{KL}

\NewDocumentCommand\normal{}{\mathcal{N}}
\NewDocumentCommand\standardnormal{}{\mathcal{N}(0, \id)}

\NewDocumentCommand\sample{}{\sim}

\NewDocumentCommand\unitinterval{}{\interval{0}{1}}

\usepackage{placeins} 
\usepackage{float} 

\usepackage{fancyvrb} 

\usepackage{pgfplots}  
\pgfplotsset{compat=1.18}  
\usepgfplotslibrary{groupplots}  

\usepackage{siunitx}  

\definecolor{retrocyan}{RGB}{6,212,210}
\definecolor{retroorange}{RGB}{252,142,10}

\usepackage{enumitem}  
\setlist[enumerate]{%
  leftmargin=*,
  itemsep=3pt plus 0pt minus 0pt,
  parsep=0pt plus 0pt minus 0pt,
  topsep=3pt plus 0pt minus 0pt,
  partopsep=0pt,
}

\def\zenodourl{https://zenodo.org/records/18301277?preview=1&token=eyJhbGciOiJIUzUxMiJ9.eyJpZCI6Ijk0OTkxODE4LTA3MTktNDQyOC1iNDhhLTA0NjA3MzBhODBkMiIsImRhdGEiOnt9LCJyYW5kb20iOiJkYTViNzgzYTU0MDEzMTMwNzQ0MGQxNDhjNzhmODBlNSJ9.HHBcqQqqCo-k3zUcSnbE0ydjZkfhZrzEZ71DrmcC8wmISsqZxh9Ey53FirN_yVMh6f8RHHWZcknZgdHOGhTdkg}  

\begin{document}

\twocolumn[
  \icmltitle{Fast Sampling for Flows and Diffusions \\ with Lazy and Point Mass Stochastic Interpolants}

  \icmlsetsymbol{equal}{*}

  \begin{icmlauthorlist}
    \icmlauthor{Gabriel Damsholt}{ucph}
    \icmlauthor{Jes Frellsen}{dtu}
    \icmlauthor{Susanne Ditlevsen}{ucph}
  \end{icmlauthorlist}

  \icmlaffiliation{ucph}{Department of Mathematical Sciences, University of Copenhagen, Copenhagen, Denmark}
  \icmlaffiliation{dtu}{Department of Machine Learning and Signal Processing, Technical University of Denmark, Lyngby, Denmark}

  \icmlcorrespondingauthor{Gabriel Damsholt}{gnd@math.ku.dk}

  \icmlkeywords{Machine Learning, Artificial Intelligence, Stochastic Interpolants, Flows Matching, Diffusion Models, Generative Models, Stochastic Differential Equations, Ordinary Differential Equations, Stochastic Calculus}

  \vskip 0.3in
]
\printAffiliationsAndNotice{}  

\begin{figure*}[ht]
  \centering
  \includegraphics[width=0.333\linewidth]{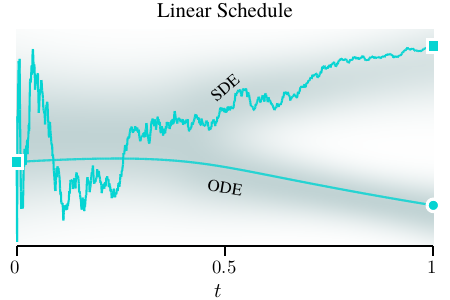}\hfill
  \includegraphics[width=0.333\linewidth]{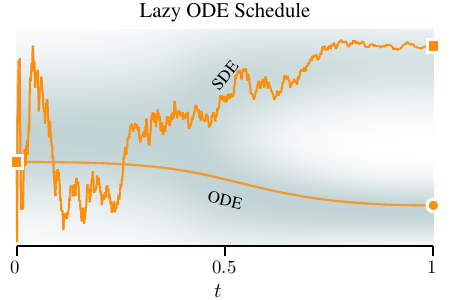}\hfill
  \includegraphics[width=0.333\linewidth]{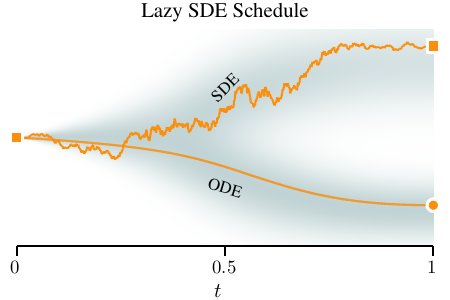}

  \caption{ODE ($\eps \equiv 0$) and statistically optimal SDE ($\eps = \eps^*$) sample path under three different interpolation schedules for $\rho_X$ a Gaussian mixture density for which the dynamics are analytically known \citep[Appendix A]{stochastic_interpolants}. All paths start in the same initial condition which sits in the initial position (cyan circle) for the leftmost two interpolants with density-admitting schedule and in the initial drift for the rightmost point mass schedule. The same Wiener process realization is shared between all three SDE solutions. Leftmost is the linear schedule from \cref{def:linear_interpolant}, middle and rightmost is the lazy interpolant for the ODE and statistically optimal SDE, respectively, as in \cref{ex:u_t_eq_t}. Since $u_t = t$ for all three schedules, all sample paths are equivalent up to a reparameterization of space. Note that all sample paths end in the same position for the ODE and SDE, respectively. See \cref{fig:schedules} for a plot of the three schedules and their statistically optimal diffusion scales.}
  \label{fig:typst_interpolant_paths}
  \vskip -0.1in
\end{figure*}

\begin{abstract}
  \emph{Stochastic interpolants} unify flows and diffusions, popular generative modeling frameworks. A primary hyperparameter in these methods is the \emph{interpolation schedule} that determines how to bridge a standard Gaussian base measure to an arbitrary target measure. We prove how to convert a sample path of a stochastic differential equation (SDE) with arbitrary diffusion coefficient under any schedule into the unique sample path under another arbitrary schedule and diffusion coefficient. We then extend the stochastic interpolant framework to admit a larger class of \emph{point mass schedules} in which the Gaussian base measure collapses to a point mass measure. Under the assumption of Gaussian data, we identify \emph{lazy} schedule families that make the drift identically zero and show that with deterministic sampling one gets a variance-preserving schedule commonly used in diffusion models, whereas with statistically optimal SDE sampling one gets our point mass schedule. Finally, to demonstrate the usefulness of our theoretical results on realistic highly non-Gaussian data, we apply our lazy schedule conversion to a state-of-the-art pretrained flow model and show that this allows for generating images in fewer steps without retraining the model.

\end{abstract}

\section{Background}

A basic goal of generative AI is to sample from a non-trivial target distribution, potentially in a very high-dimensional space, accessible only through a finite dataset of samples. Specifically, let $X^{(1)}, X^{(2)}, \ldots, X^{(N)} \in \reals^d$ be i.i.d.~samples from an unknown distribution with density $\rho_X$. We then wish to generate new samples $X \sample \rho_X$.

Various techniques for solving this basic generative problem exist. Many modern methods are based on \emph{dynamical systems}. The idea is to learn the dynamics of a dynamical system with an analytically samplable initial law, typically taken to be standard Gaussian, and with final law $\rho_X$. Samples from $\rho_X$ can then be generated by sampling from a standard Gaussian and integrating the dynamics from the initial to the final time.

We will use the flexible framework of \emph{stochastic interpolants} \cite{stochastic_interpolants} which unifies flow and diffusion based generative models. We briefly introduce this framework and then state our contributions.

\subsection{Spatially linear stochastic interpolant}

The stochastic interpolant framework allows one to couple arbitrary distributions, and even arbitrarily many of them \cite{multimarginal}, possibly in spatially non-linear ways and even using matrix coefficients \cite{multitasklearningstochasticinterpolants}. We will exclusively work with the simplest and most commonly encountered setting, namely
the ``spatially linear one-sided stochastic interpolant'' \citep[eq. 4.15]{stochastic_interpolants} which we denote
\emph{stochastic interpolant} or 
just \emph{interpolant}. Below we recall its definition and some core results.
\begin{definition}[stochastic interpolant]
    \label{def:interpolant}%
    Given independent random variables $Z, X \in \reals^d$ with $Z \sample \standardnormal, X \sample \rho_X$ and non-negative scalar functions $\alpha, \beta: \unitinterval \to \posreals$ satisfying the below criteria, the \emph{stochastic interpolant} or simply \emph{interpolant} is the stochastic process
    \[
        I_t = \alpha_t Z + \beta_t X, \quad t \in \unitinterval,
    \]
    with $\alpha, \beta \in C^1(\unitinterval)$ satisfying the boundary conditions $\alpha_0 = \beta_1 = 1, \alpha_1 = \beta_0 = 0$ and the monotonicity conditions $\dot{\alpha}_t < 0$ and $\dot{\beta}_t > 0$ for all $t \in (0, 1)$.\footnote{One can allow for the one-sided derivatives of $\alpha$ and $\beta$ to explode at either endpoint as long as the one-sided derivatives exist in the extended reals, see \citet{stochastic_interpolants} for a discussion on this. We require bounded derivatives for simplicity.}
\end{definition}

Note that $I_0 \sample \standardnormal$ and $I_1 \sample \rho_X$ for all valid choices of $\alpha, \beta$.

One of the primary objects of interest is the following.
\begin{definition}[schedule]
  The function pair $(\alpha, \beta)$ in \cref{def:interpolant} is the \emph{interpolation schedule} or simply \emph{schedule}.
\end{definition}
The schedule is a hyperparameter that determines how 
$\standardnormal$ and $\rho_X$ are bridged in time. A canonical choice is $\alpha_t = 1-t, \beta_t = t$; this is the choice taken in flow matching.

To proceed we define the following functions.
\begin{definition}{}
    \label{def:quantities}%
    For an interpolant as in \cref{def:interpolant} with density $\rho: \unitinterval \times \reals^d \rightarrow \interval[open]{0}{\infty}$ and $\eps: \unitinterval \rightarrow \posreals$ we define the following functions for $(t, x) \in \unitinterval \times \reals^d$:
    \begin{align*}
        \eta_Z (t, x) &\define \ex\bs{Z \given I_t = x}, \tag*{``noise predictor''}\\
        \eta_X (t, x) &\define \ex\bs{X \given I_t = x}, \tag*{``data predictor''}\\
        s(t, x) &\define \grad \log \rho(t, x), \tag*{``score''}\\
        b^\eps (t, x) &\define \dot{\alpha}_t \eta_Z (t, x) + \dot{\beta}_t \eta_X (t, x) + \eps_t s(t, x). \tag*{``drift''}
    \end{align*}
\end{definition}

The central idea behind the stochastic interpolant framework is to construct a family of \emph{stochastic differential equations} (SDEs) indexed by an arbitrary time-dependent additive diffusion scale $\eps: \interval{0}{1} \to \posreals$, whose solutions have the same marginal distributions (but different joint distributions) as $I_t$, as captured in the following theorem \citep[Corollary 18]{stochastic_interpolants}.
\begin{theorem}[SDE]
    \label{theorem:sde}%
    Let $\eps_t \in C^1(\interval{0}{1})$ be any non-negative scalar function. Then the solutions to the SDE
    \begin{equation}
        \label{eq:SDE}
    \odif{X^\eps_t} = b^\eps (t, X^\eps_t) \odif{t} + \sqrt{2 \eps_t} \odif{W_t},
    \end{equation}
    solved forward in time $t$ with $X_0^\eps \sample \standardnormal$ independent of $W$, satisfy $\law(X^\eps_t) = \law(I_t)$ for all $t \in \unitinterval$.
\end{theorem}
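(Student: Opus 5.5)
The plan is to show that the time-marginal density $\rho(t,\cdot)$ of $I_t$ solves the Fokker--Planck equation associated with \eqref{eq:SDE}, and then use uniqueness of solutions of that equation to identify $\law(X^\eps_t)$ with $\rho(t,\cdot)$.

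\textbf{Step 1: transport equation for the interpolant.} Fix a test function $\varphi \in C_c^\infty(\reals^d)$ and differentiate $\ex[\varphi(I_t)]$ in $t$. Since $\partial_t I_t = \dot\alpha_t Z + \dot\beta_t X$, the tower property gives
\[
\frac{\odif{}}{\odif{t}}\ex[\varphi(I_t)] = \ex\bs{\grad\varphi(I_t)\cdot(\dot\alpha_t\eta_Z(t,I_t)+\dot\beta_t\eta_X(t,I_t))}.
\]
Writing $b^0 = \dot\alpha_t\eta_Z + \dot\beta_t\eta_X$, this is the weak form of the continuity equation
\[
\partial_t\rho + \grad\cdot(b^0\rho) = 0 .
\]
For $t<1$ we have $\alpha_t>0$, so $I_t$ has a smooth positive density: it is a Gaussian convolution of the law of $\beta_t X$. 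This justifies the differentiation under the expectation, given mild moment assumptions on $\rho_X$.

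\textbf{Step 2: add the diffusion.} Use the identity $\eps_t\Delta\rho = \grad\cdot(\eps_t s\,\rho)$, which holds because $s\rho=\grad\rho$. This lets us rewrite the continuity equation as
\[
\partial_t\rho + \grad\cdot(b^\eps\rho) = \eps_t\Delta\rho .
\]
This is exactly the Fokker--Planck equation for \eqref{eq:SDE}, whose diffusion coefficient is $\sqrt{2\eps_t}$. The initial condition also matches, since $I_0 = Z$ has law $\standardnormal$.

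\textbf{Step 3: identify the laws.} By Itô's formula, $\law(X^\eps_t)$ is a weak solution of the same Fokker--Planck equation with the same initial datum. Uniqueness of weak solutions then gives $\law(X^\eps_t)=\law(I_t)$ on $[0,1)$, and continuity extends this to $t=1$. The uniqueness could alternatively be argued via the superposition principle or via well-posedness of the SDE.

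\textbf{Main obstacle.} The hardest step is this uniqueness/well-posedness claim. It needs $b^\eps$ to be locally Lipschitz with at most linear growth on $[0,1-\delta]\times\reals^d$. I would first verify this from the Gaussian-convolution structure: $\eta_Z$, $\eta_X$ and $s$ are smooth for $\alpha_t>0$, and Tweedie-type identities such as $s=-\eta_Z/\alpha_t$ give the needed growth bounds under a finite second-moment assumption on $\rho_X$. I would then handle the endpoint $t=1$, where $\alpha_1=0$ and $s$ may blow up, by a limiting argument as $t\uparrow 1$. Since the statement is cited as Corollary 18 of \citet{stochastic_interpolants}, these regularity conditions can be taken from there.
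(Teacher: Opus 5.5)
Your proposal is correct. The paper does not prove \cref{theorem:sde} itself; it imports it from \citet[Corollary 18]{stochastic_interpolants}, whose argument is the one you outline: the interpolant density satisfies the transport equation, this is rewritten as the forward Fokker--Planck equation using $\eps_t\Delta\rho=\grad\cdot(\eps_t s\rho)$, and the result is identified with the SDE marginals under a well-posedness hypothesis. The uniqueness step you flag as the main obstacle is exactly what this paper sidesteps: \cref{assum:first_assum} assumes a unique strong solution of \eqref{eq:SDE} and regularity of $\rho$ and of the functions in \cref{def:quantities} up to $t=1$.
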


With the above result we can pick any non-negative diffusion scale $\eps$, sample $X_0^\eps \sample \standardnormal$ and solve the SDE forward in time from $t=0$ to $t=1$ provided we know the drift $b^\eps$. Then $X_1^\eps \sample \rho_X$, as desired. In particular, setting $\eps \equiv 0$ gives a deterministic probability flow ODE.

Each function in \cref{def:quantities} can be characterized as the minimizer of a simple squared-loss objective, enabling unbiased Monte-Carlo estimation from data without simulating the SDE. One then uses the learned estimate of the dynamics to generate data, e.g.~one learns $\hat{b}^\eps \approx b^\eps$ from data and solves the SDE \eqref{eq:SDE} with $b^\eps$ replaced by $\hat{b}^\eps$ to sample $\hat{X} \sample \hat{\rho}_X \approx \rho_X$.

Throughout we assume the following, the validity of which depends on the properties of $\rho_X$, the smoothness of $(\alpha, \beta)$ and the choice of $\eps$ (see \citet{stochastic_interpolants} for details).
\begin{assumption}
    \label{assum:first_assum}
    The data density satisfies $\rho_X(x) > 0$ for all $x \in \reals^d$, and the density $\rho: \unitinterval \times \reals^d \rightarrow \interval[open]{0}{\infty}$ of the interpolant satisfies $\rho(\blank, x) \in C^1(\unitinterval)$ for all $x \in \reals^d$, $\rho(t, \blank) \in C^2 (\reals^d)$ for all $t \in \unitinterval$, and similarly for the functions in \cref{def:quantities}. Furthermore, the SDE \eqref{eq:SDE} has a unique strong solution for all initial conditions.
\end{assumption}

\section{Contributions}
\begin{enumerate}
    \item In \cref{theorem:path_transform_sde} we generalize existing results from the ODE to the SDE setting and show that we can uniquely convert any strong solution to the SDE \eqref{eq:SDE} with arbitrary schedule and diffusion scale into the corresponding strong solution to the SDE \eqref{eq:SDE} with an arbitrary different schedule and diffusion scale.
    \item In \cref{def:point_mass_schedule} and \cref{theorem:point_mass_generation} we extend the interpolation schedule to include the point mass boundary condition $\alpha_0 = 0$ and give sufficient conditions for the SDE to exist in this extended setting with normally distributed initial conditions that sit in the initial drift instead of in the initial position. We stress that this result even holds in the ODE case, i.e.~when $\eps \equiv 0$.
    \item Assuming that $X \sample \standardnormal$ we prove in \cref{prop:lazy_schedule_families} that the \emph{lazy schedule family} -- the family of schedules such that the dynamics become identically zero -- for the ODE satisfies the variance-preserving property $\alpha_t^2 + \beta_t^2 = 1$, while for the SDE with statistically optimal diffusion scale it satisfies $\alpha_t^2 + \beta_t^2 = \beta_t$. In particular, $\alpha_0 = 0$ in the latter case, which motivates our schedule extension to the point mass case.
    \item In \cref{prop:ode_opt_schedule_transform,prop:sde_opt_schedule_transform,alg:ode_sample,alg:sde_sample} we derive particularly simple and theoretically well-motivated formulas for reparameterizing space (but not time) in any pretrained flow matching model with linear schedule $\overline{\alpha}_t = 1-t, \overline{\beta}_t=t$ to yield a lazy ODE and statistically optimal SDE schedule, respectively.
    \item In \cref{sec:experiments} we conduct empirical experiments with a state-of-the-art flow matching model for natural image generation and show that using this reparameterization allows for fewer-step numerical sampling in the ODE case and especially so for the statistically optimal SDE, which is otherwise numerically undefined at $t=0$.
\end{enumerate}

\section{Related work}
The stochastic interpolant framework \cite{stochastic_interpolants} unifies existing continuous-time generative paradigms based on dynamical systems, namely flow matching \cite{lipman2022flowmatching} and diffusion models\footnote{\emph{Score-based generative models}, the continuous-time equivalents of \emph{denoising diffusion probabilistic models} \cite{NEURIPS2020_4c5bcfec}.} \cite{pmlr-v37-sohl-dickstein15,NEURIPS2019_3001ef25,song2021scorebased} by 
disentangling the two primary hyperparameters: the \emph{interpolation schedule} coupling the Gaussian and data distribution in time, and the \emph{stochasticity} of the stochastic process that samples this coupling. The optimal level of stochasticity has mostly been explored in the stochastic interpolant literature \cite{stochastic_interpolants,probabilistic_forecasting,sit_interpolants,lipschitz_schedules}, whereas more work exists on tuning the interpolation schedule with the aim of easing numerical integration, albeit often heuristically or empirically based \cite{nichol2021improved,NEURIPS2022_a98846e9,NEURIPS2020_92c3b916,accel_opt_steps}. Our work is primarily inspired by \citet{lipschitz_schedules}, but by considering the statistically optimal SDE instead of just the ODE we arrive at \emph{point mass schedules} for which little work exists \cite{stochastic_interpolants,probabilistic_forecasting}. Our work also resembles \citet{align_your_steps}, but in addition to time reparameterizations we also consider space reparameterizations, even extending to point mass schedules.

Changing the interpolation schedule post-training has recently been explored for ODE sampling \cite{liu2024letusflowtogether,principlesdiffusionmodels,lipschitz_schedules}, but to the best of our knowledge pathwise equivalence has never been established for SDE sampling before.

More sophisticated approaches for sampling from flow and diffusion models in fewer steps exist; some notable examples are flow rectification \cite{liu2023flow}, consistency models \cite{pmlr-v202-song23a,boffi2025flow} and (entropy regularized) optimal transport based approaches \cite{pmlr-v202-pooladian23a,NEURIPS2023_c428adf7,bortoli2021diffusion,plug_in_bridge}. These approaches are mostly supplementary to the choice of schedule that we explore.

\section{Intra-interpolant conversion formulas}
In this section we fix the schedule $(\alpha, \beta)$ and show how to convert between the functions in \cref{def:quantities}. The proofs can be found in \cref{sec:app_proofs}.

The following diffusion scale plays an important role in our work. We define it now and motivate it later.
\begin{definition}
    \label{def:eps_optimal}%
    Given an interpolation schedule $(\alpha, \beta)$ satisfying the criteria in \cref{def:interpolant} we denote by $\eps^*_t: \unitinterval \to \extposreals$ the scalar function
    \[
        \eps_t^* = \alpha_t^2 \frac{\dot{\beta}_t}{\beta_t} - \alpha_t \dot{\alpha}_t,
    \]
    where at $t=0$ the equation is interpreted as a right-limit in the extended reals.
\end{definition}

We start by showing that for a fixed interpolation schedule $(\alpha, \beta)$, any one of the functions from \cref{def:quantities} uniquely defines all remaining functions through affine transformations that simplify using the above definition for $\eps^*$. This result generalizes the well-known result relating the score to the noise predictor \citep[Theorem 8]{stochastic_interpolants} and \citet[Proposition 6.3.1]{principlesdiffusionmodels} for the ODE case $\eps \equiv 0$.

\begin{restatable}[intra-interpolant conversion formulas]{proposition}{PropIntraConversion}
    \label{prop:intra_conversion}%
    For a fixed interpolation schedule $(\alpha, \beta)$ and non-negative $\eps \in C^1(\interval{0}{1})$, each of the functions $\eta_Z, \eta_X, s, b^\eps$ uniquely defines all of the others on $\interval[open]{0}{1} \times \reals^d$. In particular, each function can be expressed in terms of the score as
    \begin{align*}
        \eta_Z (t, x) &= -\alpha_t s(t, x), \\
        \eta_X (t, x) &= \ps{x + \alpha_t^2 s(t, x)}/\beta_t, \\
        b^\eps (t, x) &= \ps{\eps_t^* + \eps_t} s(t, x) + (\dot{\beta}_t/\beta_t) x.
    \end{align*}
\end{restatable}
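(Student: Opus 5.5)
The plan is to derive everything from two identities: the trivial linear relation $x = \alpha_t \eta_Z(t,x) + \beta_t \eta_X(t,x)$, and a Tweedie-type formula linking the score to the noise predictor.

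\textbf{Step 1: the linear relation.} Taking conditional expectations of $I_t = \alpha_t Z + \beta_t X$ given $I_t = x$ yields
\[
x = \alpha_t \eta_Z(t,x) + \beta_t \eta_X(t,x).
\]

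\textbf{Step 2: Tweedie's formula.} For $t\in(0,1)$ we have $\alpha_t>0$, so conditional on $X$ the law of $I_t$ is $\normal(\beta_t X, \alpha_t^2 \id)$. Its density is $\rho(t,x) = \ex\bs{\phi_{\alpha_t}(x-\beta_t X)}$, where $\phi_\sigma$ is the centered Gaussian density with covariance $\sigma^2\id$. Differentiating under the expectation gives
\[
\grad \rho(t,x) = \ex\bs{-\alpha_t^{-2}(x-\beta_t X)\,\phi_{\alpha_t}(x-\beta_t X)}.
\]
Dividing by $\rho$ turns this into a conditional expectation by Bayes' rule, so
\[
s(t,x) = -\alpha_t^{-2}\ex\bs{x-\beta_t X \given I_t=x} = -\alpha_t^{-1}\eta_Z(t,x),
\]
because $x-\beta_t X=\alpha_t Z$ on $\{I_t=x\}$. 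Hence $\eta_Z = -\alpha_t s$. This is the one step requiring justification beyond algebra: the exchange of derivative and expectation. It follows from dominated convergence, since the Gaussian kernel and its gradient are bounded uniformly for $x$ in compacts once $\alpha_t>0$, together with $\rho>0$ from \cref{assum:first_assum}. Alternatively, one can simply cite \citep[Theorem 8]{stochastic_interpolants}.

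\textbf{Step 3: the remaining formulas.} Substituting Step 2 into Step 1 gives
\[
\eta_X = \frac{x - \alpha_t\eta_Z}{\beta_t} = \frac{x+\alpha_t^2 s}{\beta_t},
\]
which is valid since $\beta_t>0$ on $(0,1)$. Plugging both expressions into the definition of $b^\eps$ gives
\[
b^\eps = -\dot\alpha_t\alpha_t s + \frac{\dot\beta_t}{\beta_t}\bigl(x+\alpha_t^2 s\bigr) + \eps_t s.
\]
The coefficient of $s$ is $\alpha_t^2\dot\beta_t/\beta_t - \alpha_t\dot\alpha_t + \eps_t = \eps^*_t + \eps_t$ by \cref{def:eps_optimal}, which is the claimed formula.

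\textbf{Step 4: uniqueness.} Each map $s\mapsto\eta_Z$, $s\mapsto\eta_X$ and $s\mapsto b^\eps$ is affine in $s$ with scalar coefficients $-\alpha_t$, $\alpha_t^2/\beta_t$ and $\eps^*_t+\eps_t$ respectively. I need each coefficient to be nonzero on $(0,1)$:
\begin{itemize}
\item $\alpha_t>0$, so $-\alpha_t\neq 0$ and $\alpha_t^2/\beta_t>0$.
\item $\eps^*_t>0$ because $\dot\beta_t>0$ and $-\alpha_t\dot\alpha_t>0$, and $\eps_t\ge 0$, so $\eps^*_t+\eps_t>0$.
\end{itemize}
Each map is therefore invertible, so any one of the four functions determines $s$ and hence all the others. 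The main obstacle, though a mild one, is the positivity of $\eps^*+\eps$, which is what makes recovery from $b^\eps$ possible. This is exactly why the statement is restricted to the open interval: at the endpoints $\alpha$ or $\beta$ vanishes and some of these coefficients degenerate.
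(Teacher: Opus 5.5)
Your proposal is correct and takes essentially the same route as the paper: the linear identity $x = \alpha_t\eta_Z + \beta_t\eta_X$, then the Gaussian integration-by-parts identity $\eta_Z = -\alpha_t s$, then substitution into the definition of $b^\eps$. The only differences are that you derive the Tweedie/Stein identity yourself rather than citing Theorem 8 of \citet{stochastic_interpolants}, and that you explicitly check that the coefficients $-\alpha_t$, $\alpha_t^2/\beta_t$ and $\eps^*_t+\eps_t$ are nonzero on $(0,1)$, which the paper leaves implicit when it says the score can ``clearly'' be isolated.
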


The next theorem characterizes $\eps^*$ as the statistically optimal diffusion scale in a precise sense. This is essentially equation 13 in \citet{probabilistic_forecasting} but our proof is simpler.
\begin{restatable}{theorem}{TheoremEpsOptKLDiv}
    \label{theorem:eps_opt_kl_div}
    Assume we are given statistically learnt estimates $\hat{\eta}_Z, \hat{\eta}_X, \hat{s}$ or $\hat{b}^\eps$ of the true functions from \cref{def:quantities} with remaining estimates given by \cref{prop:intra_conversion}. Let $X^\eps_t$ solve the SDE \eqref{eq:SDE} and consider the plug-in SDE
    \[
        \odif{\hat{X}^\eps_t} = \hat{b}^\eps (t, \hat{X}^\eps_t) \odif{t} + \sqrt{2 \eps_t} \odif{W_t},
    \]
    solved forward in time $t$ with $\hat{X}_0^\eps \sample \standardnormal$ independent of $W$. Then the minimizer of the Kullback-Leibler divergence between the path measures of $X^\eps$ and $\hat{X}^\eps$ over the diffusion scale $\eps$ is
    \[
        \argmin_{\eps \geq 0, \eps \in C^1(\interval[open left]{0}{1})} \kldiv \ps{X^\eps, \hat{X}^\eps} = \eps^*,
    \]
    and the minimum is
    \[
        \kldiv\ps{X^{\eps^*}, \hat{X}^{\eps^*}} = \int_0^1 \eps^*_t \ex\bs{\norm{s (t, I_t) - \hat{s} (t, I_t)}^2} \odif{t}.
    \]
\end{restatable}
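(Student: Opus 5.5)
Since both SDEs share the diffusion coefficient $\sqrt{2\eps_t}$ and the same initial law, we compute the Kullback--Leibler divergence between the two path measures with Girsanov's theorem. Under \cref{assum:first_assum}, together with the usual integrability (Novikov-type) conditions on the drift difference, this gives
\[
  \kldiv\ps{X^\eps, \hat{X}^\eps} = \frac{1}{4}\int_0^1 \frac{1}{\eps_t}\,\ex\bs{\norm{b^\eps(t, X^\eps_t) - \hat{b}^\eps(t, X^\eps_t)}^2} \odif{t}.
\]
The initial laws coincide, so they contribute nothing. By \cref{theorem:sde}, $\law(X^\eps_t) = \law(I_t)$, so the expectation can be taken over $I_t$. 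This turns the path-space quantity into a time integral of a pointwise-in-time expectation, which is the form we minimize.

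Next we rewrite the drift error through the score using \cref{prop:intra_conversion}. The learnt estimates are tied together by the same affine formulas, so $\hat{b}^\eps = (\eps^*_t + \eps_t)\hat{s} + (\dot\beta_t/\beta_t)x$, and the term linear in $x$ cancels:
\[
  b^\eps - \hat{b}^\eps = (\eps^*_t + \eps_t)(s - \hat{s}).
\]
Writing $\delta_t = \ex\bs{\norm{s(t, I_t) - \hat{s}(t, I_t)}^2} \geq 0$, which does not depend on $\eps$, we get
\[
  \kldiv = \int_0^1 \frac{(\eps^*_t + \eps_t)^2}{4\eps_t}\,\delta_t \odif{t}.
\]

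The minimization is then pointwise in $t$. For $a = \eps^*_t \geq 0$, the function $f(e) = (a+e)^2/(4e)$ on $e > 0$ satisfies
\[
  f(e) - a = \frac{(a-e)^2}{4e} \geq 0,
\]
with equality if and only if $e = a$. Hence the integrand is bounded below by $\eps^*_t\,\delta_t$, with equality when $\eps_t = \eps^*_t$. Integrating gives the stated minimum $\int_0^1 \eps^*_t \delta_t \odif{t}$. Any other admissible $\eps$ differs from $\eps^*$ on a set of positive measure by continuity, so it gives a strictly larger value whenever $\delta_t > 0$ there. When $\delta_t = 0$ the minimizer is not unique, and we read the $\argmin$ in this sense. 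We also need $\eps^* \geq 0$, which holds because $\dot\beta_t > 0$ and $\dot\alpha_t < 0$. In addition, $\eps^*$ is $C^1$ on $\interval[open left]{0}{1}$, which is why the admissible class excludes $t = 0$, where $\eps^*_0 = \infty$ since $\beta_0 = 0$.

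The main obstacle is justifying the Girsanov formula rigorously. It requires $\eps_t > 0$; if $\eps$ vanishes on a set of positive measure, the path measures are mutually singular unless the drifts agree there, so the KL divergence is $+\infty$ and such $\eps$ can be discarded. It also requires integrability of the drift difference, in particular near $t = 0$, where $\eps^*_t \to \infty$. I would state these as standing assumptions in the spirit of \cref{assum:first_assum} rather than derive them, and handle the endpoint by a limiting argument on $\interval{\tau}{1}$ with $\tau \downarrow 0$, using monotone convergence for the nonnegative integrand.
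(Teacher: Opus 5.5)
Your proposal follows the paper's proof step for step: Girsanov's formula for the path-space KL, replacing $\law(X^\eps_t)$ by $\law(I_t)$, using \cref{prop:intra_conversion} to get $b^\eps - \hat{b}^\eps = (\eps^*_t + \eps_t)(s - \hat{s})$, and minimizing $(\eps^*_t+\eps_t)^2/\eps_t$ pointwise in $t$. The only differences are these. You complete the square via $f(e) - a = (a-e)^2/(4e)$ where the paper differentiates, which gives global minimality and uniqueness directly. You also spell out caveats the paper leaves implicit: non-uniqueness where $\delta_t = 0$, vanishing $\eps$, and the endpoint $t=0$. Note that $\eps^* \in C^1(\interval[open left]{0}{1})$ actually requires $\alpha,\beta \in C^2$, which the paper's statement also tacitly assumes.
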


Unfortunately, the statistically optimal diffusion scale $\eps^*$ is ill-behaved numerically in the sense that for any schedule $(\alpha, \beta)$ satisfying \cref{def:interpolant} it is infinite and non-integrable at $t=0$ (\cref{prop:eps_opt}). Nonetheless, the \emph{statistically optimal SDE}, i.e., the SDE \eqref{eq:SDE} with $\eps = \eps^*$, is well-defined on $t \in \interval[open left]{0}{1}$ and behaves like a (non-continuous) white-noise process as $t \to 0_+$. See \cref{prop:eps_opt_gives_OU_rev_process} which proves an equivalence between $\eps^*$, the Ornstein-Uhlenbeck process, and diffusion models.

\section{Inter-interpolant conversion formulas}
We now show how to convert the functions from \cref{def:quantities} from one schedule to another and how to convert between solutions to the SDEs. The following \emph{linear schedule} used in flow matching simplifies the conversion formulas.

\begin{definition}[linear interpolant]
    \label{def:linear_interpolant}%
    Let $\overline{\alpha}_t \define 1 - t, \overline{\beta}_t \define t$. Then we denote by $\overline{I}_t \define \overline{\alpha}_t Z + \overline{\beta}_t X$ the \emph{linear interpolant}. We denote by $\overline{\eta}_Z, \overline{\eta}_X, \overline{s}, \overline{b}^{\overline{\eps}}$ its associated functions as in \cref{def:quantities} and by $\overline{X}^{\overline{\eps}}_t$ its associated SDE \eqref{eq:SDE}.
\end{definition}

\begin{restatable}{proposition}{PropLinearEpsOptimal}
    \label{prop:linear_eps_optimal}%
    For the linear interpolant $\overline{I}_t$ from \cref{def:linear_interpolant} it holds that $\overline{\eps}^*_t = (1 - t)/t$. Therefore
    \[
        \int_s^t 2 \overline{\eps}^*_u \odif{u} = 2\ps{\log(t) - t - \log(s) + s}
    \]
    for $0 < s \leq t \leq 1$ and the integral is infinite for $s=0$.
\end{restatable}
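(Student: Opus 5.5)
The plan is a direct computation from \cref{def:eps_optimal}. With the linear schedule of \cref{def:linear_interpolant} we have $\overline{\alpha}_t = 1-t$, $\dot{\overline{\alpha}}_t = -1$, $\overline{\beta}_t = t$ and $\dot{\overline{\beta}}_t = 1$. Substituting these into $\eps^*_t = \alpha_t^2 \dot{\beta}_t/\beta_t - \alpha_t\dot{\alpha}_t$ gives
\[
  \overline{\eps}^*_t = \frac{(1-t)^2}{t} + (1-t) = \frac{(1-t)\bigl((1-t) + t\bigr)}{t} = \frac{1-t}{t}
\]
for $t \in \interval[open left]{0}{1}$. At $t=0$ we use the right-limit convention from \cref{def:eps_optimal}, which gives the value $+\infty$ in the extended reals.

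For the integral, take $0 < s \le t \le 1$. The integrand $2(1-u)/u = 2/u - 2$ is continuous on $\interval{s}{t}$, so by the fundamental theorem of calculus
\[
  \int_s^t 2\overline{\eps}^*_u \odif{u} = \bs*{2\log u - 2u}_s^t = 2\ps{\log(t) - t - \log(s) + s}.
\]

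For $s=0$ the integrand is non-negative on $\interval[open left]{0}{t}$ (this uses $t \le 1$). By monotone convergence the integral therefore equals the limit of the above expression as $s \to 0_+$. As $s \to 0_+$ we have $-\log s \to +\infty$ while $t$ and $s$ stay bounded, so the limit is $+\infty$. This holds for $t>0$; for $t = s = 0$ the integral is trivially over a degenerate interval. The only point needing any care is this last step: one should note non-negativity of the integrand, so that the improper integral is well defined in $\extposreals$, rather than merely saying ``plug in $s=0$''. There is no real obstacle beyond that.
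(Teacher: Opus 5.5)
Your proof is correct and takes the same route as the paper. The paper substitutes $\overline{\alpha}_t = 1-t$, $\overline{\beta}_t = t$ into \cref{def:eps_optimal} to obtain $(1-t)/t$ and then says the rest ``easily follows''; your explicit antiderivative and your monotone-convergence argument for $s=0$ fill in that omitted step. That argument rests on non-negativity of the integrand on $(0,t]$ for $0 < t \le 1$.
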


\begin{restatable}{definition}{DefUAndC}
    \label{def:u_and_c}%
    For $t \in \unitinterval$ we define the functions
    \[
        c_t \define \alpha_t + \beta_t, \quad u_t \define \beta_t / c_t.
    \]
\end{restatable}
The interpretation is that $c$ is a (generally non-invertible) space-change while $u$ is a proper time change. Indeed, the monotonicity assumptions on the schedule $(\alpha, \beta)$ in \cref{def:interpolant} imply that $u \in C^1(\unitinterval)$ and is strictly increasing on $\interval[open]{0}{1}$ so that its inverse $t_u \define u^{-1}(u)$ exists.

The following proposition can be seen as a supplement to \cref{prop:intra_conversion}. It rephrases and generalizes proposition 3.1 from \citet{lipschitz_schedules} and proposition 6.3.3 from \citet{principlesdiffusionmodels} which only consider the ODE case ($\eps \equiv 0$).
\begin{restatable}[inter-conversion formulas]{proposition}{PropInterConversion}
    \label{prop:inter_conversion}%
    For a schedule $(\alpha, \beta)$ and non-negative $\eps \in C^1(\interval{0}{1})$, the functions $\eta_Z, \eta_X, s, b^\eps$ are uniquely defined from any of the functions $\overline{\eta}_Z, \overline{\eta}_X, \overline{s}, \overline{b}^{\overline{\eps}}$ associated with the linear interpolant. In particular,
    \begin{align*}
    \eta_Z (t, x) &= \overline{\eta}_Z \ps{u_t, x/c_t}, \quad \eta_X (t, x) = \overline{\eta}_X \ps{u_t, x/c_t}, \\
    s (t, x) &= (1/c_t) \overline{s} \ps{u_t, x/c_t}, \\
    b^\eps (t, x) &= \frac{\dot{c}_t}{c_t} x + c_t \dot{u}_t \overline{b}^{\overline{\eps}} \ps{u_t, \frac{x}{c_t}} \quad \text{for} \quad \overline{\eps}_{u_t} = \frac{\alpha_t \eps_t}{\beta_t \eps^*_t}.
    \end{align*}
\end{restatable}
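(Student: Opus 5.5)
The starting point is the identity $I_t = c_t \overline{I}_{u_t}$ for $t\in\interval[open]{0}{1}$. Indeed,
\[
c_t \overline{I}_{u_t} = c_t\ps{(1-u_t)Z + u_t X} = \alpha_t Z + \beta_t X,
\]
because $c_t(1-u_t) = c_t - \beta_t = \alpha_t$ and $c_t u_t = \beta_t$. Here $c_t>0$, since $\alpha_t,\beta_t\ge 0$ and they are not both zero.

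The map $x\mapsto x/c_t$ is a bijection of $\reals^d$, so $\{I_t = x\}=\{\overline{I}_{u_t} = x/c_t\}$. Conditional expectations of $Z$ and $X$ therefore coincide, which gives $\eta_Z(t,x)=\overline{\eta}_Z(u_t,x/c_t)$ and $\eta_X(t,x)=\overline{\eta}_X(u_t,x/c_t)$. For the score, the change of variables gives $\rho(t,x) = c_t^{-d}\,\overline{\rho}(u_t,x/c_t)$. Taking $\grad\log$ yields $s(t,x) = (1/c_t)\overline{s}(u_t,x/c_t)$. Uniqueness from any one barred function then follows from \cref{prop:intra_conversion} applied to the linear interpolant: any barred function determines $\overline{s}$, which determines $s$, which by \cref{prop:intra_conversion} for $(\alpha,\beta)$ determines everything else.

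For the drift, I would avoid differentiating $\eta$'s directly and work through the score form of \cref{prop:intra_conversion}. We have
\[
b^\eps(t,x) = (\eps^*_t+\eps_t)s(t,x) + (\dot\beta_t/\beta_t)x,
\]
and, writing $y=x/c_t$ and $u=u_t$,
\[
\overline{b}^{\overline{\eps}}(u,y) = \ps{\overline{\eps}^*_u+\overline{\eps}_u}\overline{s}(u,y) + y/u, \qquad \overline{\eps}^*_u=(1-u)/u
\]
(the latter from \cref{prop:linear_eps_optimal}). Substituting $s=\overline{s}/c_t$ into $\frac{\dot c_t}{c_t}x + c_t\dot u_t\overline{b}^{\overline{\eps}}(u_t,y)$, we must match two coefficients.

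The coefficient of $x$ requires $\dot c_t/c_t + \dot u_t/u_t = \dot\beta_t/\beta_t$, which holds because $\log \beta_t = \log c_t + \log u_t$. The coefficient of $\overline{s}$ requires $\dot u_t\ps{\overline{\eps}^*_{u_t}+\overline{\eps}_{u_t}} c_t = (\eps^*_t+\eps_t)/c_t$. This splits into two scalar identities:
\[
c_t^2\dot u_t\,\frac{1-u_t}{u_t} = \eps^*_t \quad\text{and}\quad c_t^2\dot u_t\,\overline{\eps}_{u_t} = \eps_t.
\]
For the first, compute $\dot u_t = (\dot\beta_t\alpha_t - \beta_t\dot\alpha_t)/c_t^2$, so that
\[
c_t^2\dot u_t\,\frac{\alpha_t}{\beta_t} = \frac{\alpha_t^2\dot\beta_t}{\beta_t} - \alpha_t\dot\alpha_t = \eps^*_t,
\]
using $(1-u_t)/u_t = \alpha_t/\beta_t$. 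The second identity then reads $\overline{\eps}_{u_t} = \eps_t/(c_t^2\dot u_t) = \alpha_t\eps_t/(\beta_t\eps^*_t)$, which is precisely the stated choice.

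The main obstacle is this bookkeeping: recognizing $\eps^*_t = c_t^2\dot u_t\,\alpha_t/\beta_t$. One also needs to restrict to $t\in\interval[open]{0}{1}$, where $\beta_t,\alpha_t>0$ and $\dot u_t>0$, so that all divisions (in particular by $\eps^*_t$) are legitimate.
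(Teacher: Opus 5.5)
Your proof is correct and follows essentially the same route as the paper: the conditional-expectation identities come from $I_t = c_t\overline{I}_{u_t}$, and the drift formula comes from writing both drifts in score form via \cref{prop:intra_conversion}, using the identities $\dot u_t/u_t = \dot\beta_t/\beta_t - \dot c_t/c_t$ and $\eps^*_t = c_t^2\dot u_t\,\alpha_t/\beta_t$ (the content of \cref{lemma:u_and_c}); you verify the claimed right-hand side by matching coefficients, where the paper solves for $\overline{s}$ and substitutes. The only real deviation is that you get the score relation from the change-of-variables formula $\rho(t,x) = c_t^{-d}\overline{\rho}(u_t,x/c_t)$, whereas the paper combines the $\eta_Z$ relation with $s = -\eta_Z/\alpha_t$. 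This makes your argument slightly more self-contained but is otherwise equivalent.
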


We finally prove our strongest conversion result, which to the best of our knowledge is new for $\eps \not \equiv 0$.
\begin{restatable}[pathwise conversion]{theorem}{TheoremPathTransformSDE}
    \label{theorem:path_transform_sde}
    Let $(\alpha, \beta)$ be a schedule satisfying the requirements in \cref{def:interpolant}, let $z \in \reals^d$ and let $(W_t)_{t \in \unitinterval}$ be a Wiener process realization. Let $X_t^\eps$ solve the SDE \eqref{eq:SDE} in physical time $t$ from $X_0^\eps = z$ driven by $W$ with diffusion scale $\eps$. Let $\overline{X}_{u}^{\overline{\eps}}$ solve the SDE \eqref{eq:SDE} with linear schedule in reparameterized time $u = u_t$ from $\overline{X}_{0}^{\overline{\eps}} = z$ driven by
    \[
        \overline{W}_u = \int_0^{t_u} \sqrt{\dot{u}_s} \odif{W_s}, \quad u \in \unitinterval
    \]
    with diffusion scale $\overline{\eps}_{u} = (\alpha_{t_u} \eps_{t_u})/(\beta_{t_u} \eps^*_{t_u})$.
    Then the solution $X^\eps_t$ can be obtained from the solution $\overline{X}^{\overline{\eps}}_{u_t}$ associated with the linear interpolant via the formula
    \[
        X^\eps_t = c_t \overline{X}^{\overline{\eps}}_{u_t}, \quad t \in \unitinterval.
    \]
    In particular, $X^\eps_1 = \overline{X}^{\overline{\eps}}_1$ (since $c_1 = 1$).
\end{restatable}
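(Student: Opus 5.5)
Our approach is to show that the candidate process $Y_t \define c_t \overline{X}^{\overline{\eps}}_{u_t}$ satisfies the SDE \eqref{eq:SDE} for the schedule $(\alpha, \beta)$, driven by the original Wiener process $W$, with initial value $z$ and diffusion scale $\eps$. Uniqueness of strong solutions (\cref{assum:first_assum}) then gives $Y = X^\eps$ pathwise. First we check the initial condition: $u_0 = 0$ and $c_0 = \alpha_0 + \beta_0 = 1$, so $Y_0 = \overline{X}^{\overline{\eps}}_0 = z$. We also note that $\overline{W}$ is a legitimate Wiener process in the time $u$. It is a continuous martingale with quadratic variation $\int_0^{t_u} \dot{u}_s \odif{s} = u_{t_u} - u_0 = u$, so Lévy's characterization applies. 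Hence $\overline{X}^{\overline{\eps}}$ is well defined as a strong solution driven by $\overline{W}$.

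Next, we time-change the linear SDE. Set $V_t \define \overline{X}^{\overline{\eps}}_{u_t}$. Since $u$ is $C^1$ and strictly increasing, the standard time-change rule for Itô integrals gives
\begin{align*}
  \odif{V_t} &= \dot{u}_t\, \overline{b}^{\overline{\eps}}(u_t, V_t) \odif{t} + \sqrt{2 \overline{\eps}_{u_t}} \odif{\overline{W}_{u_t}} \\
  &= \dot{u}_t\, \overline{b}^{\overline{\eps}}(u_t, V_t) \odif{t} + \sqrt{2 \overline{\eps}_{u_t} \dot{u}_t} \odif{W_t}.
\end{align*}
In the second line we used $\odif{\overline{W}_{u_t}} = \sqrt{\dot{u}_t} \odif{W_t}$, which follows directly from the definition of $\overline{W}$. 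Then, since $c_t$ is deterministic and $C^1$, the product rule (Itô with no cross-variation term) gives
\[
  \odif{Y_t} = \dot{c}_t V_t \odif{t} + c_t \odif{V_t} = \Bigl(\frac{\dot{c}_t}{c_t} Y_t + c_t \dot{u}_t\, \overline{b}^{\overline{\eps}}\bigl(u_t, Y_t/c_t\bigr)\Bigr) \odif{t} + c_t \sqrt{2 \overline{\eps}_{u_t} \dot{u}_t} \odif{W_t}.
\]
By \cref{prop:inter_conversion}, with exactly the stated relation $\overline{\eps}_{u_t} = \alpha_t \eps_t / (\beta_t \eps^*_t)$, the drift equals $b^\eps(t, Y_t)$.

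It remains to match the diffusion coefficient, that is, to show $c_t^2 \dot{u}_t \overline{\eps}_{u_t} = \eps_t$. We compute $\dot{u}_t = (\dot{\beta}_t \alpha_t - \beta_t \dot{\alpha}_t)/c_t^2$, so $c_t^2 \dot{u}_t = \alpha_t \dot{\beta}_t - \beta_t \dot{\alpha}_t$. From \cref{def:eps_optimal}, $\eps^*_t = \alpha_t(\alpha_t \dot{\beta}_t - \beta_t \dot{\alpha}_t)/\beta_t$. Hence
\[
  c_t^2 \dot{u}_t \, \frac{\alpha_t \eps_t}{\beta_t \eps^*_t} = (\alpha_t \dot{\beta}_t - \beta_t \dot{\alpha}_t) \cdot \frac{\alpha_t \eps_t}{\alpha_t(\alpha_t \dot{\beta}_t - \beta_t \dot{\alpha}_t)} = \eps_t,
\]
as required. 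Finally, $c_1 = 1$ and $u_1 = 1$ give $X^\eps_1 = \overline{X}^{\overline{\eps}}_1$.

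We expect the main obstacle to be the endpoints. At $t = 0$, $\eps^*$ is infinite, so $\overline{\eps}_{u}$ tends to $0$ as $u \to 0$. At $t = 1$, $\alpha_1 = 0$ makes the formula for $\overline{\eps}$ of the form $0/0$. In addition, $\dot{u}_t$ may vanish at the endpoints, and the time-change rule must be justified there. The plan is to carry out the argument on $[\delta, 1-\delta]$, identify $Y = X^\eps$ there by uniqueness, and then pass to the limit $\delta \to 0$ using continuity of both paths and of the interpreted coefficient limits. This argument also supplies the "unique" part of the statement.
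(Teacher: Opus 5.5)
Your proposal is correct and follows the paper's proof. It uses the same ingredients: L\'evy's characterization for $\overline{W}$, the It\^{o} product rule combined with $\odif{\overline{W}_{u_t}} = \sqrt{\dot{u}_t}\odif{W_t}$, \cref{prop:inter_conversion} for the drift, the identity $c_t^2 \dot{u}_t \overline{\eps}_{u_t} = \eps_t$ (the paper gets it from \cref{eq:dot_u_over_u}, you by direct computation), and uniqueness of strong solutions.

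Your endpoint remark contains two slips, neither affecting the main argument.
\begin{enumerate}
    \item $\overline{\eps}_u$ does not tend to $0$ as $u \to 0$. Since $\beta_t \eps^*_t \to \dot{\beta}_0$ as $t \to 0_+$, $\overline{\eps}_u$ tends to $\eps_0/\dot{\beta}_0$. Your own formula $\overline{\eps}_{u_t} = \eps_t/(c_t^2 \dot{u}_t)$ shows that the only real endpoint issue is $\dot{u}$ vanishing.
    \item Localizing to $[\delta, 1-\delta]$ and invoking uniqueness there would already require $Y_\delta = X^\eps_\delta$, which is what you are trying to prove. It is simpler to verify the integral equation from $t = 0$ directly, since the endpoints are null sets for the $\odif{t}$-integral.
\end{enumerate}
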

With this result, we can sample paths from the interpolant with any schedule and any diffusion scale provided we can sample paths from the linear interpolant (simply apply \cref{theorem:path_transform_sde} twice to convert from an arbitrary schedule instead of the linear schedule). This is illustrated in the two leftmost plots in \cref{fig:typst_interpolant_paths} for the ODE and statistically optimal SDE, respectively.
\begin{remark}
    We shall see that the identity time-transform $u_t = t$ is of particular interest. Then  \cref{theorem:path_transform_sde} simplifies significantly, in particular, $\overline{W} = W$.
\end{remark}

\section{Point mass schedule}
We use the preceding results to extend \cref{def:interpolant} to transport to the target distribution from a point mass instead of a (non-degenerate) Gaussian. We delay the motivation to the next section.
\begin{definition}[point mass schedule and interpolant]
    \label{def:point_mass_schedule}%
    A function pair $\alpha, \beta \in C^1(\unitinterval)$ satisfying
    \begin{enumerate}
        \item $\alpha_0 = \alpha_1 = \beta_0 = 0, \beta_1 = 1$,
        \item $\alpha_t > 0, \dot{\beta}_t > 0$ for all $t \in (0, 1)$,
        \item $\beta_t = o(\alpha_t) \text{ as } t \to 0_+$ (equivalently $u_0 = 0$),
        \item $\alpha_t^2 = O(\beta_t) \text{ as } t \to 0_+$,
        \item $\dot{u}_0 \define \lim_{t \rightarrow 0_+} \dot{u}_t < \infty$,
        \item $\odv{}{t} (\beta_t/\alpha_t) > 0$ for all $t \in \interval[open]{0}{1}$ (equivalently $\dot{u}_t > 0$),
    \end{enumerate}
    we call a \emph{point mass schedule}. The associated interpolant we call a \emph{point mass interpolant}.
\end{definition}

\begin{remark}
The condition $\alpha_0 = \beta_0 = 0$ is sufficient and necessary for the density of the interpolant to collapse to a point mass at $t=0$. The condition $u_0 = 0$ is sufficient and necessary for the initial drift to be analytically samplable; if otherwise $u_0 > 0$ we would have to sample the non-trivial law of $\overline{I}_{u_0} \neq \overline{I}_0$ to sample the initial drift, which is impossible in practice. The condition $\alpha_t^2 = O(\beta_t) \text{ as } t \to 0_+$ is technical and can be relaxed; see the remarks at the end of the proof of \cref{theorem:point_mass_generation}. The assumption $\dot{u}_0 < \infty$ gives $u \in C^1(\unitinterval)$, which is not strictly necessary but we impose it for simplicity. The last assumption is equivalent to $\dot{u}_t > 0$ for all $t \in \interval[open]{0}{1}$ so that $t \mapsto u_t$ is a proper time change.
\end{remark}

Remarkably, the initial drift of the point mass interpolant can be defined pointwise from the initial condition of the linear interpolant, even in the ODE case, as we show below.

\begin{restatable}{theorem}{TheoremPointMassGeneration}
    \label{theorem:point_mass_generation}
    Let $(\alpha, \beta)$ be a point mass schedule as in \cref{def:point_mass_schedule}. Then given any non-negative bounded diffusion scale $\eps \in C^1(\unitinterval)$, the SDE \eqref{eq:SDE} is well-defined in the classical sense for $t \in \interval[open left]{0}{1}$ with initial position $X_0^\eps = 0$, and the initial drift is well-defined in distribution. It holds that $\law\ps{X_t^\eps} = \law\ps{I_t}$ for all $t \in \unitinterval$ and \cref{theorem:path_transform_sde} still applies: in particular, the point mass SDE solution and initial drift relates to the solution and initial condition of the linear interpolant as $X_t^\eps = c_t \overline{X}^{\overline{\eps}}_{u_t}$ for $t \in \unitinterval$ and $b^\eps(0, X_0^\eps) = \ps{\dot{c}_0 - \lim_{t \to 0_+} (\eps_t / \alpha_t)} \overline{X}_0^{\overline{\eps}}$.
\end{restatable}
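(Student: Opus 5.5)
The plan is to \emph{construct} the point mass solution from the linear one rather than analyse the degenerate SDE at $t=0$ directly. Fix $z \sim \standardnormal$ and a Wiener realization $W$. Define $\overline{W}$ and the diffusion scale $\overline{\eps}_u = \alpha_{t_u}\eps_{t_u}/(\beta_{t_u}\eps^*_{t_u})$ exactly as in \cref{theorem:path_transform_sde}. Let $\overline{X}^{\overline{\eps}}$ solve the linear SDE from $\overline{X}_0^{\overline{\eps}} = z$, and set
\[
X_t \define c_t \overline{X}^{\overline{\eps}}_{u_t}.
\]
Conditions 3, 5 and 6 of \cref{def:point_mass_schedule} give $u \in C^1(\unitinterval)$, $u_0 = 0$, $u_1 = 1$ and $\dot u > 0$ on $(0,1)$. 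Hence $t \mapsto u_t$ is a genuine time change on all of $\unitinterval$, and $\overline{W}$ is a well-defined continuous martingale with quadratic variation $u$.

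\textbf{Step 1: $X$ solves \eqref{eq:SDE} on $(0,1]$.} On any $[\delta,1]$ with $\delta>0$ we have $c_t = \alpha_t+\beta_t>0$, so the schedule is non-degenerate there. I would repeat the Itô computation behind \cref{theorem:path_transform_sde}: differentiate $c_t\overline{X}_{u_t}$, then use the drift formula of \cref{prop:inter_conversion}, namely $b^\eps(t,x) = (\dot c_t/c_t)x + c_t\dot u_t\,\overline{b}^{\overline{\eps}}(u_t,x/c_t)$. For the noise, the useful identity is
\[
\eps^*_t = \alpha_t c_t^2\dot u_t/\beta_t,
\]
which follows from $\dot u_t = (\dot\beta_t\alpha_t-\beta_t\dot\alpha_t)/c_t^2$. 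It gives $c_t^2\,\dot u_t\,\overline{\eps}_{u_t} = \eps_t$, so the diffusion term $c_t\sqrt{2\overline{\eps}_{u_t}}\,\odif{\overline{W}_{u_t}} = c_t\sqrt{2\overline{\eps}_{u_t}\dot u_t}\,\odif{W_t}$ equals $\sqrt{2\eps_t}\,\odif{W_t}$. Uniqueness of strong solutions (\cref{assum:first_assum}) on each $[\delta,1]$ then identifies $X$ as the classical solution on $(0,1]$.

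\textbf{Step 2: boundary values and marginals.} Since $c_0 = \alpha_0+\beta_0 = 0$ and $\overline{X}$ is continuous, $X_0 = 0$. For the marginals, the linear SDE has $\law(\overline{X}_u) = \law(\overline{I}_u)$. Therefore $\law(X_t) = \law\ps{c_t(1-u_t)Z + c_tu_tX} = \law(\alpha_tZ+\beta_tX) = \law(I_t)$.

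\textbf{Step 3: the initial drift.} Split $\overline{b}^{\overline{\eps}} = \overline{b}^0 + \overline{\eps}\,\overline{s}$. The same identity gives $c_t\dot u_t\overline{\eps}_{u_t} = \eps_t/c_t$, so
\[
b^\eps(t,X_t) = \dot c_t\,\overline{X}_{u_t} + c_t\dot u_t\,\overline{b}^0(u_t,\overline{X}_{u_t}) + (\eps_t/c_t)\,\overline{s}(u_t,\overline{X}_{u_t}).
\]
As $t\to0_+$ the three terms behave as follows:
\begin{enumerate}
\item $\overline{X}_{u_t}\to z$, so the first term tends to $\dot c_0 z$.
\item $c_t\to0$, $\dot u_0<\infty$, and $\overline{b}^0(0,x) = \ex[X]-x$ is continuous, so the second term vanishes.
\item $\overline{s}(0,x) = -x$ and $c_t\sim\alpha_t$ (because $\beta_t = o(\alpha_t)$), so the third term tends to $-\lim(\eps_t/\alpha_t)\,z$.
\end{enumerate}
This gives the claimed formula $b^\eps(0,X_0^\eps) = \ps{\dot c_0 - \lim_{t\to0_+}(\eps_t/\alpha_t)}\overline{X}_0^{\overline{\eps}}$, which is Gaussian. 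This makes the drift well-defined in distribution, even though the value depends on $z$ rather than on the position $X_0 = 0$.

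\textbf{Main obstacle.} The hard part is to show that the linear SDE with scale $\overline{\eps}$ is well-posed near $u=0$, and that $\lim \eps_t/\alpha_t$ exists. The scale $\overline{\eps}_u$ may blow up at $u=0$, since the effective noise intensity is $\overline{\eps}_u\dot u = \eps_t/c_t^2$. I would compare $\overline{\eps}$ with $\overline{\eps}^*_u = (1-u)/u$, using $\overline{\eps}_u/\overline{\eps}^*_u = \eps_t/(\eps^*_t(1-u_t))$. I would then use boundedness of $\eps$ together with $\alpha_t^2 = O(\beta_t)$ to control this ratio. The point is to reduce to the statistically optimal linear SDE, which behaves like a reversed OU process near $0$ by \cref{prop:eps_opt_gives_OU_rev_process} and hence still has a continuous solution started at $z$. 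I expect condition 4 to be used precisely in this comparison. If the comparison fails, I would fall back on an explicit Gaussian computation near $u=0$, where $\overline{s}(u,x)\approx -x$.
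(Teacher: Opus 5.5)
There is a genuine gap in Step~3, and the ``main obstacle'' you flag is not a technicality but a real obstruction. You take the limit of $b^\eps(t,X_t)$ along the path, which needs $\overline{X}^{\overline{\eps}}_{u_t}\to z$.

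By your own identity $c_t^2\dot u_t\overline{\eps}_{u_t}=\eps_t$, the linear noise has quadratic variation $\int_0^{u}2\overline{\eps}_v\,\odif{v}=\int_0^{t_u}2\eps_t/c_t^2\,\odif{t}$. Write $L$ for $\lim_{t\to0_+}\eps_t/\alpha_t$. If $L>0$, then near $0$ we have $\eps_t/c_t^2=(\eps_t/\alpha_t)(1-u_t)/c_t\geq L/(2Ct)$, since $c\in C^1$ with $c_0=0$. So this integral diverges.

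The linear SDE then cannot be started from $\overline{X}_0=z$ as an It\^o equation at all. Near $u=0$ one has $\overline{s}(u,x)\approx-x$, so the linear SDE is an OU process run on an infinite clock. It forgets $z$ and has no limit as $u\to0_+$.

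Your proposed rescue lands exactly on this singular object. For $\eps=\eps^*$ on any schedule, $\overline{\eps}_{u_t}=\alpha_t/\beta_t=\overline{\eps}^*_{u_t}$. By \cref{prop:linear_eps_optimal}, $\int_0^{u}2\overline{\eps}^*_v\,\odif{v}=\infty$ for every $u>0$. This is the process the paper describes as white-noise-like at $0$, and it has no continuous solution started at $z$.

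Condition~4 does not help either. It only gives $\eps^*_t=\alpha_tc_t^2\dot u_t/\beta_t=O(\alpha_t)$, so when $L>0$ the ratio $\eps_t/\eps^*_t$ is bounded away from zero. In other words, $\overline{\eps}$ is at least as singular as $\overline{\eps}^*$.

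Consequently $b^\eps(t,X_t)\approx(\dot c_0-L)\,\overline{X}^{\overline{\eps}}_{u_t}$ has no pathwise limit unless $\dot c_0=L$. Your argument only works when $\int_0\eps_t/c_t^2\,\odif{t}<\infty$. That condition forces $L=0$, so your route can never produce the $-L$ term in the formula.

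The repair is what the paper does, and it is why the statement only claims the initial drift is well-defined \emph{in distribution}. Take the limit at a fixed deterministic $x$ by substituting $c_tx$ for $x$ in \cref{prop:inter_conversion}, then plug in $x\sim\standardnormal$. Your decomposition then goes through verbatim:
\[
b^\eps(t,c_tx)=\dot c_t\,x+c_t\dot u_t\,\overline{b}^{0}(u_t,x)+\frac{\eps_t}{c_t}\,\overline{s}(u_t,x)\;\longrightarrow\;(\dot c_0-L)\,x .
\]

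This fixed-$x$ version is then a genuinely different treatment of the middle term from the paper's. The paper writes that term as $(c_t^2\dot u_t/\beta_t)(x-\overline{\eta}_Z(u_t,x))$, uses condition~4 to bound the prefactor, and needs only $\overline{\eta}_Z(u_t,x)\to x$. Your split avoids condition~4. In exchange it needs continuity of $\overline{b}^0=(x-\overline{\eta}_Z)/u$ at $u=0$, i.e.\ the rate $x-\overline{\eta}_Z(u,x)=O(u)$. That is exactly the kind of faster convergence the paper's closing remark names as the price of dropping condition~4.

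Steps~1--2 should likewise be phrased on $(0,1]$, for example via the entrance-law solution, rather than through a linear solution started from $z$.
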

\begin{remark}
    The initial drift in \cref{theorem:point_mass_generation} is always bounded for the ODE case $\eps \equiv 0$, whereas for the SDE case it is bounded if and only if we choose the diffusion scale such that $\eps_t \in O(\alpha_t)$ as $t \to 0_+$.
\end{remark}

\cref{theorem:point_mass_generation} shows that while the normally distributed initial condition sits at the initial position for the density-admitting stochastic interpolant from \cref{def:interpolant}, the initial condition instead sits at the initial SDE drift (or ODE velocity when $\eps \equiv 0$) for the point mass interpolant from \cref{def:point_mass_schedule}. To the best of our knowledge such a non-standard ODE has not been considered before in the flow or diffusion literature, whereas the SDE has been treated in slightly different settings in \citet{stochastic_interpolants,probabilistic_forecasting}.

A sample path from a point mass interpolant is plotted together with its linear interpolant path in \cref{fig:typst_interpolant_paths} for the ODE and statistically optimal SDE, respectively.

We finally extend proposition 2.5 from \citet{lipschitz_schedules} to the point mass setting.
\begin{restatable}{proposition}{PropKLDivInvariance}
    \label{prop:kl_div_invariance}%
    Let $(\alpha, \beta)$ be any valid schedule, possibly a point mass schedule. Then under the assumptions in \cref{theorem:eps_opt_kl_div}, the minimal path-measure KL-divergence attained when $\eps = \eps^*$ is invariant to the choice of schedule.
\end{restatable}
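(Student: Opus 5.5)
The plan is to reduce everything to the linear interpolant through the time change $u$ and the space change $c$, and then show that the minimal KL integral is unchanged by this reparameterization. By \cref{theorem:eps_opt_kl_div}, the minimum equals
\[
  \int_0^1 \eps^*_t \, \ex\bs{\norm{s(t, I_t) - \hat{s}(t, I_t)}^2} \odif{t}.
\]
By \cref{prop:inter_conversion}, the estimates for schedule $(\alpha, \beta)$ are obtained from linear-interpolant estimates through the same affine maps as the true functions. In particular $s(t,x) = (1/c_t)\overline{s}(u_t, x/c_t)$ and $\hat{s}(t,x) = (1/c_t)\hat{\overline{s}}(u_t, x/c_t)$. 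So it suffices to show that this integral equals the corresponding linear-interpolant integral $\int_0^1 \overline{\eps}^*_u \ex\bs{\norm{\overline{s}(u,\overline{I}_u) - \hat{\overline{s}}(u,\overline{I}_u)}^2}\odif{u}$. Invariance between any two schedules then follows by passing through the linear one.

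The computation proceeds in three steps.

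\emph{Step 1.} Note that $I_t = c_t \overline{I}_{u_t}$ holds as random variables. Indeed, $c_t(1-u_t)Z + c_t u_t X = \alpha_t Z + \beta_t X$.

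\emph{Step 2.} Substituting gives
\[
  \ex\norm{s(t,I_t)-\hat s(t,I_t)}^2 = c_t^{-2}\,\ex\norm{\overline{s}(u_t,\overline{I}_{u_t})-\hat{\overline{s}}(u_t,\overline{I}_{u_t})}^2.
\]

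\emph{Step 3.} Verify the identity $\eps^*_t / c_t^2 = \overline{\eps}^*_{u_t}\,\dot u_t$, with $\overline{\eps}^*_u = (1-u)/u$ from \cref{prop:linear_eps_optimal}. Compute $1-u_t = \alpha_t/c_t$, so $(1-u_t)/u_t = \alpha_t/\beta_t$. Also $\dot u_t = (\dot\beta_t\alpha_t - \beta_t\dot\alpha_t)/c_t^2$. Hence
\[
  \overline{\eps}^*_{u_t}\dot u_t = \frac{\alpha_t}{\beta_t}\cdot\frac{\dot\beta_t\alpha_t-\beta_t\dot\alpha_t}{c_t^2} = \frac{\alpha_t^2\dot\beta_t/\beta_t - \alpha_t\dot\alpha_t}{c_t^2} = \frac{\eps^*_t}{c_t^2},
\]
which is exactly the claimed identity. (This is consistent with the $\overline{\eps}_{u}$ formula in \cref{prop:inter_conversion} at $\eps=\eps^*$, which gives $\overline\eps_{u_t}=\alpha_t/\beta_t$.) A change of variables $u = u_t$ then converts the $t$-integral into the linear one. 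This is valid since $u$ is $C^1$ and strictly increasing with $u_0=0$ and $u_1=1$, both for standard schedules and for point mass schedules by conditions 3, 5 and 6 of \cref{def:point_mass_schedule}.

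The main obstacle is justifying that \cref{theorem:eps_opt_kl_div} still applies to point mass schedules, where $\alpha_0 = 0$, the density degenerates at $t=0$, and $\eps^*$ may blow up. I would handle this by defining the path-measure KL on $[\delta,1]$ via Girsanov. The drift difference is $(\eps^*_t+\eps_t)(s-\hat s)$ by \cref{prop:intra_conversion}, giving the same integrand over $[\delta,1]$, plus a KL term between the laws at time $\delta$. Let $\delta\to0$ and use \cref{theorem:point_mass_generation}: since $X_t = c_t\overline X_{u_t}$, the time-$\delta$ laws are pushforwards of the linear laws at $u_\delta$ under the same map $x \mapsto c_\delta x$. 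Both these terms vanish in the limit, since the initial conditions coincide in distribution for the linear interpolant. Monotone convergence of the nonnegative integrand then gives the full integral, possibly infinite on both sides simultaneously.
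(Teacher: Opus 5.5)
Your argument is correct and is essentially the paper's proof in different coordinates: the paper writes $I_t=\beta_t(X+\eta_t Z)$ with $\eta_t=\alpha_t/\beta_t$ and uses $\eps^*_t/\beta_t^2=-\eta_t\dot{\eta}_t$ to change variables to the noise level $\eta\in(0,\infty)$, whereas you write $I_t=c_t\overline{I}_{u_t}$ and use $\eps^*_t/c_t^2=\overline{\eps}^*_{u_t}\dot{u}_t$ (a rearrangement of \cref{lemma:u_and_c}) to change variables to $u\in(0,1)$, and the two coincide under $\eta=(1-u)/u$, $\overline{I}_u=u(X+\eta Z)$, with your $u_0=0$ playing exactly the role of the paper's $\eta_0=\infty$. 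Your extra $\delta\to 0$ paragraph for point mass schedules goes beyond the paper, which simply applies the formula of \cref{theorem:eps_opt_kl_div}; there, however, the vanishing of the time-$\delta$ KL term does not follow from the equality of the initial laws alone, and instead follows from bounding it, after the bijection $x\mapsto c_\delta x$, by the linear-interpolant path KL on $[0,u_\delta]$, i.e.\ by $\int_0^{u_\delta}\overline{\eps}^*_u\,\ex\norm{\overline{s}(u,\overline{I}_u)-\hat{\overline{s}}(u,\overline{I}_u)}^2\odif{u}$, which tends to $0$ whenever the total integral is finite (and when it is infinite nothing needs to vanish).
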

\cref{prop:kl_div_invariance} dictates that we must optimize the schedule for some criterion different from statistical optimality, which is what we do in the next section.

\section{Lazy schedules under Gaussian data assumption}

In this section, we make the oversimplifying assumption that $X \sample \standardnormal$. This allows us to identify families of schedules for which the SDE drift is identically zero, the motivation being that such schedules minimize the energy required to transport from base to target measure. We call these schedules \emph{lazy}. In the experimental section, we demonstrate empirically that the results we obtain under the Gaussian data assumption are  useful even in realistic, high-dimensional and highly non-Gaussian settings.

\subsection{Relation to flows and diffusions}

To simplify the presentation we focus on two natural choices for the diffusion scale: $\eps \equiv 0$ and $\eps = \eps^*$. The former corresponds to ODE sampling and reduces to standard flow matching under the linear interpolant (see \cref{def:linear_interpolant}). The latter reduces to sampling a time-reversed Ornstein-Uhlenbeck process as in diffusion modeling (\cref{prop:eps_opt_gives_OU_rev_process}) and we will henceforth refer to this as ``statistically optimal SDE sampling''. Numerical integration is simpler for ODE sampling, but statistically optimal SDE sampling is maximally robust to approximation errors in the learned drift estimate as per \cref{theorem:eps_opt_kl_div}. Throughout we use the notation $b \define b^{\eps \equiv 0}$ for the ODE velocity and $b^* \define b^{\eps=\eps^*}$ for the statistically optimal SDE drift.

\subsection{Identifying lazy schedules}

Requiring the velocity and drift to be identically zero under the Gaussian data assumption removes one degree of freedom in the schedule.

\begin{restatable}[lazy schedule families]{proposition}{PropLazyScheduleFamilies}
    \label{prop:lazy_schedule_families}%
    Assume that $X \sample \standardnormal$. Then for all $(t, x) \in \unitinterval \times \reals^d$ it holds that
    \begin{enumerate}
        \item
        \[
            b (t, x) = 0 \quad \iff \quad \alpha_t^2 + \beta_t^2 = 1.
        \]
        In words, the ODE velocity is identically zero if and only if the schedule is variance preserving.
        \item 
        \[
            b^* (t, x) = 0 \quad \iff \quad \alpha_t^2 + \beta_t^2 = \beta_t.
        \]
        This implies $\alpha_0 = 0, \beta_t = o(\alpha_t), \alpha_t^2 = O(\beta_t) \text{ as } t \to 0_+$. We have that $\dot{u}_0 < \infty \iff \dot{\beta}_t = O(\sqrt{\beta_t})$ in which case the schedule is a point mass schedule per \cref{def:point_mass_schedule}. It also holds for all $t \in \unitinterval$ that $\eps^*_t = \dot{\beta}_t/2$ so that
        \[
            \int_s^t 2 \eps^*_u \odif{u} = \beta_t - \beta_s \quad \text{ for } \quad 0 \leq s \leq t \leq 1.
        \]
    \end{enumerate}
\end{restatable}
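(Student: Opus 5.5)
The plan is to compute the score explicitly under the Gaussian data assumption and substitute it into the drift formula of \cref{prop:intra_conversion}. If $X \sample \standardnormal$ is independent of $Z$, then $I_t \sample \normal(0, (\alpha_t^2+\beta_t^2)\id)$. Writing $\sigma_t^2 \define \alpha_t^2+\beta_t^2$, the score is $s(t,x) = -x/\sigma_t^2$. By \cref{prop:intra_conversion},
\[
  b^\eps(t,x) = \ps{\frac{\dot\beta_t}{\beta_t} - \frac{\eps^*_t+\eps_t}{\sigma_t^2}}x .
\]
Since this is linear in $x$, it vanishes for all $x$ if and only if the scalar coefficient vanishes. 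I will work on $t\in(0,1)$, where $\beta_t>0$ and the formulas are valid, and extend to the endpoints by continuity.

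\textbf{ODE case ($\eps\equiv0$).} The coefficient vanishes iff $\dot\beta_t\sigma_t^2 = \beta_t\eps^*_t = \alpha_t^2\dot\beta_t - \alpha_t\dot\alpha_t\beta_t$, i.e. iff $\beta_t\ps{\beta_t\dot\beta_t + \alpha_t\dot\alpha_t} = 0$, i.e. iff $\odv{}{t}\sigma_t^2 = 0$. The boundary condition $\sigma_1^2 = 1$ then gives $\alpha_t^2+\beta_t^2 = 1$. The converse is the same computation read backwards.

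\textbf{Statistically optimal case ($\eps=\eps^*$).} The condition is $\dot\beta_t\sigma_t^2 = 2\beta_t\eps^*_t = 2\alpha_t^2\dot\beta_t - 2\alpha_t\dot\alpha_t\beta_t$, which rearranges to
\[
  \dot\beta_t\ps{\beta_t^2-\alpha_t^2} + 2\alpha_t\dot\alpha_t\beta_t = 0 .
\]
I will recognise this as $\beta_t^2\,\odv{}{t}\ps{\sigma_t^2/\beta_t} = 0$. Indeed, $\odv{}{t}\ps{(\alpha_t^2+\beta_t^2)/\beta_t} = \ps{2\alpha_t\dot\alpha_t\beta_t + \dot\beta_t\beta_t^2 - \alpha_t^2\dot\beta_t}/\beta_t^2$. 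Thus $\sigma_t^2/\beta_t$ is constant, and it equals $1$ at $t=1$, so $\alpha_t^2+\beta_t^2 = \beta_t$. This algebraic recognition is the only slightly nontrivial step. A side point is that the statement formally allows $\alpha_0 = 0$, so the family is understood within the point-mass extension.

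\textbf{Consequences of $\alpha_t^2 = \beta_t - \beta_t^2$.} Setting $t=0$ and $\beta_0 = 0$ gives $\alpha_0 = 0$. Next, $\alpha_t^2 = \beta_t(1-\beta_t)\le\beta_t$, so $\alpha_t^2 = O(\beta_t)$. Also $\beta_t/\alpha_t = \sqrt{\beta_t/(1-\beta_t)}\to0$, so $\beta_t = o(\alpha_t)$.

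For $\dot u_0$, write $u = \beta/(\alpha+\beta) = r/(1+r)$ with $r = \beta/\alpha = \sqrt{\beta/(1-\beta)}$. Then
\[
  \dot u = \frac{\dot r}{(1+r)^2}, \qquad \dot r = \frac{\dot\beta}{2\sqrt{\beta}(1-\beta)^{3/2}} .
\]
Near $0$, $\dot u_t \sim \dot\beta_t/(2\sqrt{\beta_t})$, so $\dot u_0<\infty$ iff $\dot\beta_t = O(\sqrt{\beta_t})$. The formula also shows $\dot u>0$ on $(0,1)$, so together with the earlier conditions the point-mass schedule requirements are met.

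\textbf{The identity $\eps^*_t = \dot\beta_t/2$.} From $2\alpha_t\dot\alpha_t = \dot\beta_t - 2\beta_t\dot\beta_t$, we get
\[
  \eps^*_t = \frac{(\beta_t-\beta_t^2)\dot\beta_t}{\beta_t} - \frac{\dot\beta_t - 2\beta_t\dot\beta_t}{2} = \frac{\dot\beta_t}{2},
\]
using the definition of $\eps^*$ from \cref{def:eps_optimal}. This holds on $(0,1]$ and extends to $t=0$ as a right limit. Integrating gives $\int_s^t 2\eps^*_u\odif{u} = \beta_t-\beta_s$.
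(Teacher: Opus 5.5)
Your proposal is correct and follows the paper's proof essentially step for step. You insert the Gaussian score $s(t,x) = -x/(\alpha_t^2+\beta_t^2)$ into the drift formula of \cref{prop:intra_conversion}, reduce the vanishing of the scalar coefficient to an ODE in $t$ whose constant is fixed at $t=1$, and then compute $\dot{u}_t$ and $\eps^*_t = \dot{\beta}_t/2$ directly from $\alpha_t^2 = \beta_t(1-\beta_t)$.

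Your identity $\beta_t^2 \odv{}{t}\bigl((\alpha_t^2+\beta_t^2)/\beta_t\bigr) = \dot{\beta}_t(\beta_t^2-\alpha_t^2) + 2\alpha_t\dot{\alpha}_t\beta_t$ is just the integrating-factor form of the paper's linear ODE $\dot{a} - (\dot{\beta}_t/\beta_t)a = -\beta_t\dot{\beta}_t$ for $a = \alpha_t^2$, whose solutions $a = C\beta_t - \beta_t^2$ are exactly $\sigma_t^2/\beta_t = C$. Like the paper, you do not check that $\alpha = \sqrt{\beta(1-\beta)}$ is $C^1$ up to $t=1$, as \cref{def:point_mass_schedule} demands. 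This needs $\dot{\beta}_t = O(\sqrt{1-\beta_t})$ as $t \to 1_-$ and fails e.g.\ for $\beta_t = t^2$, so the ``point mass schedule'' conclusion holds only modulo that endpoint regularity.
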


\cref{prop:lazy_schedule_families} curiously shows that requiring the ODE velocity to be identically zero under the Gaussian data assumption amounts to choosing a variance preserving schedule, which is typically used in diffusion models but not in flow models. On the other hand, for statistically optimal SDE sampling it instead leads to point mass schedule, which is underexplored in the literature.

\begin{remark}
    From \cref{prop:lazy_schedule_families} together with the inevitable boundary conditions $\beta_0 = 0, \beta_1 = 1$ we see that $\int_0^1 2 \eps^*_u \odif{u} = 1$, i.e.~the quadratic variation of the martingale part of solutions to our SDE from $t=0$ to $t=1$ are fixed at 1. This is remarkable since for $\alpha_0 = 1$ the same integral is fixed at $\infty$ by \cref{prop:eps_opt}. This result holds even when $X$ is not normally distributed. However, one easily shows that normalizing this quadratic variation by the total variance gives $\int_0^t (2 \eps^*_u)/\beta_u \odif{u} = [\log (\beta_u)]_0^t = \infty$ for all $t \in \interval[open left]{0}{1}$, resembling the result from \cref{prop:eps_opt}.
\end{remark}

A natural next question is how to eliminate the last degree of freedom. We give two natural examples below and then discuss why the second example is useful in practice.

\begin{example}[$\beta_t = t$]
    Setting $\beta_t = t$ gives, in the ODE case, $\alpha_t = \sqrt{1 - t^2}$. Interestingly, this is \emph{not} the usual linear schedule used in flow matching models. Also $\dot{\alpha}_1 = -\infty$. Considering instead the SDE case we get $\alpha_t = \sqrt{t(1-t)}$ and $2 \eps^*_t = 1$. When $X \sample \standardnormal$, the solution to the SDE is a standard $d$-dimensional Brownian motion $(W_t)_{t \in \unitinterval}$ with $W_0 = 0$. Also $\dot{\alpha}_0 = \infty$ and $\dot{\alpha}_1 = -\infty$. The endpoint explosion of $\dot{\alpha}$ in these schedules is worrisome.
\end{example}

\begin{example}[$u_t = t$]
    \label{ex:u_t_eq_t}%
    Define $d_t := (1-t)^2 + t^2$. Then for the ODE case requiring $u_t = t$ gives the schedule $\alpha_t = (1 - t)/\sqrt{d_t}, \beta_t = t/\sqrt{d_t}$. Then $\dot{\alpha}_0 = 0$, $\dot{\alpha}_1 = -1$ and $\dot{\beta}_0 = 1, \dot{\beta}_1 = 0$. Considering instead the SDE case gives the schedule $\alpha_t = t(1 - t)/d_t, \beta_t = t^2/d_t$ which is a point mass schedule per \cref{prop:lazy_schedule_families} since $\dot{\beta}_t = O(\sqrt{\beta_t})$ as $t \to 0_+$. We now have $\dot{\alpha}_0 = 1$, $\dot{\alpha}_1 = -1$ and $\dot{\beta}_0 = \dot{\beta}_1 = \eps^*_0 = \eps^*_1 = 0$, and also $\dot{c}_0 = 1 = \lim_{t \to 0_+} \eps_t^*/\alpha_t$ so that the initial drift is zero per \cref{theorem:point_mass_generation}. The fact that all functions are bounded everywhere in both schedules is attractive from a numerical perspective.
\end{example}

We can further characterize and motivate the constraint $u_t = t$ in \cref{ex:u_t_eq_t} as follows. Consider the log-signal-to-noise ratio $\lambda_t \define \log(\beta_t^2/\alpha_t^2)$. Thus, $\lambda_0 = -\infty$ and $\lambda_1 = \infty$.\footnote{Even if the schedule is a point mass schedule we still have the limit $\lim_{t \to 0_+} \beta_t/\alpha_t = 0$.} It might be natural to require $\lambda_t$ to be linear in \emph{logit-time}. Specifically, define $\tau(t) \define 2 \log(t/(1-t))$, with inverse $t(\tau) = (1 + \exp(-\tau/2))^{-1}$, i.e.~the logistic function with growth rate $1/2$. Then requiring $\lambda_t = \tau(t)$ is equivalent to requiring $u_t = t$.

\subsection{Lazy schedule transformation of a pretrained flow matching model}

Based on what we have shown so far, we can convert any flow or diffusion model (more generally any stochastic interpolant) with an arbitrary schedule to its corresponding lazy schedule variant (under the Gaussian assumption). The motivation is that numerically solving the ODE or SDE is easier when the dynamics are gentle. That the dynamics in practice become gentler under the lazy schedule even when we seriously violate the Gaussian data assumption is strongly suggested by the empirical results in the next section. To make the schedule conversion particularly simple, we focus on the $u_t = t$ case considered in \cref{ex:u_t_eq_t}. We focus on converting from a flow matching model velocity $v^\text{flow}$, which is identical to the linear ODE velocity $\overline{b}$ of the linear interpolant $\overline{I}_t$ from \cref{def:linear_interpolant} (see also \cref{sec:relation_to_flow_models}).

\begin{restatable}[linear velocity to lazy ODE velocity]{proposition}{PropODEOptScheduleTransform}
    \label{prop:ode_opt_schedule_transform}
    Define the schedule
    \[
        \alpha_t \define (1-t)/\sqrt{d_t}, \quad
        \beta_t \define t/\sqrt{d_t}, \quad
        d_t \define (1-t)^2 + t^2,
    \]
    as in \cref{ex:u_t_eq_t} and assume we are given access to the ODE velocity $\overline{b} = v^{\text{flow}}$ of the linear interpolant from \cref{def:linear_interpolant}. Then the velocity
    satisfies
    \[
        b(t,x) = ((1-2t)/(d_t)) x + (1/\sqrt{d_t})\overline{b}(t,\sqrt{d_t}x),
    \]
    for all $(t,x)\in \unitinterval \times \reals^d$. In particular, the initial velocity satisfies
    \[
        b(0,z) = z + \overline{b}(0, z) = \ex\bs{X} \quad \forall z \in \reals^d.
    \]
\end{restatable}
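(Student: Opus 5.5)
This is a direct application of \cref{prop:inter_conversion} in the ODE case $\eps \equiv 0$, where also $\overline{\eps} \equiv 0$ because $\overline{\eps}_{u_t} = \alpha_t \eps_t/(\beta_t \eps^*_t)$ vanishes. That proposition gives
\[
    b(t,x) = \frac{\dot{c}_t}{c_t} x + c_t \dot{u}_t \, \overline{b}\ps{u_t, x/c_t},
\]
so the work is to compute $c_t$, $u_t$, $\dot{u}_t$ and $\dot{c}_t/c_t$ for the given schedule.

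\textbf{Step 1: compute $c$ and $u$.} With $\alpha_t = (1-t)/\sqrt{d_t}$ and $\beta_t = t/\sqrt{d_t}$ we get $c_t = \alpha_t + \beta_t = 1/\sqrt{d_t}$. Hence $u_t = \beta_t/c_t = t$ and $\dot{u}_t = 1$. The argument then becomes $x/c_t = \sqrt{d_t}\,x$, and the prefactor is $c_t\dot{u}_t = 1/\sqrt{d_t}$.

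\textbf{Step 2: compute $\dot{c}_t/c_t$.} We have $\dot{c}_t/c_t = \odv{}{t}\log c_t = -\tfrac12 \dot{d}_t/d_t$. Since $\dot{d}_t = -2(1-t) + 2t = 2(2t-1)$, this equals $(1-2t)/d_t$. Substituting gives the claimed formula for $b(t,x)$.

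\textbf{Step 3: extend to the endpoints.} \cref{prop:inter_conversion} is stated on $\interval[open]{0}{1}$, while the claim is for all $t \in \unitinterval$. Both sides are continuous in $t$ under \cref{assum:first_assum}, and the schedule is $C^1$ with bounded derivatives, since $d_t \geq 1/2$. So the identity extends to $t = 0$ and $t = 1$ by taking limits. This is the only mildly delicate step.

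\textbf{Step 4: the initial velocity.} At $t = 0$ we have $d_0 = 1$, so $b(0,z) = z + \overline{b}(0,z)$. For the linear interpolant, $\overline{b}(0,z) = \dot{\overline{\alpha}}_0\,\overline{\eta}_Z(0,z) + \dot{\overline{\beta}}_0\,\overline{\eta}_X(0,z) = -\ex\bs{Z \given Z = z} + \ex\bs{X \given Z = z} = -z + \ex\bs{X}$, using $\overline{I}_0 = Z$ and the independence of $X$ and $Z$. Therefore $b(0,z) = \ex\bs{X}$. Alternatively, one can compute $b(0,z) = \dot{\alpha}_0\eta_Z + \dot{\beta}_0\eta_X$ directly with $\dot{\alpha}_0 = 0$ and $\dot{\beta}_0 = 1$ from \cref{ex:u_t_eq_t}; this gives the same result.
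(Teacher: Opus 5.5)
Your proposal is correct and follows the paper's proof essentially verbatim: you apply \cref{prop:inter_conversion} with $\eps \equiv 0$ (hence $\overline{\eps} \equiv 0$), compute $c_t = 1/\sqrt{d_t}$, $u_t = t$, $\dot{u}_t = 1$ and $\dot{c}_t/c_t = (1-2t)/d_t$, and evaluate at $t=0$ using $\overline{b}(0,z) = \ex[X] - z$. Your Step 3 is slightly more explicit than the paper, which only remarks that the formulas remain well-defined at $t \in \{0,1\}$. Strictly, passing to the limit in $\overline{b}(t,\sqrt{d_t}\,x)$ needs joint continuity of $\overline{b}$. You can avoid this by checking the identity at $t\in\{0,1\}$ directly from the $\eta_Z,\eta_X$ definitions, as your alternative computation in Step 4 already does at $t=0$.
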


\begin{restatable}[linear velocity to lazy SDE drift]{proposition}{PropSDEOptScheduleTransform}
    \label{prop:sde_opt_schedule_transform}%
    Define the schedule
    \[
        \alpha_t \define t(1-t)/d_t, \quad
        \beta_t \define t^2/d_t, \quad
        d_t \define (1-t)^2 + t^2,
    \]
    as in \cref{ex:u_t_eq_t} and assume we are given access to the ODE velocity $\overline{b} = v^{\text{flow}}$ of the linear interpolant from \cref{def:linear_interpolant}. Then the statistically optimal SDE drift satisfies
    \[
        b^*(t,x) = (2/d_t)((1-2t)x + t \overline{b}(t,(d_t/t)x)),
    \]
    for all $(t,x) \in \interval[open left]{0}{1} \times \reals^d$, and the initial drift is identically zero, i.e.~$b^*(0,0) = 0$.
\end{restatable}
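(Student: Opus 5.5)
Our plan is to apply \cref{prop:inter_conversion} with the given point mass schedule. Since $u_t = t$ here, the formula $b^\eps(t,x) = (\dot c_t/c_t)x + c_t\dot u_t\,\overline{b}^{\overline{\eps}}(u_t, x/c_t)$ simplifies considerably. We first compute $c_t = \alpha_t+\beta_t = t(1-t)/d_t + t^2/d_t = t/d_t$, so $u_t = \beta_t/c_t = t$ and $\dot u_t = 1$. Next we identify the linear-interpolant diffusion scale $\overline{\eps}_t = \alpha_t\eps^*_t/(\beta_t\eps^*_t) = \alpha_t/\beta_t = (1-t)/t$. By \cref{prop:linear_eps_optimal} this is exactly $\overline{\eps}^*_t$, so the statistically optimal SDE under the point mass schedule corresponds to the statistically optimal SDE under the linear schedule. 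We then need to express $\overline{b}^{\overline{\eps}^*}$ through the ODE velocity $\overline{b}$.

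For that conversion we use \cref{prop:intra_conversion} on the linear interpolant. It gives $\overline{b}^{\overline{\eps}}(t,y) = (\overline{\eps}^*_t+\overline{\eps}_t)\overline{s}(t,y) + y/t$, so $\overline{b}(t,y) = \overline{\eps}^*_t\,\overline{s} + y/t$. Eliminating the score, we obtain $\overline{b}^{\overline{\eps}^*}(t,y) = 2\overline{b}(t,y) - y/t$. Substituting $y = x/c_t = (d_t/t)x$ and $c_t = t/d_t$ gives
\[
c_t\,\overline{b}^{\overline{\eps}^*}(t, x/c_t) = (2t/d_t)\,\overline{b}(t,(d_t/t)x) - x/t.
\]
For the prefactor, $\dot c_t/c_t = 1/t - \dot d_t/d_t$, and $\dot d_t = -2(1-t)+2t = 2(2t-1)$. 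Adding the two pieces, the $\pm x/t$ terms cancel and we are left with $(2(1-2t)/d_t)x + (2t/d_t)\overline{b}(t,(d_t/t)x)$, which is the claimed formula on $(0,1]$.

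For the initial drift we invoke \cref{theorem:point_mass_generation}; \cref{ex:u_t_eq_t} already confirms that this schedule is a point mass schedule. The theorem gives $b^*(0,X_0) = (\dot c_0 - \lim_{t\to0_+}\eps^*_t/\alpha_t)\overline{X}_0$. We compute $\dot c_0 = 1$ from $c_t = t/d_t$ with $d_0 = 1$, $\dot d_0 = -2$. Taking $\eps^*_t = \dot\beta_t/2$ from \cref{prop:lazy_schedule_families}, which applies because $\alpha^2+\beta^2=\beta$ holds for this schedule, we get $\dot\beta_t = (2td_t - t^2\dot d_t)/d_t^2 \sim 2t$ and $\alpha_t\sim t$, so the limit is $1$. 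The coefficient therefore vanishes and $b^*(0,0)=0$. As a consistency check, $x = c_t\overline{X}_t \approx t\,\overline{X}_t$ makes $(d_t/t)x$ bounded, so the formula stays finite as $t\to0_+$.

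The main obstacle is a subtlety in this last step. Applying \cref{prop:lazy_schedule_families} needs only the algebraic identity $\eps^* = \dot\beta/2$, which follows from $\alpha^2+\beta^2=\beta$ and does not depend on the Gaussian data assumption. However, the limit $\lim \eps^*_t/\alpha_t$ must be evaluated carefully, because both numerator and denominator vanish at $0$. If needed, we will compute $\eps^*_t = \alpha_t^2\dot\beta_t/\beta_t - \alpha_t\dot\alpha_t$ directly from the definition, as a cross-check that does not rely on that identity.
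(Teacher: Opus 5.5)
Your proposal is correct. For the formula on $(0,1]$ it follows the paper's proof: inter-conversion with $u_t = t$ and $c_t = t/d_t$, the observation $\overline{\eps}_t = \alpha_t/\beta_t = \overline{\eps}^*_t$, then intra-conversion $\overline{b}^{\overline{\eps}^*}(t,y) = 2\overline{b}(t,y) - y/t$, then algebra. Your $\dot c_t/c_t = 1/t + 2(1-2t)/d_t$ agrees with the paper's $(1-2t^2)/(t d_t)$.

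For the initial drift you take a different route. You invoke \cref{theorem:point_mass_generation} and show that the coefficient $\dot c_0 - \lim_{t\to 0_+}\eps^*_t/\alpha_t$ vanishes, which is the reasoning sketched in \cref{ex:u_t_eq_t}. The paper's proof instead substitutes $(t/d_t)x$ for $x$ in the formula just derived. This gives $b^*(t,(t/d_t)x) = \frac{2t}{d_t^2}(1-2t)x + \frac{2t}{d_t}\overline{b}(t,x)$, and the paper lets $t\to 0_+$ using only that $\overline{b}(0,x) = \ex[X]-x$ is finite.

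The paper's route is self-contained and needs nothing beyond regularity of $\overline{b}$ near $t=0$. Yours makes the cancellation mechanism explicit, but it leans on a heavier theorem whose proof goes through $\overline{\eta}_Z(u_t,x)\to x$ and the condition $\alpha_t^2 = O(\beta_t)$. It also obliges you to check that $\eps^*$ is a bounded $C^1$ diffusion scale on $[0,1]$. That holds here, since $\eps^*_t = t(1-t)/d_t^2$, but you should state it.

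The limit you flag as the main obstacle is harmless: $\eps^*_t/\alpha_t = 1/d_t$ exactly. Finally, your consistency check is already the paper's argument. With $x = c_t y$, both terms of your formula carry a factor $t$, so they tend to $0$ rather than merely staying finite. This makes the appeal to the theorem unnecessary.
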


We can restate \cref{prop:ode_opt_schedule_transform,prop:sde_opt_schedule_transform} as particularly simple algorithms for how to solve the lazy ODE and SDE using the linear ODE velocity. Pseudocode for these algorithms is given in \cref{alg:ode_sample,alg:sde_sample}.

We stress that we can prove results similar to \cref{prop:ode_opt_schedule_transform,prop:sde_opt_schedule_transform} but from a pretrained diffusion model instead of flow model, or more generally from a pretrained stochastic interpolant with any schedule and any training objective. This is omitted for brevity.

\begin{remark}
    \label{remark:inperfect_flow_vel}%
    Note that \cref{prop:ode_opt_schedule_transform,prop:sde_opt_schedule_transform} generally do not hold exactly when we only have access to an approximate linear flow velocity $\hat{v}^{\text{flow}} \approx v^{\text{flow}}$ learned from data, as is always the case in practice.
\end{remark}

\section{Experiments}\label{sec:experiments}%
\begin{figure}[tb]
    \centering
    \includegraphics[width=\linewidth]{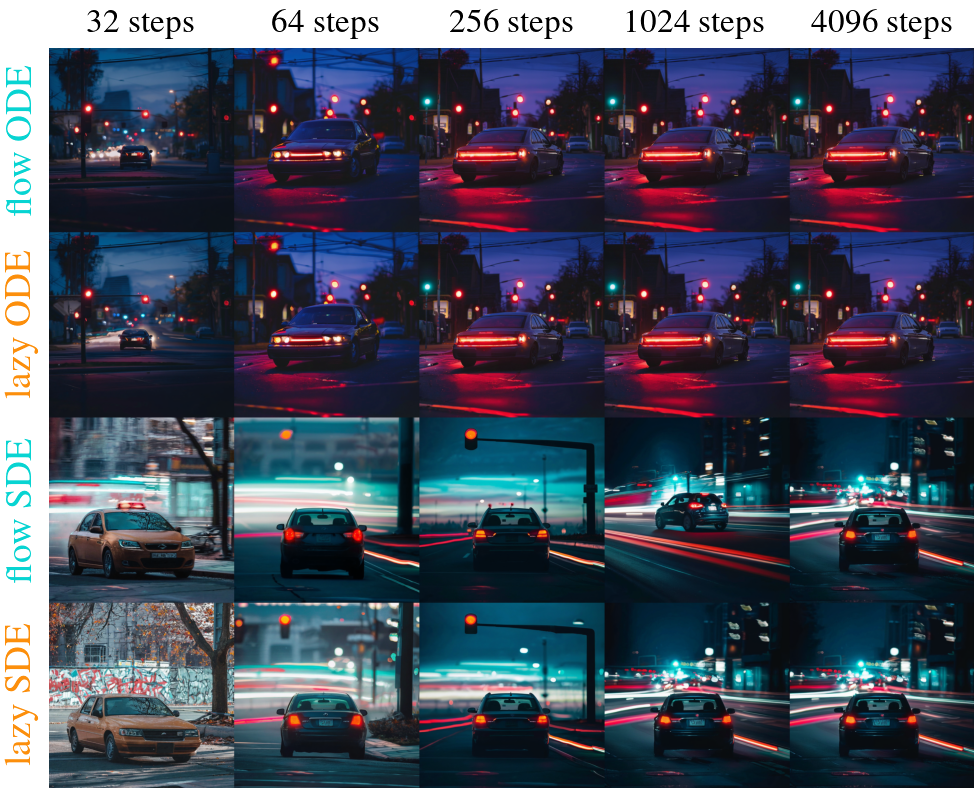}
    
    \caption{Sample images from the PRX flow model with a varying number of predictor-corrector steps for the ODE and statistically optimal SDE, respectively, sampled using the original linear \textcolor{retrocyan}{flow} model schedule versus converting to the \textcolor{retroorange}{lazy} schedule. The text prompt ``A car is stopped at a red light'' was used. As expected, the image quality improves as the number of solver steps increases and seems to converge to the ``ground truth'' reference image which depends on whether an ODE or SDE is used but less so on which schedule is used. See \href{\zenodourl}{this Zenodo link} for an animation.}
    \label{fig:convergence_samples}%
    \vskip -0.1in
\end{figure}

To investigate the usefulness of our lazy schedules in realistic, high-dimensional settings with highly non-Gaussian data we conduct experiments on the latest, largest Photoroom Experimental (PRX) text-to-image classifier-free guided latent-space flow model \cite{photoroom_prx256t2i,almazan2025prx_release,lipman2022flowmatching,ho2022cfg} which has around 1.3 billion parameters and is currently in beta\footnote{Available on HuggingFace at \url{https://huggingface.co/Photoroom/prx-1024-t2i-beta}.}. We fix the guidance strength at the recommended value of $5$ and wish to investigate whether dynamically converting to the lazy schedules using \cref{prop:ode_opt_schedule_transform,prop:sde_opt_schedule_transform} as described in detail in \cref{alg:ode_sample,alg:sde_sample} enables us to generate good images in fewer steps using the ODE and statistically optimal SDE, respectively. We use the predictor-corrector scheme which informally is a mix between the explicit Euler(-Maruyama) scheme and the Heun scheme and which we find is the most efficient of the three. See \cref{sec:app_schemes} for a precise definition of each scheme and some sample image comparisons.

\begin{figure*}[t!]
    \centering
    \includegraphics[width=0.49\linewidth]{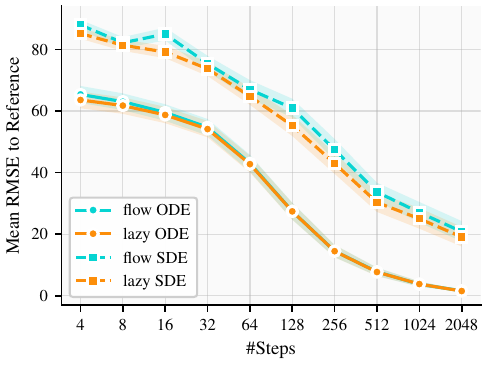}
    \includegraphics[width=0.49\linewidth]{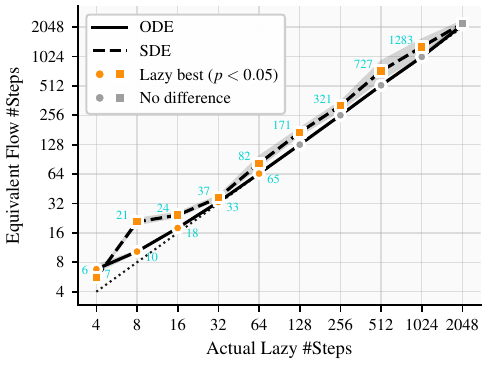}
    
    \caption{\textbf{Left:} Pixel-wise RMSE predictor-corrector convergence plot for ODE (circle markers and solid curves) and statistically optimal SDE (square markers and dashed curves) generation with the original linear \textcolor{retrocyan}{flow} model schedule versus the converted \textcolor{retroorange}{lazy} schedule as in \cref{fig:convergence_samples}. Average and 95\% confidence interval (shaded bands) is over $100$ prompts, initial conditions and Wiener process realizations. One sees that convergence is (almost) monotone as a function of the number of solver steps for all four configurations. Compare with \cref{fig:convergence_plot_within_step}. \textbf{Right:} Visual answer to the question ``Using $n_i$ numerical solver steps with the lazy schedule, how many solver steps would I on average need to use with the linear flow model schedule to achieve the same RMSE to the reference image?''. The answer for each $n_i$ on the $x$-axis is plotted on the $y$-axis (\textcolor{retrocyan}{cyan} labels indicate nearest rounded integer). Shaded bands indicate $95\%$ confidence intervals. The lazy schedule almost always performs statistically significantly better (and never worse) for both ODE and SDE generation, but the improvement is much bigger for SDE generation. All confidence intervals are calculated by bootstrapping with $\num[group-separator={,}]{10000}$ samples.}
    \label{fig:convergence_plot}%
    \vskip -0.1in
\end{figure*}

Instead of looking at standard but arbitrary goodness-of-image metrics such as Fréchet Inception distance \cite{fid} and Inception score \cite{is}, we look at how fast the predictor-corrector solver converges to the idealized ``ground truth'' reference image as a function of the total number of numerical solver steps $n_i$. Since the idealized ``ground truth'' reference image is unattainable in practice,
we define it as the average image after $n_k=4096 \approx \infty$ solver steps, i.e.,
\[
    \texttt{img}_\text{reference}^\text{ODE} \define (\texttt{img}^\text{ODE,flow}_{4096} + \texttt{img}^\text{ODE,lazy}_{4096})/2,
\]
and similarly for the SDE. This is reasonable since in the idealized setting where $\hat{v}^{\text{flow}} = v^{\text{flow}}$, i.e.~when the flow velocity is learned perfectly, then $\texttt{img}^\text{ODE,flow}_{\infty} = \texttt{img}^\text{ODE,lazy}_{\infty}$, and similarly for the SDE. Results where $\texttt{img}_\text{reference}^\text{ODE} \define \texttt{img}^\text{ODE,flow}_{4096}$ is used instead are shown in \cref{fig:convergence_plot_flow_ref}.

As text prompts we use the first $100$ captions from the COCO validation dataset \cite{coco,coco_captions} as listed in \cref{sec:app_coco_captions}. For each experiment we fix a text prompt from our dataset and randomly sample an initial condition and (for SDE generation) a Wiener process realization. We let the number of numerical solver steps vary in the range $n_1=4, n_2=8, \ldots, n_k=4096$. This generates $k=11$ images of increasing quality for each of the four configurations \texttt{(ODE,SDE)} $\times$ \texttt{(\textcolor{retrocyan}{flow},\textcolor{retroorange}{lazy})} sharing the same prompt, initial condition and Wiener process realization, as illustrated in \cref{fig:convergence_samples}. We then calculate the root mean square error (RMSE) over pixels\footnote{This corresponds to a dimension-normalized Euclidean distance in pixel-space.} between the image produced with $n_i$ numerical solver steps and the reference image, i.e.~we calculate
\[
    \texttt{RMSE}(\texttt{img}^\text{ODE,flow}_{n_i}, \texttt{img}_\text{reference}^\text{ODE})
\]
for each $i$ and similarly for $\texttt{img}^\text{ODE,lazy}_{n_i}$ and the SDE. We repeat this experiment for each of our $100$ different prompts, initial conditions and Wiener process realizations and compute the average RMSE. This measures numerical convergence of the solver under each configuration.\footnote{To be precise, this measures convergence of the latent representations \emph{after} decoding, which is more semantically meaningful than convergence in latent space.} The result is plotted in \cref{fig:convergence_plot} and animations for the generated images for all $100$ prompts are accessible at \href{\zenodourl}{this Zenodo link}.

Under the linear flow model schedule the statistically optimal SDE has infinite initial drift and diffusion scale (\cref{prop:eps_opt}) so it is numerically undefined. To deal with this we sample the first step under the lazy schedule; this is easily shown to be equivalent to setting $X^*_{\Delta t} = \sqrt{(1 - \Delta t)^2 + (\Delta t)^2} W_{\Delta t}$ where $W_{\Delta t}$ is the initial increment of the fixed Wiener process.

In \cref{fig:convergence_plot} (left) we see that each of the four configurations converges more or less monotonically to its reference image, but convergence is significantly faster with ODE generation than with SDE generation, indicating the difficulty in numerically solving SDEs. See also \cref{fig:convergence_plot_within_step} showing that the two SDEs converge more slowly to a common reference image.

To test whether the effect of converting to the lazy schedule is statistically significantly positive for ODE and SDE generation, respectively, we ask the following question: ``Using $n_i$ numerical solver steps with the lazy schedule, how many solver steps would I on average need to use with the linear flow model schedule to achieve the same RMSE to the reference image?''. The answer to this question is illustrated in \cref{fig:convergence_plot} (right). For ODE integration the improvement is statistically significant up until $n_i=64$ yet quite small; one saves at most three solver steps (which for $n_i=4$ almost halves the sampling time) using the lazy schedule instead of the original linear flow model schedule. For SDE generation the improvement is much larger; for example, using $128$ predictor-corrector steps with the lazy schedule achieves a performance (on average) similar to using $171$ solver steps with the linear flow model schedule.

\section{Discussion}
We stress that since the PRX model outputs a learned approximation $\hat{v}^{\text{flow}} \approx v^{\text{flow}}$ the flow-to-lazy conversion is inexact as noted in \cref{remark:inperfect_flow_vel}. The fact that the lazy schedule performs statistically better than the linear flow schedule that the PRX model was trained with is therefore very convincing and leads us to hypothesize that training a model from scratch using e.g.~the lazy ODE schedule from \cref{ex:u_t_eq_t} would lead to significantly bigger improvements in sampling quality.

We also stress that the slow convergence of the SDE does \emph{not} allow us to conclude that the statistically optimal SDE generates poorer images than the ODE. The more interesting metric, which is harder to measure, is \emph{distributional fit to the test dataset} and the tradeoff between that and ease of numerical integration for the ODE versus the SDE. We leave this to future work but note that the sample images in \cref{sec:app_exp_results} indicate that using a moderate amount of solver steps (say $n_i \geq 64$) for the statistically optimal SDE generates aesthetically pleasing and detailed images, especially with the lazy schedule.

On the theoretical side it would be valuable to extend the lazy schedule family beyond the Gaussian data assumption. We hypothesize that for the statistically optimal SDE one will end up with a point mass schedule under reasonable assumptions on the data density $\rho_X$.

\FloatBarrier  

\section*{Software and Data}
Code will be released upon acceptance.

\section*{Impact Statement}
This paper presents work whose goal is to advance the field of Machine
Learning. There are many potential societal consequences of our work, none
which we feel must be specifically highlighted here.

\bibliography{lazy_interpolants}

@article{stochastic_interpolants,
    author  = {Michael Albergo and Nicholas M. Boffi and Eric Vanden-Eijnden},
    title   = {Stochastic Interpolants: A Unifying Framework for Flows and Diffusions},
    journal = {Journal of Machine Learning Research},
    year    = {2025},
    volume  = {26},
    number  = {209},
    pages   = {1--80}
}

@inproceedings{probabilistic_forecasting,
    title = 	 {Probabilistic Forecasting with Stochastic Interpolants and Föllmer Processes},
    author =       {Chen, Yifan and Goldstein, Mark and Hua, Mengjian and Albergo, Michael Samuel and Boffi, Nicholas Matthew and Vanden-Eijnden, Eric},
    booktitle = 	 {Proceedings of the 41st International Conference on Machine Learning},
    pages = 	 {6728--6756},
    year = 	 {2024},
    editor = 	 {Salakhutdinov, Ruslan and Kolter, Zico and Heller, Katherine and Weller, Adrian and Oliver, Nuria and Scarlett, Jonathan and Berkenkamp, Felix},
    volume = 	 {235},
    series = 	 {Proceedings of Machine Learning Research},
    month = 	 {21--27 Jul},
    publisher =    {PMLR}
}

@inproceedings{multimarginal,
    author = {Albergo, Michael and Boffi, Nicholas and Lindsey, Michael and Vanden-Eijnden, Eric},
    booktitle = {International Conference on Learning Representations},
    editor = {B. Kim and Y. Yue and S. Chaudhuri and K. Fragkiadaki and M. Khan and Y. Sun},
    pages = {55884--55901},
    title = {Multimarginal Generative Modeling with Stochastic Interpolants},
    volume = {2024},
    year = {2024}
}

@inproceedings{multitasklearningstochasticinterpolants,
    title     = {Multitask Learning with Stochastic Interpolants},
    author    = {Hugo Negrel and Florentin Coeurdoux and Michael S. Albergo and Eric Vanden-Eijnden},
    booktitle = {Advances in Neural Information Processing Systems (NeurIPS 2025)},
    year      = {2025},
    note      = {Accepted to NeurIPS 2025 as a Spotlight; camera-ready hosted on OpenReview.}
}

@inproceedings{sit_interpolants,
    author="Ma, Nanye
    and Goldstein, Mark
    and Albergo, Michael S.
    and Boffi, Nicholas M.
    and Vanden-Eijnden, Eric
    and Xie, Saining",
    editor="Leonardis, Ale{\v{s}}
    and Ricci, Elisa
    and Roth, Stefan
    and Russakovsky, Olga
    and Sattler, Torsten
    and Varol, G{\"u}l",
    title="SiT: Exploring Flow and Diffusion-Based Generative Models with Scalable Interpolant Transformers",
    booktitle="Computer Vision -- ECCV 2024",
    year="2024",
    publisher="Springer Nature Switzerland",
    address="Cham",
    pages="23--40",
    isbn="978-3-031-72980-5"
}

@misc{lipschitz_schedules,
      title={Lipschitz-Guided Design of Interpolation Schedules in Generative Models}, 
      author={Yifan Chen and Eric Vanden-Eijnden and Jiawei Xu},
      year={2025},
      eprint={2509.01629},
      archivePrefix={arXiv},
      primaryClass={stat.ML}
}

@misc{photoroom_prx256t2i,
    author       = {{Photoroom}},
    title        = {{PRX: Open Text-to-Image Generative Model (prx-1024-t2i-beta)}},
    year         = {2025},
    howpublished = {\url{https://huggingface.co/Photoroom/prx-1024-t2i-beta}},
    note         = {Hugging Face model card. Github commit ID: 84772b42065c4ffcfc10a7269e1aebf104fc0ef1},
    urldate      = {2026-01-07}
}

@misc{almazan2025prx_release,
    author       = {Almaz{\'a}n, Jon and Bertoin, David and Frigg, Roman},
    title        = {We’re open-sourcing our text-to-image model and the process behind it},
    year         = {2025},
    month        = nov,
    howpublished = {\url{https://huggingface.co/blog/Photoroom/prx-open-source-t2i-model}},
    urldate      = {2026-01-07}
}

@misc{ho2022cfg,
    author       = {Ho, Jonathan and Salimans, Tim},
    title        = {Classifier-Free Diffusion Guidance},
    year         = {2022},
    eprint       = {2207.12598},
    archivePrefix= {arXiv},
    primaryClass = {cs.LG},
    howpublished = {\url{https://arxiv.org/abs/2207.12598}}
}

@book{LeGall2016Brownian,
    author    = {Le Gall, Jean-François},
    title     = {Brownian Motion, Martingales, and Stochastic Calculus},
    publisher = {Springer Cham},
    year      = {2016},
    edition   = {1},
    series    = {Graduate Texts in Mathematics},
    doi       = {10.1007/978-3-319-31089-3},
    isbn      = {978-3-319-31089-3}
}

@article{stein,
    author = {Charles M. Stein},
    title = {{Estimation of the Mean of a Multivariate Normal Distribution}},
    volume = {9},
    journal = {The Annals of Statistics},
    number = {6},
    publisher = {Institute of Mathematical Statistics},
    pages = {1135 -- 1151},
    year = {1981},
    doi = {10.1214/aos/1176345632}
}

@article{anderson_time_reversal,
    title   = {Reverse-time diffusion equation models},
    author  = {Anderson, Brian D.O.},
    journal = {Stochastic Processes and their Applications},
    volume  = {12},
    number  = {3},
    pages   = {313--326},
    year    = {1982},
    doi     = {10.1016/0304-4149(82)90051-5}
}

@inproceedings{NEURIPS2020_4c5bcfec,
    author = {Ho, Jonathan and Jain, Ajay and Abbeel, Pieter},
    booktitle = {Advances in Neural Information Processing Systems},
    editor = {H. Larochelle and M. Ranzato and R. Hadsell and M.F. Balcan and H. Lin},
    pages = {6840--6851},
    publisher = {Curran Associates, Inc.},
    title = {Denoising Diffusion Probabilistic Models},
    volume = {33},
    year = {2020}
}

@inproceedings{pmlr-v37-sohl-dickstein15,
    title = 	 {Deep Unsupervised Learning using Nonequilibrium Thermodynamics},
    author = 	 {Sohl-Dickstein, Jascha and Weiss, Eric and Maheswaranathan, Niru and Ganguli, Surya},
    booktitle = 	 {Proceedings of the 32nd International Conference on Machine Learning},
    pages = 	 {2256--2265},
    year = 	 {2015},
    editor = 	 {Bach, Francis and Blei, David},
    volume = 	 {37},
    series = 	 {Proceedings of Machine Learning Research},
    address = 	 {Lille, France},
    month = 	 {07--09 Jul},
    publisher =    {PMLR}
}

@inproceedings{NEURIPS2019_3001ef25,
    author = {Song, Yang and Ermon, Stefano},
    booktitle = {Advances in Neural Information Processing Systems},
    editor = {H. Wallach and H. Larochelle and A. Beygelzimer and F. d\textquotesingle Alch\'{e}-Buc and E. Fox and R. Garnett},
    pages = {},
    publisher = {Curran Associates, Inc.},
    title = {Generative Modeling by Estimating Gradients of the Data Distribution},
    volume = {32},
    year = {2019}
}

@inproceedings{
    song2021scorebased,
    title={Score-Based Generative Modeling through Stochastic Differential Equations},
    author={Yang Song and Jascha Sohl-Dickstein and Diederik P Kingma and Abhishek Kumar and Stefano Ermon and Ben Poole},
    booktitle={International Conference on Learning Representations},
    year={2021}
}

@article{principlesdiffusionmodels,
    title={The principles of diffusion models},
    author={Lai, Chieh-Hsin and Song, Yang and Kim, Dongjun and Mitsufuji, Yuki and Ermon, Stefano},
    journal={arXiv preprint arXiv:2510.21890},
    year={2025}
}

@misc{
    nichol2021improved,
    title={Improved Denoising Diffusion Probabilistic Models},
    author={Alexander Quinn Nichol and Prafulla Dhariwal},
    year={2021}
}

@inproceedings{NEURIPS2022_a98846e9,
    author = {Karras, Tero and Aittala, Miika and Aila, Timo and Laine, Samuli},
    booktitle = {Advances in Neural Information Processing Systems},
    editor = {S. Koyejo and S. Mohamed and A. Agarwal and D. Belgrave and K. Cho and A. Oh},
    pages = {26565--26577},
    publisher = {Curran Associates, Inc.},
    title = {Elucidating the Design Space of Diffusion-Based Generative Models},
    volume = {35},
    year = {2022}
}

@inproceedings{NEURIPS2020_92c3b916,
    author = {Song, Yang and Ermon, Stefano},
    booktitle = {Advances in Neural Information Processing Systems},
    editor = {H. Larochelle and M. Ranzato and R. Hadsell and M.F. Balcan and H. Lin},
    pages = {12438--12448},
    publisher = {Curran Associates, Inc.},
    title = {Improved Techniques for Training Score-Based Generative Models},
    volume = {33},
    year = {2020}
}

@INPROCEEDINGS{accel_opt_steps,
    author={Xue, Shuchen and Liu, Zhaoqiang and Chen, Fei and Zhang, Shifeng and Hu, Tianyang and Xie, Enze and Li, Zhenguo},
    booktitle={2024 IEEE/CVF Conference on Computer Vision and Pattern Recognition (CVPR)}, 
    title={Accelerating Diffusion Sampling with Optimized Time Steps}, 
    year={2024},
    volume={},
    number={},
    pages={8292-8301},
    doi={10.1109/CVPR52733.2024.00792}
}

@inproceedings{align_your_steps,
    author = {Sabour, Amirmojtaba and Fidler, Sanja and Kreis, Karsten},
    title = {Align your steps: optimizing sampling schedules in diffusion models},
    year = {2024},
    publisher = {JMLR.org},
    booktitle = {Proceedings of the 41st International Conference on Machine Learning},
    articleno = {1750},
    numpages = {29},
    location = {Vienna, Austria},
    series = {ICML'24}
}

@inproceedings{
    lipman2022flowmatching,
    title={Flow Matching for Generative Modeling},
    author={Yaron Lipman and Ricky T. Q. Chen and Heli Ben-Hamu and Maximilian Nickel and Matthew Le},
    booktitle={The Eleventh International Conference on Learning Representations},
    year={2023}
}

@misc{liu2024letusflowtogether,
    title        = {Let us Flow Together},
    author       = {Liu, Qiang},
    year         = {2024},
    month        = dec,
    note         = {Working draft (dated December 24, 2024)},
    howpublished = {\url{https://www.cs.utexas.edu/~lqiang/PDF/flow_book.pdf}},
    urldate      = {2026-01-07}
}

@inproceedings{pmlr-v202-song23a,
    title = 	 {Consistency Models},
    author =       {Song, Yang and Dhariwal, Prafulla and Chen, Mark and Sutskever, Ilya},
    booktitle = 	 {Proceedings of the 40th International Conference on Machine Learning},
    pages = 	 {32211--32252},
    year = 	 {2023},
    editor = 	 {Krause, Andreas and Brunskill, Emma and Cho, Kyunghyun and Engelhardt, Barbara and Sabato, Sivan and Scarlett, Jonathan},
    volume = 	 {202},
    series = 	 {Proceedings of Machine Learning Research},
    month = 	 {23--29 Jul},
    publisher =    {PMLR}
}

@article{
    boffi2025flow,
    title={Flow map matching with stochastic interpolants: A mathematical framework for consistency models},
    author={Nicholas Matthew Boffi and Michael Samuel Albergo and Eric Vanden-Eijnden},
    journal={Transactions on Machine Learning Research},
    issn={2835-8856},
    year={2025}
}

@inproceedings{pmlr-v202-pooladian23a,
    title = 	 {Multisample Flow Matching: Straightening Flows with Minibatch Couplings},
    author =       {Pooladian, Aram-Alexandre and Ben-Hamu, Heli and Domingo-Enrich, Carles and Amos, Brandon and Lipman, Yaron and Chen, Ricky T. Q.},
    booktitle = 	 {Proceedings of the 40th International Conference on Machine Learning},
    pages = 	 {28100--28127},
    year = 	 {2023},
    editor = 	 {Krause, Andreas and Brunskill, Emma and Cho, Kyunghyun and Engelhardt, Barbara and Sabato, Sivan and Scarlett, Jonathan},
    volume = 	 {202},
    series = 	 {Proceedings of Machine Learning Research},
    month = 	 {23--29 Jul},
    publisher =    {PMLR}
}

@inproceedings{
    liu2023flow,
    title={Flow Straight and Fast: Learning to Generate and Transfer Data with Rectified Flow},
    author={Xingchao Liu and Chengyue Gong and Qiang Liu},
    booktitle={The Eleventh International Conference on Learning Representations},
    year={2023}
}

@inproceedings{NEURIPS2023_c428adf7,
    author = {Shi, Yuyang and De Bortoli, Valentin and Campbell, Andrew and Doucet, Arnaud},
    booktitle = {Advances in Neural Information Processing Systems},
    editor = {A. Oh and T. Naumann and A. Globerson and K. Saenko and M. Hardt and S. Levine},
    pages = {62183--62223},
    publisher = {Curran Associates, Inc.},
    title = {Diffusion Schr\"{o}dinger Bridge Matching},
    volume = {36},
    year = {2023}
}

@inproceedings{
    bortoli2021diffusion,
    title={Diffusion Schr\"odinger Bridge with Applications to Score-Based Generative Modeling},
    author={Valentin De Bortoli and James Thornton and Jeremy Heng and Arnaud Doucet},
    booktitle={Advances in Neural Information Processing Systems},
    editor={A. Beygelzimer and Y. Dauphin and P. Liang and J. Wortman Vaughan},
    year={2021}
}

@article{plug_in_bridge,
    author = {Pooladian, Aram-Alexandre and Niles-Weed, Jonathan},
    title = {Plug-in Estimation of Schrödinger Bridges},
    journal = {SIAM Journal on Mathematics of Data Science},
    volume = {7},
    number = {3},
    pages = {1315-1336},
    year = {2025},
    doi = {10.1137/24M1687340},
    eprint = {https://doi.org/10.1137/24M1687340}
}

@inproceedings{coco,
    author="Lin, Tsung-Yi
    and Maire, Michael
    and Belongie, Serge
    and Hays, James
    and Perona, Pietro
    and Ramanan, Deva
    and Doll{\'a}r, Piotr
    and Zitnick, C. Lawrence",
    editor="Fleet, David
    and Pajdla, Tomas
    and Schiele, Bernt
    and Tuytelaars, Tinne",
    title="Microsoft COCO: Common Objects in Context",
    booktitle="Computer Vision -- ECCV 2014",
    year="2014",
    publisher="Springer International Publishing",
    address="Cham",
    pages="740--755"
}

@misc{coco_captions,
    title={Microsoft COCO Captions: Data Collection and Evaluation Server}, 
    author={Xinlei Chen and Hao Fang and Tsung-Yi Lin and Ramakrishna Vedantam and Saurabh Gupta and Piotr Dollar and C. Lawrence Zitnick},
    year={2015},
    eprint={1504.00325},
    archivePrefix={arXiv},
    primaryClass={cs.CV}
}

@inproceedings{fid,
    author = {Heusel, Martin and Ramsauer, Hubert and Unterthiner, Thomas and Nessler, Bernhard and Hochreiter, Sepp},
    booktitle = {Advances in Neural Information Processing Systems},
    editor = {I. Guyon and U. Von Luxburg and S. Bengio and H. Wallach and R. Fergus and S. Vishwanathan and R. Garnett},
    pages = {},
    publisher = {Curran Associates, Inc.},
    title = {GANs Trained by a Two Time-Scale Update Rule Converge to a Local Nash Equilibrium},
    volume = {30},
    year = {2017}
}

@inproceedings{is,
    author = {Salimans, Tim and Goodfellow, Ian and Zaremba, Wojciech and Cheung, Vicki and Radford, Alec and Chen, Xi and Chen, Xi},
    booktitle = {Advances in Neural Information Processing Systems},
    editor = {D. Lee and M. Sugiyama and U. Luxburg and I. Guyon and R. Garnett},
    pages = {},
    publisher = {Curran Associates, Inc.},
    title = {Improved Techniques for Training GANs},
    volume = {29},
    year = {2016}
}
\bibliographystyle{icml2026}

\newpage
\appendix
\onecolumn
\section{Proofs}\label{sec:app_proofs}%

\PropIntraConversion*
\begin{proof}
    For $t \in \interval[open]{0}{1}$ we can clearly isolate the score $s$ in each of the relations in the proposition. It therefore suffices to prove that each of the relations hold for all $(t, x) \in \interval[open]{0}{1} \times \reals^d$, as claimed.

    The first relation is a standard result that can be proved with Stein's lemma (also known as \emph{Gaussian integration by parts}) \cite{stein} saying that $s(t, x) = -\eta_Z (t, x) / \alpha_t$ for $(t, x) \in \interval[open right]{0}{1} \times \reals^d$, see e.g.~Theorem 8 in \citet{stochastic_interpolants}. The second relation follows easily from the definition of the stochastic interpolant (\cref{def:interpolant}) $I_t = \alpha_t Z + \beta_t X$ and linearity of expectation. Indeed, taking expectations conditional on $I_t = x$ in the definition for $I_t$ gives
    \[
        \eta_X (t, x) = \frac{1}{\beta_t} \ps{x - \alpha_t \eta_Z (t, x)} = \frac{1}{\beta_t} \ps{x + \alpha_t^2 s(t, x)},
    \]
    where we used the first relation to get the last equality.

    To get the final relation, plug the preceding relations into the definition of the drift
    \begin{align*}
        b^\eps (t, x) &= \dot{\alpha}_t \eta_Z (t, x) + \dot{\beta}_t \eta_X (t, x) + \eps_t s (t, x) \\
        &= \dot{\alpha}_t \ps{-\alpha_t s(t, x)} + \dot{\beta}_t \ps{\frac{1}{\beta_t} \ps{x + \alpha_t^2 s(t, x)}} + \eps_t s (t, x) \\
        &= \ps{\alpha_t^2 \frac{\dot{\beta}_t}{\beta_t} - \alpha_t \dot{\alpha}_t + \eps_t} s(t, x) + \frac{\dot{\beta}_t}{\beta_t} x.
    \end{align*}
    Since by \cref{def:eps_optimal} $\eps_t^* = \alpha_t^2 \dot{\beta}_t/\beta_t - \alpha_t \dot{\alpha}_t$ this concludes the proof of the proposition.
\end{proof}

\TheoremEpsOptKLDiv*
\begin{proof}
    An application of Girsanov's theorem gives
    \begin{align*}
        \kldiv(X^\eps, \hat{X}^\eps) &= \frac{1}{4} \int_0^1 \frac{1}{\eps_t} \ex\bs{\norm{b^\eps (t, X_t^\eps) - \hat{b}^\eps (t, X_t^\eps)}^2} \odif{t} \\
        &= \frac{1}{4} \int_0^1 \frac{1}{\eps_t} \ex\bs{\norm{b^\eps (t, I_t) - \hat{b}^\eps (t, I_t)}^2} \odif{t} \\
        &= \frac{1}{4} \int_0^1 \frac{(\eps_t^* + \eps_t)^2}{\eps_t} \ex\bs{\norm{s (t, I_t) - \hat{s} (t, I_t)}^2} \odif{t},
    \end{align*}
    where in the second equality we used that $\law{X_t^\eps} = \law{I_t}$ and in the third equality we used \cref{prop:intra_conversion} with the assumptions on $\hat{b}$ and $\hat{s}$ in this theorem. From the last expression we see that minimizing the KL-divergence over the diffusion scale $\eps$ amounts to minimizing the expression $(\eps_t^* + \eps_t)^2/\eps_t$ over $\eps_t$ for all $t \in \unitinterval$ since the expectation term does not depend on $\eps_t$. Differentiating this expression with respect to $\eps_t$ and setting the resulting expression to zero gives the (unique) minimizer $\eps_t = \eps_t^*$ for all $t \in \unitinterval$, and the theorem follows.
\end{proof}

\PropLinearEpsOptimal*
\begin{proof}
    We have $\overline{\alpha}_t = 1-t, \overline{\beta}_t = t$ so that $\dot{\overline{\alpha}}_t = -1$ and $\dot{\overline{\beta}}_t = 1$. Thus
    \[
        \overline{\eps}^*_t = \overline{\alpha}_t^2 \frac{\dot{\overline{\beta}}_t}{\overline{\beta}_t} - \overline{\alpha}_t \dot{\overline{\alpha}}_t = \frac{(1-t)^2}{t} + (1-t) = \frac{1-t}{t}.
    \]
    The rest of the proposition easily follows.
\end{proof}

Recall \cref{def:u_and_c}.
\DefUAndC*

For the remaining proofs we will often reference the following lemma with some useful algebraic identities.
\begin{lemma}
    \label{lemma:u_and_c}%
    For $t \in \interval[open left]{0}{1}$ the following identities hold.
    \begin{align}
        1 - u_t &= \frac{\alpha_t}{c_t}, \label{eq:1_minus_u} \\
        \frac{\dot{u}_t}{u_t} &= \frac{\dot{\beta}_t}{\beta_t} - \frac{\dot{c}_t}{c_t} = \frac{\eps^*_t}{\alpha_t c_t}. \label{eq:dot_u_over_u}
    \end{align}
\end{lemma}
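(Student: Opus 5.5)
The plan is to verify both identities by direct algebra from the definitions $c_t = \alpha_t + \beta_t$ and $u_t = \beta_t / c_t$ in \cref{def:u_and_c}. On $\interval[open left]{0}{1}$ we have $\beta_t > 0$, and hence $c_t > 0$ and $u_t > 0$, so every quotient below is well-defined.

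For \eqref{eq:1_minus_u} I would compute directly:
\[
1 - u_t = \frac{c_t - \beta_t}{c_t} = \frac{\alpha_t}{c_t}.
\]

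For the first equality in \eqref{eq:dot_u_over_u}, I would differentiate $\log u_t = \log \beta_t - \log c_t$, which immediately gives
\[
\frac{\dot{u}_t}{u_t} = \frac{\dot{\beta}_t}{\beta_t} - \frac{\dot{c}_t}{c_t}.
\]

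For the second equality, I would put the right-hand side over a common denominator:
\[
\frac{\dot{\beta}_t}{\beta_t} - \frac{\dot{c}_t}{c_t} = \frac{\dot{\beta}_t(\alpha_t + \beta_t) - \beta_t(\dot{\alpha}_t + \dot{\beta}_t)}{\beta_t c_t} = \frac{\alpha_t\dot{\beta}_t - \beta_t \dot{\alpha}_t}{\beta_t c_t}.
\]
Then I would compare this with $\eps^*_t/(\alpha_t c_t)$, using the rewriting
\[
\eps^*_t = \frac{\alpha_t\bigl(\alpha_t \dot{\beta}_t - \beta_t\dot{\alpha}_t\bigr)}{\beta_t},
\]
which follows from \cref{def:eps_optimal} by factoring out $\alpha_t/\beta_t$. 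Dividing this by $\alpha_t c_t$ reproduces exactly the expression above.

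The only real care point is this final division by $\alpha_t$. It is valid on $(0,1)$, where $\alpha_t > 0$. At $t = 1$ we have $\alpha_1 = 0$, so I would read the identity $\dot{u}_t/u_t = \eps^*_t/(\alpha_t c_t)$ there as a limit, or equivalently in its cancelled form $(\alpha_t\dot{\beta}_t - \beta_t\dot{\alpha}_t)/(\beta_t c_t)$, which is well-defined and continuous at $t = 1$. I do not expect any genuine obstacle beyond this bookkeeping.
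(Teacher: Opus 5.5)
Your proposal is correct and follows essentially the same route as the paper: direct algebra from $c_t = \alpha_t + \beta_t$ and $u_t = \beta_t/c_t$. Your log-derivative is the paper's quotient-rule computation, and your common-denominator step matches its computation of $c_t \dot{u}_t/u_t$. Your remark that $\eps^*_t/(\alpha_t c_t)$ is of the form $0/0$ at $t=1$ (since $\eps^*_1 = \alpha_1 = 0$), and so must be read as a limit equal to $-\dot{\alpha}_1$, is a fair refinement that the paper's proof passes over silently.
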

\begin{proof}
    \cref{eq:1_minus_u} is trivial:
    \[
        1 - u_t = \frac{c_t - \beta_t}{c_t} = \frac{\alpha_t}{c_t}.
    \]
    To get the leftmost equality in \cref{eq:dot_u_over_u}, observe that
    \[
        \dot{u}_t = \frac{\dot{\beta}_t c_t - \beta_t \dot{c}_t}{c_t^2},
    \]
    so
    \[
        \frac{\dot{u}_t}{u_t} = \frac{\dot{\beta}_t c_t - \beta_t \dot{c}_t}{c_t \beta_t} = \frac{\dot{\beta}_t}{\beta_t} - \frac{\dot{c}_t}{c_t}.
    \]
    Finally,
    \[
        c_t \frac{\dot{u}_t}{u_t} = \alpha_t \frac{\dot{\beta}_t}{\beta_t} + \dot{\beta}_t - \dot{\alpha}_t - \dot{\beta}_t = \frac{\eps_t^*}{\alpha_t},
    \]
    which proves the rightmost equality in \cref{eq:dot_u_over_u}.
\end{proof}

\PropInterConversion*
\begin{proof}
    Using \cref{eq:1_minus_u} we get
    \begin{align*}
        \overline{\eta}_Z \ps{u_t, \frac{x}{c_t}} &= \ex\bs{Z \given (1 - u_t) Z + u_t X = \frac{x}{c_t}} \\
        &= \ex\bs{Z \given \frac{\alpha_t}{c_t} Z + \frac{\beta_t}{c_t} X = \frac{x}{c_t}} \\
        &= \ex\bs{Z \given \alpha_t Z + \beta_t X = x} \\
        &= \eta_Z (t, x).
    \end{align*}
    An analogous derivation proves $\eta_X (t, x) = \overline{\eta}_X \ps{u_t, x/c_t}$.

    From the relation $s(t, x) = -(1/\alpha_t) \eta_Z (t, x)$ (as in \cref{prop:intra_conversion}) combined with $\eta_Z (t, x) = \overline{\eta}_Z \ps{u_t, x/c_t}$ we get
    \[
        \overline{s}\ps{u_t, \frac{x}{c_t}} = -\frac{1}{1-u_t} \overline{\eta}_Z \ps{u_t, \frac{x}{c_t}} = -\frac{1}{1 - u_t} \eta_Z (t, x).
    \]
    Observing that $1/(1-u_t) = c_t/\alpha_t$ we get
    \[
        \frac{1}{c_t} \overline{s}\ps{u_t, \frac{x}{c_t}} = -\frac{1}{\alpha_t} \eta_Z (t, x) = s(t, x),
    \]
    as claimed.

    Getting the final relation requires a bit more work, but the overall strategy is the same as for the above; we first convert the drift to the score, then the score to the linear score, and then the linear score to the linear drift using the particular linear diffusion scale $\overline{\eps}_{u_t} = (\alpha_t \eps_t)/(\beta_t \eps^*_t)$. To be precise, we first apply \cref{prop:intra_conversion}, then the relation we just proved, and then \cref{prop:intra_conversion} again to get
    \begin{align*}
        b^\eps (t, x) &= (\eps^*_t + \eps_t) s(t, x) + \frac{\dot{\beta}_t}{\beta_t}x \\
        &= \frac{\eps^*_t + \eps_t}{c_t} \overline{s} (u_t, \frac{x}{c_t}) + \frac{\dot{\beta}_t}{\beta_t}x \\
        &= \frac{\eps^*_t + \eps_t}{c_t} \ps{\frac{\overline{b}^{\overline{\eps}} \ps{u_t, \frac{x}{c_t}} - \frac{x}{u_t c_t}}{\overline{\eps}_{u_t}^* + \overline{\eps}_{u_t}}} + \frac{\dot{\beta}_t}{\beta_t}x.
    \end{align*}
    We now plug in the linear diffusion scale $\overline{\eps}_{u_t} = (\alpha_t \eps_t)/(\beta_t \eps^*_t)$, which we motivate in the proof of \cref{theorem:path_transform_sde}. By \cref{prop:linear_eps_optimal} and \cref{eq:1_minus_u} we get $\overline{\eps}_{u_t}^* = (1-u_t)/u_t = \alpha_t/\beta_t$ so that
    \[
        \overline{\eps}_{u_t}^* + \overline{\eps}_{u_t} = \frac{\alpha_t}{\beta_t}\ps{\frac{\eps_t^* + \eps_t}{\eps_t^*}}.
    \]
    Thus
    \begin{align*}
         b^\eps (t, x) &= \frac{\beta_t \eps_t^*}{\alpha_t c_t} \ps{\overline{b}^{\overline{\eps}} \ps{u_t, \frac{x}{c_t}} - \frac{x}{u_t c_t}} + \frac{\dot{\beta}_t}{\beta_t}x \\
         &= c_t \dot{u}_t \ps{\overline{b}^{\overline{\eps}} \ps{u_t, \frac{x}{c_t}} - \frac{x}{u_t c_t}} + \frac{\dot{\beta}_t}{\beta_t}x \tag{using \cref{eq:dot_u_over_u}} \\
         &= c_t \dot{u}_t \overline{b}^{\overline{\eps}} \ps{u_t, \frac{x}{c_t}} + \ps{\frac{\dot{\beta}_t}{\beta_t} - \frac{\dot{u}_t}{u_t}}x \\
         &= c_t \dot{u}_t \overline{b}^{\overline{\eps}} \ps{u_t, \frac{x}{c_t}} + \frac{\dot{c}_t}{c_t}x \tag{using \cref{eq:dot_u_over_u}}.
    \end{align*}
    This concludes the proof.
\end{proof}

\TheoremPathTransformSDE*
\begin{proof}
    First of all we note that $\overline{W}$ is indeed a Wiener process in reparameterized time $u = u_t$ by Lévy's characterization theorem (see e.g.~Theorem 5.12 in \citet{LeGall2016Brownian}) since it has quadratic variation $\quadraticvar{\overline{W}}_u = \int_0^{t_u} \dot{u}_s \odif{s} = \int_0^u \odif{u} = u$. Thus both SDEs in the theorem statement are well-defined with unique solutions, so it suffices to prove that their dynamics coincide i.e.~that $\odif{X_t^\eps} = \odif{\ps{c_t \overline{X}_{u_t}^{\overline{\eps}}}}$ for all $t \in \interval[open]{0}{1}$. By the Itô product rule, the latter infinitesimal is
    \begin{align*}
        \odif{\ps{c_t \overline{X}_{u_t}^{\overline{\eps}}}} &= \dot{c}_t \overline{X}_{u_t}^{\overline{\eps}} \odif{t} + c_t \overline{b}^{\overline{\eps}} \ps{u_t, \overline{X}_{u_t}^{\overline{\eps}}} \odif{u_t} + c_t \sqrt{2 \overline{\eps}_{u_t}} \odif{\overline{W}_{u_t}} \\
        &= \dot{c}_t \overline{X}_{u_t}^{\overline{\eps}} \odif{t} + c_t \dot{u}_t \overline{b}^{\overline{\eps}} \ps{u_t, \overline{X}_{u_t}^{\overline{\eps}}} \odif{t} + c_t \sqrt{2 \overline{\eps}_{u_t} \dot{u}_t}\odif{W_t}.
    \end{align*}
    Using the linear diffusion scale $\overline{\eps}_{u_t} = (\alpha_t \eps_t)/(\beta_t \eps_t^*)$ with \cref{eq:dot_u_over_u} gives
    \[
        \overline{\eps}_{u_t} = \frac{u_t}{\dot{u}_t} \frac{\eps_t}{\beta_t c_t} = \frac{\eps_t}{c_t^2 \dot{u}_t},
    \]
    which we plug into the above infinitesimal to arrive at
    \[
        \odif{\ps{c_t \overline{X}_{u_t}^{\overline{\eps}}}} = \dot{c}_t \overline{X}_{u_t}^{\overline{\eps}} \odif{t} + c_t \dot{u}_t \overline{b}^{\overline{\eps}} \ps{u_t, \overline{X}_{u_t}^{\overline{\eps}}} \odif{t} + \sqrt{2 \eps_t} \odif{W_t}.
    \]
    Using now \cref{prop:inter_conversion} we see that
    \[
        \odif{X_t^\eps} = b^\eps (t, X_t^\eps) \odif{t} + \sqrt{2 \eps_t} \odif{W}_t = \frac{\dot{c}_t}{c_t}X^\eps_t \odif{t} + c_t \dot{u}_t \overline{b}^{\overline{\eps}} \ps{u_t, \frac{X^\eps_t}{c_t}} \odif{t} + \sqrt{2 \eps_t} \odif{W_t},
    \]
    which after inserting the theorem's ansatz $X_t^\eps/c_t = \overline{X}_{u_t}^{\overline{\eps}}$ shows that $\odif{X_t^\eps} =  \odif{\ps{c_t \overline{X}_{u_t}^{\overline{\eps}}}}$. This concludes the proof.
\end{proof}

To prove \cref{theorem:point_mass_generation} we first state and prove a useful lemma.
\begin{lemma}
    \label{lemma:u_0_cond}
    For any point mass schedule $(\alpha, \beta)$ it holds that
    \[
        u_0 \define \lim_{t \to 0_+} u_t = 0 \iff \beta_t = o(\alpha_t) \quad \text{as} \quad t \to 0_+.
    \]
\end{lemma}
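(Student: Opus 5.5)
We prove the equivalence directly from the definition $u_t = \beta_t/c_t = \beta_t/(\alpha_t + \beta_t)$ in \cref{def:u_and_c}. For a point mass schedule, $\alpha_t > 0$ on $(0,1)$. Moreover $\beta_t > 0$ on $(0,1]$, since $\beta_0 = 0$ and $\dot{\beta}_t > 0$ on $(0,1)$. Hence for $t \in (0,1)$ we may divide numerator and denominator by $\beta_t$ and write
\[
    u_t = \frac{1}{1 + \alpha_t/\beta_t} = \frac{\beta_t/\alpha_t}{1 + \beta_t/\alpha_t}.
\]
The whole proof reduces to the behaviour of the ratio $r_t \define \beta_t/\alpha_t \in (0, \infty)$ as $t \to 0_+$. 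Indeed, $u_t = g(r_t)$ with $g(r) = r/(1+r)$, and $g$ is a continuous, strictly increasing bijection from $[0,\infty)$ onto $[0,1)$ with continuous inverse $g^{-1}(v) = v/(1-v)$.

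For ($\Leftarrow$), assume $\beta_t = o(\alpha_t)$, i.e.~$r_t \to 0$ as $t \to 0_+$. Continuity of $g$ at $0$ then gives $u_t = g(r_t) \to g(0) = 0$, so the limit $u_0$ exists and equals $0$.

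For ($\Rightarrow$), assume $u_t \to 0$. Then $u_t < 1$ for all small $t$, so $r_t = g^{-1}(u_t) = u_t/(1-u_t)$ is well defined, and this expression tends to $0$. Hence $\beta_t/\alpha_t \to 0$, which is exactly $\beta_t = o(\alpha_t)$.

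The only delicate point is that the limit $u_0$ is not assumed to exist a priori. This is handled because each direction either assumes the limit exists or produces it. One must also check that the division is legitimate: $\alpha_t > 0$ and $\beta_t > 0$ for $t \in (0,1)$ give $c_t > 0$ there, so $u_t$ and $r_t$ are well defined on a punctured right-neighbourhood of $0$.

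The remaining hypotheses of \cref{def:point_mass_schedule} are not needed. This lemma only justifies the word ``equivalently'' in condition~3 of that definition.
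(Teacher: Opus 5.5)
Your proof is correct and follows essentially the same route as the paper. The paper rewrites $u_t = 1/(1 + \alpha_t/\beta_t)$ and observes that this tends to $0$ exactly when $\alpha_t/\beta_t \to \infty$, i.e.\ $\beta_t = o(\alpha_t)$; your homeomorphism $g(r) = r/(1+r)$ makes explicit the positivity of $\alpha_t, \beta_t$ near $0$ and the continuity argument in both directions that the paper leaves implicit.
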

\begin{proof}
    We have that
    \[
        u_0 \define \lim_{t \to 0_+} u_t = \lim_{t \to 0_+} \frac{\beta_t}{\alpha_t + \beta_t} = \lim_{t \to 0_+} \frac{1}{1 + \frac{\alpha_t}{\beta_t}}.
    \]
    Clearly this limit is zero if and only if the ratio $\alpha_t/\beta_t \to \infty$ as $t \to 0_+$, or equivalently if and only if $\beta_t = o(\alpha_t)$ as $t \to 0_+$.
\end{proof}

\TheoremPointMassGeneration*
\begin{proof}
    Since the point mass schedule from \cref{def:point_mass_schedule} only differs from the regular schedule from \cref{def:interpolant} at $t=0$ this is the only case we need to consider. In particular, \cref{theorem:path_transform_sde} holds on $\interval[open left]{0}{1}$ and also in the limit as $t \to 0_+$ since from the boundary conditions $\alpha_0 = \beta_0 = 0$ we have $c_0 = u_0 = 0$ so that $X_0^\eps = c_0 \overline{X}_0^{\overline{\eps}} = 0$ for all $(z, \omega) \in \reals^d \times \Omega$ with $z = \overline{X}_0^{\overline{\eps}}$ the initial condition to the linear interpolant and $\omega$ an event in the probability space of the shared Wiener process.
    
    Substituting now $c_t x$ for $x$ in \cref{prop:inter_conversion} gives the conversion formula
    \[
        b^\eps (t, c_t x) = \dot{c}_t x + c_t \dot{u}_t \overline{b}^{\overline{\eps}} \ps{u_t, x},
    \]
    for all $(t, x) \in \interval[open]{0}{1} \times \reals^d$ with $\overline{\eps}_{u_t} = \frac{\alpha_t \eps_t}{\beta_t \eps_t^*}$. Going to the limit gives us
    \begin{align*}
        \lim_{t \to 0_+} b^\eps(t, c_t x) = \lim_{t \to 0_+} \dot{c}_t x + \lim_{t \to 0_+} c_t \dot{u}_t \overline{b}^{\overline{\eps}} \ps{u_t, x},
    \end{align*}
    so it remains to prove that $\lim_{t \to 0_+} c_t \dot{u}_t \overline{b}^{\overline{\eps}} \ps{u_t, x} = - \lim_{t \to 0_+} (\eps_t / \alpha_t )x$ under our assumptions in \cref{def:point_mass_schedule}. To this end we first consider the expression for the drift of the linear interpolant,
    \begin{align*}
        \overline{b}^{\overline{\eps}} \ps{u_t, x} &= \ps{\overline{\eps}_{u_t}^* + \overline{\eps}_{u_t}} \overline{s} (u_t, x) + \frac{\dot{\overline{\beta}}_{u_t}}{\overline{\beta}_{u_t}} x \tag{by \cref{prop:intra_conversion}}\\
        &= \ps{\frac{1-u_t}{u_t} + \frac{\alpha_t \eps_t}{\beta_t \eps^*_t}} \ps{-\frac{\overline{\eta}_Z (u_t, x)}{1-u_t}} + \frac{1}{u_t} x \tag{by \cref{prop:intra_conversion} and \cref{prop:linear_eps_optimal}}\\
        &= -\frac{\alpha_t}{\beta_t} \ps{1 + \frac{\eps_t}{\eps_t^*}} \frac{\overline{\eta}_Z (u_t, x) c_t}  {\alpha_t} + \frac{c_t}{\beta_t} x \tag{by \cref{eq:1_minus_u}}\\
        &= \frac{c_t}{\beta_t} \ps{x - \overline{\eta}_Z (u_t, x) - \frac{\eps_t}{\eps_t^*}\overline{\eta}_Z (u_t, x)} \\
        &= \frac{c_t}{\beta_t} \ps{x - \overline{\eta}_Z (u_t, x)} - \frac{u_t \eps_t}{\dot{u}_t \alpha_t \beta_t} \overline{\eta}_Z (u_t, x) \tag{by \cref{eq:dot_u_over_u}}\\
        &= \frac{c_t}{\beta_t} \ps{x - \overline{\eta}_Z (u_t, x)} - \frac{\eps_t}{\dot{u}_t \alpha_t c_t} \overline{\eta}_Z (u_t, x).
    \end{align*}
    Multiplying by $c_t \dot{u}_t$ and going to the limit with the assumptions on the point mass schedule $(\alpha, \beta)$ from \cref{def:point_mass_schedule} finally gives
    \begin{align*}
        c_t \dot{u}_t \overline{b}^{\overline{\eps}} \ps{u_t, x} &= \frac{c_t^2 \dot{u}_t}{\beta_t} \ps{x - \overline{\eta}_Z (u_t, x)} - \frac{\eps_t}{\alpha_t} \overline{\eta}_Z (u_t, x)  \\
        &\to -\lim_{t \to 0_+} \frac{\eps_t}{\alpha_t} x \quad \text{as} \quad t \to 0_+,
    \end{align*}
    where to get the limit we used that $\overline{\eta}_Z (u_t, x) = \ex\bs{Z \given \overline{I}_{u_t} = x} \to x$ as $t \to 0_+$ since $u_0 = 0, \dot{u}_0 < \infty$ (and $\overline{I}_0 = Z$), as well as $c_t^2 \dot{u}_t / \beta_t = \dot{u}_t (\alpha_t^2 + \beta_t^2 + 2\alpha_t \beta_t) / \beta_t \to \alpha_t^2 \dot{u}_t/\beta_t \to C$ as $t \to 0_+$ for some non-negative bounded constant $C \in \reals_+$ since $\dot{u}_0 < \infty, \alpha_0 = \beta_0 = 0$, and since the point mass schedule is assumed to satisfy $\alpha_t^2 = O(\beta_t)$ as $t \to 0_+$. This concludes the proof.
    
    We note that the point mass schedule assumption $\alpha_t^2 = O(\beta_t)$ as $t \to 0_+$ can be relaxed, but this comes at the expense of requiring $\overline{\eta}_Z (u_t, x)$ to converge sufficiently fast to $x$ as $t \to 0_+$ so that the limiting product $\lim_{t \to 0_+} ((\alpha_t^2 \dot{u}_t)/\beta_t) (x - \overline{\eta}_Z(u_t, x))$ still goes to zero. To the best of our knowledge, this requires assuming that the data distribution $\rho_X$ has sufficiently light tails; see Assumption B.3 and Theorem B.4 in \citet{probabilistic_forecasting}. We feel like our assumption is simpler, and for what is done in this paper the assumption $\alpha_t^2 = O(\beta_t)$ as $t \to 0_+$ is not prohibitively restrictive.
\end{proof}

\PropKLDivInvariance*
\begin{proof}
    This is essentially already proved in Proposition 2.5 in \citet{lipschitz_schedules} whose proof we restate below using the notation of this paper. Recall that under the assumptions in \cref{theorem:eps_opt_kl_div}, the minimum Kullback-Leibler divergence between the path measures of $X^\eps$ and $\hat{X}^\eps$ over the diffusion scale $\eps$ is
    \[
        \kldiv\ps{X^{\eps^*}, \hat{X}^{\eps^*}} = \int_0^1 \eps^*_t \ex\bs{\norm{s (t, I_t) - \hat{s} (t, I_t)}^2} \odif{t}.
    \]
    Define now the time change $\eta: \unitinterval \to \extreals_+$ by $\eta_t \define \alpha_t/\beta_t$, let $\rho_\eta$ be the probability density function of the random variable $Y = X + \eta Z$ for $\eta \in \interval{0}{\infty}$ and denote by $s (\eta, y)$ the score of $Y$\footnote{We are abusing notation here, but in this proof it will always be clear from the argument ($t$ or $\eta$) whether we are talking about the density of $I$ or $Y$.}. Since from \cref{def:interpolant} we can write $I_t = \alpha_t Z + \beta_t X = \beta_t (X + \eta_t Z) = \beta_t Y$ we see that the scores are related as
    \[
        s(t, x) = \grad \log \rho_t (x) = \frac{1}{\beta_t} \grad \log \rho_\eta \ps{\frac{x}{\beta_t}} = \frac{1}{\beta_t} s \ps{\eta, \frac{x}{\beta_t}},
    \]
    and similarly we have that the score estimates are related as $\hat{s} (t, x) = \hat{s}(\eta, x/\beta)/\beta_t$. Substituting the score of $Y$ for the score of $I$ in the expression for the KL-minimum gives
    \begin{align*}
        \kldiv\ps{X^{\eps^*}, \hat{X}^{\eps^*}} &= \int_0^1 \frac{\eps^*_t}{\beta_t^2} \ex\bs{{\norm{s \ps{\eta_t, \frac{I_t}{\beta_t}} - \hat{s} \ps{\eta_t, \frac{I_t}{\beta_t}}}}^2} \odif{t} \\
        &= \int_0^1 \frac{\eps^*_t}{\beta_t^2} \ex\bs{{\norm{s \ps{\eta_t, Y} - \hat{s} \ps{\eta_t, Y}}}^2} \odif{t}. \\
    \end{align*}
    Noting that
    \[
        \frac{\eps_t^*}{\beta_t^2} = \frac{\alpha_t^2}{\beta_t^2} \ps{\frac{\dot{\beta}_t}{\beta_t} - \frac{\dot{\alpha}_t}{\alpha_t}} = - \frac{\alpha_t}{\beta_t} \odv{}{t} \ps{\frac{\alpha_t}{\beta_t}} = -\eta_t \dot{\eta}_t,
    \]
    we change the integration variable from $t$ to $\eta$ in the KL-expression to get
    \[
        \kldiv\ps{X^{\eps^*}, \hat{X}^{\eps^*}} = \int_{\eta_1}^{\eta_0} \eta \ex\bs{{\norm{s \ps{\eta, Y} - \hat{s} \ps{\eta, Y}}}^2} \odif{\eta} = \int_0^\infty \eta \ex\bs{{\norm{s \ps{\eta, Y} - \hat{s} \ps{\eta, Y}}}^2}\odif{\eta}.
    \]
    Crucially, we substituted $\eta_1 = 0$ but also $\eta_0 \define \lim_{t \to 0_+} \eta_t = \infty$ in the last expression, as in the original proof. We stress that $\eta_0 = \infty$ holds even for a point mass schedule since in \cref{def:point_mass_schedule} we require $\beta_t = o(\alpha_t)$ as $t \to 0_+$, which is equivalent to requiring $u_0 = 0$ by \cref{lemma:u_0_cond}. This requirement cannot be relaxed without making the initial law of the resulting interpolant dependent on $\rho_X$, which is what we are trying to sample in the first place. Since the final expression for the KL-minimum is independent of the schedule $(\alpha, \beta)$, the proof is concluded.
\end{proof}

\PropLazyScheduleFamilies*
\begin{proof}
    Under the assumption $X \sample \standardnormal$ it holds that $I_t \sample \normal(0, (\alpha_t^2 + \beta_t^2)\id)$ so that $s(t, x) = -x/(\alpha_t^2 + \beta_t^2)$. We first consider what this entails for the ODE case and then for the SDE case. It suffices to consider the condition $b(t, x) = 0$ for all $(t, x) \in \interval[open]{0}{1} \times \reals^d$ since by continuity the condition $b(t, x) = 0$ for all $(t, x) \in \interval{0}{1} \times \reals^d$ is then covered too.

    From \cref{prop:intra_conversion} with $\eps \equiv 0$ we see that requiring $b(t, x) = 0$ for all $(t, x) \in \interval[open]{0}{1} \times \reals^d$ is equivalent to requiring $\eps_t^* s(t, x) + \dot{\beta}_t/\beta_t x = 0$ for all $(t, x) \in \interval[open]{0}{1} \times \reals^d$. This is equivalent to
    \[
        \frac{\dot{\beta}_t}{\beta_t} - \frac{\eps^*_t}{\alpha_t^2 + \beta_t^2} = 0 \quad \forall t \in \interval[open]{0}{1},
    \]
    which by multiplying by $\alpha_t^2 + \beta_t^2$ is equivalent to
    \[
        \eps_t^* = \alpha_t^2 \frac{\dot{\beta}_t}{\beta_t} + \beta_t \dot{\beta}_t \quad \forall t \in \interval[open]{0}{1},
    \]
    which from \cref{def:eps_optimal} is again equivalent to
    \[
        \alpha_t \dot{\alpha}_t = -\beta_t \dot{\beta}_t \quad \forall t \in \interval[open]{0}{1}.
    \]
    This is also equivalent to
    \[
        \odv{}{t} \ps{\alpha_t^2 + \beta_t^2} = C \quad \forall t \in \interval[open]{0}{1},
    \]
    for some constant $C$. The boundary conditions $\alpha_1 = \beta_0 = 0, \alpha_0 = \beta_1 = 1$ then give $\alpha_t^2 + \beta_t^2 = 1$, proving the ODE case.

    Next, we consider the SDE case. Here we use \cref{prop:intra_conversion} with $\eps = \eps^*$ to see that requiring $b^*(t, x) = 0$ for all $(t, x) \in \interval[open]{0}{1} \times \reals^d$ is equivalent to requiring $2\eps_t^* s(t, x) + \dot{\beta}_t/\beta_t x = 0$ for all $(t, x) \in \interval[open]{0}{1} \times \reals^d$. This is equivalent to
    \[
        \frac{\dot{\beta}_t}{\beta_t} - \frac{2\eps^*_t}{\alpha_t^2 + \beta_t^2} = 0 \quad \forall t \in \interval[open]{0}{1},
    \]
    which by multiplying by $\alpha_t^2 + \beta_t^2$ is equivalent to
    \[
        2\eps_t^* = \alpha_t^2 \frac{\dot{\beta}_t}{\beta_t} + \beta_t \dot{\beta}_t \quad \forall t \in \interval[open]{0}{1},
    \]
    which from \cref{def:eps_optimal} is again equivalent to
    \[
        \alpha_t^2 \frac{\dot{\beta}_t}{\beta_t} - 2\alpha_t \dot{\alpha}_t = \beta_t \dot{\beta}_t \quad \forall t \in \interval[open]{0}{1}.
    \]
    Denoting $a(t) \define \alpha_t^2$ and observing that $\dot{a}(t) = 2\alpha_t \dot{\alpha}_t$ this is a first-order linear inhomogeneous ODE of the form
    \[
        \dot{a}(t) - \frac{\dot{\beta}_t}{\beta_t} a(t) = -\beta_t \dot{\beta}_t.
    \]
    Its solutions are of the form $a(t) = C \beta_t - \beta_t^2$. Imposing the boundary conditions $\alpha_1 = 0, \beta_1 = 1$ gives $C = 1$ so that $\alpha_t = \sqrt{\beta_t (1 - \beta_t)}$, or equivalently $\alpha_t^2 + \beta_t^2 = \beta_t$, as claimed.

    Note that $\beta_t/\alpha_t = \sqrt{\beta_t}/\sqrt{1-\beta_t} \to 0$ as $t \to 0_+$, or equivalently $\beta_t = o(\alpha_t)$ as $t \to 0_+$, as claimed. Similarly, $\alpha_t^2/\beta_t = 1-\beta_t \to 1$ as $t \to 0_+$ so that $\alpha_t^2 = O(\beta_t)$ as $t \to 0_+$. Differentiating yields
    \[
        \dot{u}_t = \odv{}{t} \ps{\frac{\beta_t}{\sqrt{\beta_t(1-\beta_t)} + \beta_t}} = \frac{\dot{\beta}_t}{2\sqrt{\beta_t(1-\beta_t)}\ps{1+2\sqrt{\beta_t(1-\beta_t)}}} \to \lim_{t \to 0_+} \frac{\dot{\beta}_t}{2\sqrt{\beta_t}} \quad \text{as} \quad t \to 0_+,
    \]
    which proves that $\dot{u}_0 < \infty$ if and only if $\dot{\beta}_t = O(\sqrt{\beta}_t)$ as $t \to 0_+$, as claimed. In that case we have shown that the schedule is indeed a point mass schedule as per \cref{def:point_mass_schedule}.
    
    Finally, we prove the claimed identity for $\eps^*_t$. We observe that $\dot{\alpha}_t = \dot{\beta}_t (1 - 2\beta_t)/(2\alpha_t)$ so that
    \[
        \eps^*_t = \alpha_t^2 \frac{\dot{\beta}_t}{\beta_t} - \alpha_t \dot{\alpha}_t = \dot{\beta}_t (1 - \beta_t) - \frac{1}{2} \dot{\beta}_t (1 - 2\beta_t) = \frac{1}{2} \dot{\beta}_t,
    \]
    as claimed. It follows that also $\int_s^t 2 \eps^*_u \odif{u} = \beta_t - \beta_s$ for any $0 \leq s \leq t \leq 1$.
\end{proof}

\PropODEOptScheduleTransform*
\begin{proof}
    Observing that $c_t = 1/\sqrt{d_t}, \dot{c}_t = (1 - 2t)/(d_t \sqrt{d_t}), u_t = t, \dot{u}_t = 1$ and applying \cref{prop:inter_conversion} with $\eps \equiv 0$ so that also $\overline{\eps} \equiv 0$ gives the conversion formula
    \[
        b (t, x) = \frac{1 - 2t}{d_t} x + \frac{1}{\sqrt{d_t}} \overline{b} (t, \sqrt{d_t}x).
    \]
    In particular, since $d_0 = 1$ and $u_0 = 0$ we get
    \[
        b(0, z) = z + \overline{b} (0, z) = z + \ex\bs{X \given Z = z} - \ex\bs{Z \given Z = z} = \ex\bs{X}.
    \]
    Note that all equations are well-defined even when $t \in \set{0, 1}$. This concludes the proof.
\end{proof}

\PropSDEOptScheduleTransform*
\begin{proof}
    Observing that $c_t = t/d_t, \dot{c}_t = (1 - 2t^2)/d_t^2, u_t = t, \dot{u}_t = 1$ and applying \cref{prop:inter_conversion} with $\eps = \eps^*$ so that
    \[
        \overline{\eps}_{u_t} = \overline{\eps}_t = \frac{\alpha_t}{\beta_t} = \frac{1-t}{t} = \overline{\eps}^*_t
    \]
    gives the conversion formula
    \begin{align*}
        b^* (t, x) = \frac{1-2t^2}{t d_t} x + \frac{t}{d_t} \overline{b}^* \ps{t, \frac{d_t}{t}x},
    \end{align*}
    which is valid for $t \in \interval[open left]{0}{1}$. With \cref{prop:intra_conversion} we can convert the statistically optimal SDE drift $\eps = \eps^*$ to the ODE velocity ($\eps \equiv 0$) , which when applied to the linear interpolant evaluated in $(t, (d_t/t)x)$ reads
    \[
        \overline{b}^* \ps{t, \frac{d_t}{t}x} = 2 \overline{b} \ps{t, \frac{d_t}{t}x} - \frac{\dot{\overline{\beta}}_t}{\overline{\beta}_t} \frac{d_t}{t}x = 2 \overline{b} \ps{t, \frac{d_t}{t}x} - \frac{d_t}{t^2}x.
    \]
    Inserting this into the expression for $b^* (t, x)$ gives
    \[
        b^* (t, x) = \frac{1-2t^2 - d_t}{t d_t}x + \frac{2t}{d_t} \overline{b} \ps{t, \frac{d_t}{t}x} = \frac{2}{d_t}\ps{(1-2t)x + t \overline{b} \ps{t, \frac{d_t}{t}x}},
    \]
    which is still only valid for $t \in \interval[open left]{0}{1}$. This proves the first part of the proposition. To analyze the case $t=0$ we substitute $(t/d_t)x$ for $x$ to get
    \[
        b^* \ps{t, \frac{t}{d_t}x} = \frac{2t}{d_t^2}(1-2t)x + \frac{2t}{d_t} \overline{b} (t, x) \to 0 \quad \text{as} \quad t \to 0_+,
    \]
    since $d_0 = 1$ and $\overline{b} (0, x) = \ex\bs{X} - x < \infty$ for all $x \in \reals^d$. As also $(t/d_t)x \to 0$ as $t \to 0_+$ we see that the initial drift satisfies $b^* (0, 0) = 0$, as claimed. This concludes the proof.
\end{proof}

\pagebreak
\section{Relationship to flows and diffusions}
In this section we rigorously describe how the stochastic interpolant from \cref{def:interpolant} relates to the objects studied in flow and diffusion models, respectively. The relationship to diffusion models is analyzed in section 5.1 in \cite{stochastic_interpolants} but to the best of our knowledge the particular connection to the statistically optimal diffusion scale $\eps^*$ (see \cref{def:eps_optimal} and \cref{theorem:eps_opt_kl_div}) as presented in \cref{prop:eps_opt_gives_OU_rev_process} has not been pointed out before.

First we prove some fundamental properties of the optimal diffusion scale $\eps^*$.
\begin{proposition}
    \label{prop:eps_opt}
    For any schedule $(\alpha, \beta)$ satisfying the criteria in \cref{def:interpolant}, $\eps^*$ from \cref{def:eps_optimal} satisfies
    \begin{align*}
    \eps_0^* &\define \lim_{t\rightarrow 0_+} \eps^*_t = \infty \quad \text{if } \lim_{t \to 0_+} \frac{\dot{\beta}_t}{\beta_t} \text{ exists}, \\
    \eps^*_t &\in \interval[open]{0}{\infty} \quad \forall t \in \interval[open]{0}{1}, \\
    \eps^*_1 &= 0, \\
    \int_0^t \eps^*_s \odif{s} &= \infty \quad \forall t \in \interval[open left]{0}{1}.
    \end{align*}
\end{proposition}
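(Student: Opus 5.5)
I would prove the four claims separately, working from the definition $\eps^*_t = \alpha_t^2 \dot{\beta}_t/\beta_t - \alpha_t \dot{\alpha}_t$ and the boundary and monotonicity conditions of \cref{def:interpolant}.

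\emph{Positivity on $(0,1)$.} For $t \in \interval[open]{0}{1}$ we have $\alpha_t > 0$ and $\beta_t > 0$. This follows from strict monotonicity together with the boundary values: $\alpha$ is strictly decreasing from $1$ to $0$, and $\beta$ is strictly increasing from $0$ to $1$. Moreover $\dot{\beta}_t > 0$ and $\dot{\alpha}_t < 0$. So both terms $\alpha_t^2\dot{\beta}_t/\beta_t$ and $-\alpha_t\dot{\alpha}_t$ are strictly positive, and they are finite because the derivatives are bounded.

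\emph{Value at $t=1$.} Here $\alpha_1 = 0$, $\beta_1 = 1$ and $\dot{\beta}_1$ is finite, so both terms vanish and $\eps^*_1 = 0$.

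\emph{Limit at $t=0$.} The term $-\alpha_t\dot{\alpha}_t$ stays bounded as $t\to0_+$, since $\dot{\alpha}$ is bounded. As $\alpha_t^2 \to 1$, it suffices to show $\dot{\beta}_t/\beta_t \to \infty$. By hypothesis $L = \lim_{t\to0_+}\dot{\beta}_t/\beta_t$ exists in the extended reals. Suppose $L < \infty$. Then $\frac{d}{dt}\log\beta_t$ is bounded by some $M$ on $(0,\delta]$. Integrating from $t$ to $\delta$ gives $\log\beta_t \ge \log\beta_\delta - M\delta$, so $\beta_t$ is bounded away from $0$ near $0$. This contradicts $\beta_0 = 0$ together with continuity of $\beta$. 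Hence $L = \infty$ and $\eps^*_0 = \infty$.

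\emph{Divergence of the integral.} This is the main point, and it does not need the existence of the limit. For $s \in (0,t]$ I bound
\[
\eps^*_s \ge \alpha_s^2\,\dot{\beta}_s/\beta_s \ge m_t\,\frac{d}{ds}\log\beta_s,
\]
where $m_t = \min_{[0,t]}\alpha^2 = \alpha_t^2 > 0$ because $\alpha$ is decreasing and $t<1$. The first inequality uses $-\alpha_s\dot{\alpha}_s \ge 0$. Integrating over $[r,t]$ gives
\[
\int_r^t \eps^*_s \odif{s} \ge \alpha_t^2\bigl(\log\beta_t - \log\beta_r\bigr) \to \infty \quad \text{as } r\to 0_+,
\]
since $\beta_r \to 0$. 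By monotone convergence (the integrand is nonnegative), $\int_0^t \eps^*_s \odif{s} = \infty$. The case $t=1$ follows since the integral over $[0,1]$ dominates the integral over $[0,1/2]$.

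The only subtle step is the endpoint behaviour at $t=0$. The pointwise limit genuinely needs the hypothesis that $\lim \dot{\beta}_t/\beta_t$ exists, because a $\limsup/\liminf$ mismatch could otherwise occur. The integral statement avoids this issue entirely via the logarithmic-derivative bound.
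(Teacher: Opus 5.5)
Your proof is correct and follows the same route as the paper. Positivity and $\eps^*_1=0$ come directly from the schedule conditions, the limit at $0$ comes from $\dot\beta_t/\beta_t=\odv{}{t}\log\beta_t\to\infty$, and non-integrability comes from the bound $\eps^*_s\geq\alpha_t^2\odv{}{s}\log\beta_s$ on $(0,t]$. You are in fact a bit more careful than the paper in two places: the paper only asserts that $\odv{}{t}\log\beta_t\to\infty$, whereas you prove it, and the paper's bound $\alpha_t^2(\log\beta_t-\log\beta_s)$ degenerates to $0\cdot\infty$ at $t=1$, a case you handle correctly by comparing with the integral over $[0,1/2]$.
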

\begin{proof}
    We have
    \[
        \eps_0^* \define \lim_{t \to 0_+} \eps^*_t = \lim_{t \to 0_+} \ps{\alpha_t^2 \frac{\dot{\beta}_t}{\beta_t} - \alpha_t \dot{\alpha}_t} = \lim_{t \to 0_+} \frac{\dot{\beta}_t}{\beta_t} - \dot{\alpha}_0 = \lim_{t \to 0_+} \odv{}{t} \log(\beta_t) - \dot{\alpha}_0 = \infty,
    \]
    if $\lim_{t \to 0_+} (\dot{\beta}_t/\beta_t)$ exists since $\alpha_0 = 1, \abs{\dot{\alpha}_0} < \infty, \beta_0 = 0$ as well as $\beta_t > 0$ for $t > 0$. Strict positivity of $\eps^*$ for $t \in \interval[open]{0}{1}$ follows from the monotonicity assumptions $\dot{\alpha}_t < 0$ and $\dot{\beta}_t > 0$ for all $t \in \interval[open]{0}{1}$. $\eps^*_1 = 0$ follows from $\alpha_1 = 0, \beta_1 = 1, \abs*{\dot{\alpha}_1} < \infty, \abs*{\dot{\beta}_1} < \infty$.

    To see that $\eps^*$ is not integrable, we use $\dot{\alpha}_t < 0$ for $t \in \interval[open]{0}{1}$ to bound $\eps_t^* \geq \alpha_t^2 \odv{}{t} \log(\beta_t)$ and write
    \[
        \int_0^t \eps_u^* \odif{u} = \lim_{s \to 0_+} \int_s^t \eps_u^* \odif{u} \geq \alpha_t^2 \lim_{s \to 0_+} \int_s^t \odv{}{u} \log(\beta_u) \odif{u} = \alpha_t^2 (\log(\beta_t) - \lim_{s \to 0_+}\log(\beta_s)) = \infty \quad \forall t \in \interval[open left]{0}{1},
    \]
    since $\beta_0 = 0$ and $\beta \in C^1(\unitinterval)$. This concludes the proof.
\end{proof}

\subsection{Flow models} \label{sec:relation_to_flow_models}
In a flow model (or more precisely a \emph{flow matching model}) one considers the time-dependent random variable $(1-t)Z + t X$ for $Z \sample \standardnormal$ and $X \sample \rho_X$ independent with $t \in \unitinterval$. One then utilizes that the ODE
$$
    \odv{}{t} Y^\text{flow}_t = v^\text{flow}(t, Y^\text{flow}_t)
$$
with velocity field
$$
    v^\text{flow} (t, x) \define \ex\bs{X \given (1-t)Z + t X} - \ex\bs{Z \given (1-t)Z + t X}
$$
for $(t, x) \in \unitinterval \times \reals^d$ generates $\rho_X$, i.e.~$Y^\text{flow}_1 \sample \rho_X$ whenever $Y^\text{flow}_0 \sample \standardnormal$. One sees that this corresponds exactly to the linear interpolant from \cref{def:linear_interpolant}, i.e.~$\overline{I}_t = (1-t)Z + t X$ and
$$
    \overline{b} (t, x) = \ex\bs{\dot{\overline{I}}_t \given \overline{I}_t = x} = v^\text{flow} (t, x)
$$
for all $(t, x) \in \unitinterval \times \reals^d$, where we used the notation $\overline{b} \define \overline{b}^{\overline{\eps} \equiv 0}$ for the ODE velocity of the linear interpolant from \cref{def:linear_interpolant}.

\subsection{Diffusion models}
In a variance-preserving diffusion model (or more precisely a \emph{score-based generative model}) one corrupts data $\tilde{Y}_0 = X \sample \rho_X$ into noise $\tilde{Y}_\infty = Z \sample \standardnormal$ through an Ornstein-Uhlenbeck process run from time $0$ to $\infty$ (or some finite but large $T$ in practice which introduces a bias),
\[
  \odif{\tilde{Y}_s} = -f(s)\tilde{Y}_s \odif{s} + g(s)\odif{\tilde{W}_s},
\]
with smooth and non-negative $f,g$ satisfying
\[
  \int_0^\infty f(s)\odif{s} = \int_0^\infty g(s)\odif{s} = \infty,
\]
so that the stationary distribution is indeed standard Gaussian, i.e.~$\tilde{Y}_\infty \sample \standardnormal$.

One then uses Anderson's time reversal \cite{anderson_time_reversal} to construct the unique reverse-time process run from $\infty$ (or $T$) to $0$ that turns noise into data with the same joint distribution as the forward process. This reverse-time process depends on the score function of the Ornstein-Uhlenbeck process with initial distribution $\rho_X$.

To compare diffusion models to flow models and stochastic interpolants one must compactify time via some $C^1$ monotonically increasing bijection $\phi : \interval[open right]{0}{\infty} \to \interval[open right]{0}{1}$ satisfying $\phi(0)=0$ and $\lim_{s\to\infty}\phi(s)=1$, with inverse $\psi \define \phi^{-1} : \interval[open right]{0}{1} \to \interval[open right]{0}{\infty}$. One can then define $Y_\tau \define \tilde{Y}_{\psi(\tau)}$ for $\tau \in \interval[open right]{0}{1}$, and by a limiting argument extend to $\tau\in \unitinterval$. Using similar standard stochastic calculus techniques as in the proof of \cref{theorem:path_transform_sde} one gets that
\[
  \odif{Y_\tau}
  = -f(\psi(\tau))\psi'(\tau)\,Y_\tau \odif{\tau}
    + g(\psi(\tau))\sqrt{\psi'(\tau)} \odif{W_\tau},
\]
with
\[
  W_\tau \define \int_0^{\psi(\tau)} \sqrt{\phi'(s)} \odif{\tilde{W}_s}.
\]

In particular, if we take $\phi(s)=1-\exp(-s)$ so that $\psi(\tau)=-\log(1-\tau)$ and $\psi'(\tau)=1/(1-\tau)$, we get the Ornstein-Uhlenbeck process
\[
  \odif{Y_\tau}
  = -\frac{f(-\log(1-\tau))}{1-\tau}\,Y_\tau \odif{\tau}
    + \frac{g(-\log(1-\tau))}{\sqrt{1-\tau}}\odif{W_\tau}.
\]

As the following \cref{prop:eps_opt_gives_OU_rev_process} shows, when one chooses the statistically optimal diffusion scale $\eps=\eps^*$ in the stochastic interpolant framework, the corresponding reverse-time SDE becomes an Ornstein-Uhlenbeck process in the time range $\unitinterval$. Before stating this result we need to introduce the \emph{backward drift} as in \cite{stochastic_interpolants} as well as the core result analogous to \cref{theorem:sde} along with an intra-interpolant conversion result similar to \cref{prop:intra_conversion}.

\begin{definition}[backward drift]
    \label{def:backward_drift}
    For an interpolant as in \cref{def:interpolant} and $\eps: \unitinterval \rightarrow \extposreals, \eps \in C^1(\interval[open left]{0}{1})$ we define the backward drift
    \[
        \overset{\leftarrow}{b^\eps} (t, x) \define \dot{\alpha}_t \eta_Z (t, x) + \dot{\beta}_t \eta_X (t, x) - \eps_t s(t, x).
    \]
\end{definition}

We refer the reader to \citet{stochastic_interpolants} for a proof of the following result.
\begin{theorem}
    \label{theorem:backward_sde}
    Define $\tau \define 1 - t$ and $\overset{\leftarrow}{W_\tau} \define -W_\tau = -W_{1-t}$. Then the solutions to the family of reverse-time SDEs
    \[
        \odif{\overset{\leftarrow}{X^\eps_\tau}} = - \overset{\leftarrow}{b^\eps} (\tau, \overset{\leftarrow}{X^\eps_\tau}) \odif{\tau} + \sqrt{2 \eps_\tau} \odif{\overset{\leftarrow}{W_\tau}},
    \]
    solved forward in reverse-time $\tau$ with $\overset{\leftarrow}{X^\eps_0} \sample \rho_X$ independent of $\overset{\leftarrow}{W}$, satisfy $\law(\overset{\leftarrow}{X^\eps_\tau}) = \law(I_{1-\tau})$ for all $\tau \in \unitinterval$.
\end{theorem}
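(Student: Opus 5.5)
The paper explicitly defers to \citet{stochastic_interpolants} for this result, but I would prove it directly from the forward result, \cref{theorem:sde}, via time reversal. The plan is to show two things. First, the reverse-time SDE has a Fokker--Planck equation solved by the time-reversed interpolant density $\tilde\rho(\tau,x) \define \rho(1-\tau,x)$. Second, that solution is unique given the initial law $\rho_X = \rho(1,\cdot)$.

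The first step is to recall the transport equation underlying \cref{theorem:sde} in the ODE case $\eps\equiv 0$. Write $b^0 = \dot\alpha_t\eta_Z + \dot\beta_t\eta_X = \ex[\dot I_t \given I_t = x]$. Testing against a smooth compactly supported $\varphi$ gives $\odv{}{t}\ex[\varphi(I_t)] = \ex[\grad\varphi(I_t)\cdot \dot I_t] = \ex[\grad\varphi(I_t)\cdot b^0(t,I_t)]$. Hence $\partial_t\rho + \grad\cdot(b^0\rho)=0$. Using $s\rho = \grad\rho$, I rewrite this for any $\eps$ in two ways:
\[
\partial_t\rho = -\grad\cdot\ps{(b^0+\eps_t s)\rho} + \eps_t\Delta\rho = -\grad\cdot\ps{(b^0-\eps_t s)\rho} - \eps_t\Delta\rho .
\]
The first form is the forward Fokker--Planck equation; the second involves exactly the backward drift $\overset{\leftarrow}{b^\eps} = b^0 - \eps s$ from \cref{def:backward_drift}.

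The second step substitutes $t = 1-\tau$, so that $\partial_\tau\tilde\rho = -\partial_t\rho$. This gives
\[
\partial_\tau \tilde\rho = \grad\cdot\ps{\overset{\leftarrow}{b^\eps}(1-\tau,\cdot)\tilde\rho} + \eps_{1-\tau}\Delta\tilde\rho,
\]
which is precisely the Fokker--Planck equation of an SDE with drift $-\overset{\leftarrow}{b^\eps}(1-\tau,\cdot)$ and diffusion $\sqrt{2\eps_{1-\tau}}$. Here I interpret the statement's drift argument and diffusion coefficient as the physical-time values at $t = 1-\tau$. I also check that $\overset{\leftarrow}{W_\tau}$ is a Brownian motion in $\tau$ by Lévy's characterization, as in the proof of \cref{theorem:path_transform_sde}. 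The initial condition matches, since $\tilde\rho(0,\cdot)=\rho(1,\cdot)=\rho_X$. By Itô's formula, the law of the SDE solution satisfies the same linear Fokker--Planck equation weakly, with the same initial datum.

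The main obstacle is uniqueness of weak solutions of this Fokker--Planck equation, together with behaviour at $\tau=1$ ($t=0$), where $s$ and possibly $\eps$ blow up. I would first establish equality of laws on $[0,1-\delta]$ for each $\delta>0$. On that interval, \cref{assum:first_assum} (regularity of $\rho$, $s$, $\eta$) and the assumed strong well-posedness provide the needed uniqueness, for instance via a superposition/Figalli-type uniqueness result for equations with locally Lipschitz coefficients and finite second moments. I would then let $\delta\to 0$, using continuity of $I_t$ in $t$ to extend $\law(\overset{\leftarrow}{X^\eps_\tau})=\law(I_{1-\tau})$ to the endpoint $\tau=1$.
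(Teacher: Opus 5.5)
Your proposal is correct and follows essentially the same route as the argument in \citet{stochastic_interpolants}, to which the paper defers: derive the transport equation $\partial_t\rho+\grad\cdot(b^0\rho)=0$, use $\grad\rho=s\rho$ to rewrite it as the backward Fokker--Planck equation with drift $\overset{\leftarrow}{b^\eps}=b^0-\eps s$, reverse time, and appeal to well-posedness of that equation. Your reading of the drift and diffusion as evaluated at physical time $1-\tau$ is indeed necessary, since with argument $\tau$ taken literally the claim already fails for Gaussian data with the linear schedule; your localization to $[0,1-\delta]$ with a superposition-type uniqueness argument is more careful than the reference, which simply asserts well-posedness, but note that for $\eps$ non-integrable at $t=0$ (such as $\eps^*$) the case $\tau=1$ only makes sense as a limit in law.
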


The following proposition can be proved analogously to \cref{prop:intra_conversion}.
\begin{proposition}
    \label{prop:intra_conversion_backward_drift}
    For a fixed interpolation schedule $(\alpha, \beta)$ and $\eps: \unitinterval \rightarrow \posreals$, the backward drift from \cref{def:backward_drift} can be expressed in terms of the score through the equation
    \[
        \overset{\leftarrow}{b^\eps} (t, x) = \ps{\eps_t^* - \eps_t} s(t, x) + \frac{\dot{\beta}_t}{\beta_t} x,
    \]
    for all $(t, x) \in \interval[open]{0}{1} \times \reals^d$.
\end{proposition}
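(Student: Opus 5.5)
The plan is to copy the proof of \cref{prop:intra_conversion}, with the sign of the score term flipped, and use only the two score relations already established there. Fix $(t,x) \in \interval[open]{0}{1} \times \reals^d$. The first relation is
\[
    \eta_Z(t,x) = -\alpha_t s(t,x),
\]
obtained from Stein's lemma as in \cref{prop:intra_conversion}. The second is
\[
    \eta_X(t,x) = \ps{x + \alpha_t^2 s(t,x)}/\beta_t,
\]
obtained by conditioning $I_t = \alpha_t Z + \beta_t X$ on $I_t = x$. Neither relation involves $\eps$, so both carry over unchanged.

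Next I substitute them into \cref{def:backward_drift}. This gives
\[
    \overset{\leftarrow}{b^\eps}(t,x) = -\alpha_t\dot{\alpha}_t s(t,x) + \frac{\dot{\beta}_t}{\beta_t}\ps{x + \alpha_t^2 s(t,x)} - \eps_t s(t,x).
\]
Collecting the coefficients of $s(t,x)$ yields $\alpha_t^2 \dot{\beta}_t/\beta_t - \alpha_t \dot{\alpha}_t - \eps_t$. By \cref{def:eps_optimal} this equals $\eps^*_t - \eps_t$. The remaining term is $(\dot{\beta}_t/\beta_t)x$, so the claimed formula follows.

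I do not expect a real obstacle; the computation is mechanical. The one point to check is that everything is finite on the open interval. There $\beta_t > 0$ and $\alpha_t > 0$, so $\eps^*_t$ is finite and Stein's lemma applies. The endpoints $t \in \set{0,1}$ are excluded, just as in \cref{prop:intra_conversion}.
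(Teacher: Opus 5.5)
Your proposal is correct and is essentially the paper's argument. The paper states that this result follows analogously to \cref{prop:intra_conversion}: substitute $\eta_Z = -\alpha_t s$ and $\eta_X = (x + \alpha_t^2 s)/\beta_t$ into the backward drift, then collect the coefficient of $s$ as $\eps^*_t - \eps_t$. That is exactly your computation.
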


We now state and prove our own result.
\begin{proposition}
    \label{prop:eps_opt_gives_OU_rev_process}
    Let $\overset{\leftarrow}{b^*} \define \overset{\leftarrow}{b^{\eps^*}}$ be the statistically optimal (in forward-time) reverse-time drift for any (non point mass) stochastic interpolant as per \cref{def:interpolant}. Then the associated reverse-time SDE from \cref{theorem:backward_sde} is a time-scaled Ornstein-Uhlenbeck process of the form
    \[
      \odif{\overset{\leftarrow}{X^*_\tau}}
      = -\frac{\dot{\beta}_\tau}{\beta_\tau} \overset{\leftarrow}{X^*_\tau} \odif{\tau}
        + \sqrt{2\eps^*_\tau} \odif{\overset{\leftarrow}{W_\tau}},
    \]
    solved forward in reverse time $\tau$ with initial condition $\overset{\leftarrow}{X^*_0} \sample \rho_X$.
    If additionally the interpolant is variance-preserving, i.e.~$\alpha_t^2+\beta_t^2=1$ for all $t \in \unitinterval$, then the reverse-time SDE reduces to
    \[
      \odif{\overset{\leftarrow}{X^*_\tau}}
      = -\eps^*_\tau \overset{\leftarrow}{X^*_\tau} \odif{\tau}
        + \sqrt{2\eps^*_\tau} \odif{\overset{\leftarrow}{W_\tau}}.
    \]
\end{proposition}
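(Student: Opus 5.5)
The plan is to reduce everything to \cref{prop:intra_conversion_backward_drift} together with \cref{theorem:backward_sde}. Taking $\eps = \eps^*$ in \cref{prop:intra_conversion_backward_drift}, the score coefficient $\eps^*_t - \eps_t$ vanishes identically on $\interval[open]{0}{1}$. This leaves
\[
    \overset{\leftarrow}{b^*}(t, x) = \frac{\dot{\beta}_t}{\beta_t} x .
\]
The backward drift is therefore linear in $x$ and no longer depends on $\rho_X$. Substituting it into the reverse-time SDE of \cref{theorem:backward_sde} with $\eps = \eps^*$ gives directly the first displayed equation. Its drift is linear with a time-dependent coefficient and its noise is additive, so it is a time-inhomogeneous Ornstein--Uhlenbeck process.

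I will follow the indexing convention of \cref{theorem:backward_sde}, in which the coefficients are evaluated at the reverse-time argument, and simply transcribe it. The initial law $\rho_X$ and the independence of $\overset{\leftarrow}{W}$ are inherited from that theorem without change.

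For the variance-preserving case, assume $\alpha_t^2 + \beta_t^2 = 1$. Differentiating gives $\alpha_t\dot{\alpha}_t = -\beta_t\dot{\beta}_t$. Inserting this into \cref{def:eps_optimal} yields
\[
    \eps^*_t = \alpha_t^2\frac{\dot{\beta}_t}{\beta_t} + \beta_t\dot{\beta}_t = (\alpha_t^2 + \beta_t^2)\frac{\dot{\beta}_t}{\beta_t} = \frac{\dot{\beta}_t}{\beta_t}.
\]
Replacing the drift coefficient $\dot{\beta}_\tau/\beta_\tau$ by $\eps^*_\tau$ then gives the second displayed SDE.

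The main point needing care is the endpoint behaviour, where $\dot{\beta}/\beta$ and $\eps^*$ blow up at the time corresponding to $t=0$ (\cref{prop:eps_opt}). I would state the SDE on the open interval, where all coefficients are finite and $C^1$, so the classical theory applies there. The endpoint is then handled by the limiting convention already used for \cref{theorem:backward_sde}; I would not try to prove well-posedness at the singular endpoint myself.
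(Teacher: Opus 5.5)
Your proposal is correct and matches the paper's proof essentially step for step: setting $\eps = \eps^*$ in \cref{prop:intra_conversion_backward_drift} cancels the score term, giving $\overset{\leftarrow}{b^*}(\tau, x) = (\dot{\beta}_\tau/\beta_\tau)x$, and in the variance-preserving case the identity $\alpha_t\dot{\alpha}_t = -\beta_t\dot{\beta}_t$ gives $\eps^*_t = \dot{\beta}_t/\beta_t$. Your remark about restricting to the open interval, where the coefficients are finite, is a sensible caveat that the paper leaves implicit.
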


\begin{proof}
    The SDE from \cref{theorem:backward_sde} reads
    \[
      \odif{\overset{\leftarrow}{X^\eps_\tau}}
      = - \overset{\leftarrow}{b^\eps}(\tau, \overset{\leftarrow}{X^\eps_\tau}) \odif{\tau}
        + \sqrt{2\eps_\tau} \odif{\overset{\leftarrow}{W_\tau}}.
    \]
    Using \cref{prop:intra_conversion_backward_drift} with $\eps=\eps^*$ gives
    \[
      \overset{\leftarrow}{b^*}(\tau,x) = \frac{\dot{\beta}_\tau}{\beta_\tau} x,
    \]
    so that the SDE from \cref{theorem:backward_sde} reduces to
    \[
      \odif{\overset{\leftarrow}{X^*_\tau}}
      = -\frac{\dot{\beta}_\tau}{\beta_\tau} \overset{\leftarrow}{X^*_\tau} \odif{\tau}
        + \sqrt{2\eps_\tau^*} \odif{\overset{\leftarrow}{W_\tau}},
    \]
    as claimed. Assuming now that $\alpha_\tau^2 + \beta_\tau^2 = 1$, i.e.~assuming that the schedule is variance-preserving, gives
    \[
      \eps^*_\tau
      = (1-\beta_\tau^2)\frac{\dot{\beta}_\tau}{\beta_\tau}
        - \alpha_\tau\left(\frac{-\dot{\beta}_\tau \beta_\tau}{\alpha_\tau}\right)
      = \frac{\dot{\beta}_\tau}{\beta_\tau},
    \]
    so that the SDE from \cref{theorem:backward_sde} reduces to
    \[
      \odif{\overset{\leftarrow}{X^*_\tau}}
      = -\eps^*_\tau \overset{\leftarrow}{X^*_\tau} \odif{\tau}
        + \sqrt{2\eps^*_\tau}\odif{\overset{\leftarrow}{W_\tau}},
    \]
    as claimed.
\end{proof}
\pagebreak
\section{Schedule visualizations}

The first row of \cref{fig:schedules} visualizes the linear schedule (see \cref{def:linear_interpolant}) used in flow matching models alongside the lazy ODE and SDE schedule, respectively (see \cref{prop:lazy_schedule_families} and in particular \cref{ex:u_t_eq_t}). The second row visualizes the statistically optimal diffusion scale $\eps^*$ (see \cref{def:eps_optimal} and \cref{theorem:eps_opt_kl_div}) associated with each schedule. The figure uses the denominator definition $d_t \define (1-t)^2 + t^2$. Note that these are the schedules and diffusion scales used in \cref{fig:typst_interpolant_paths}.

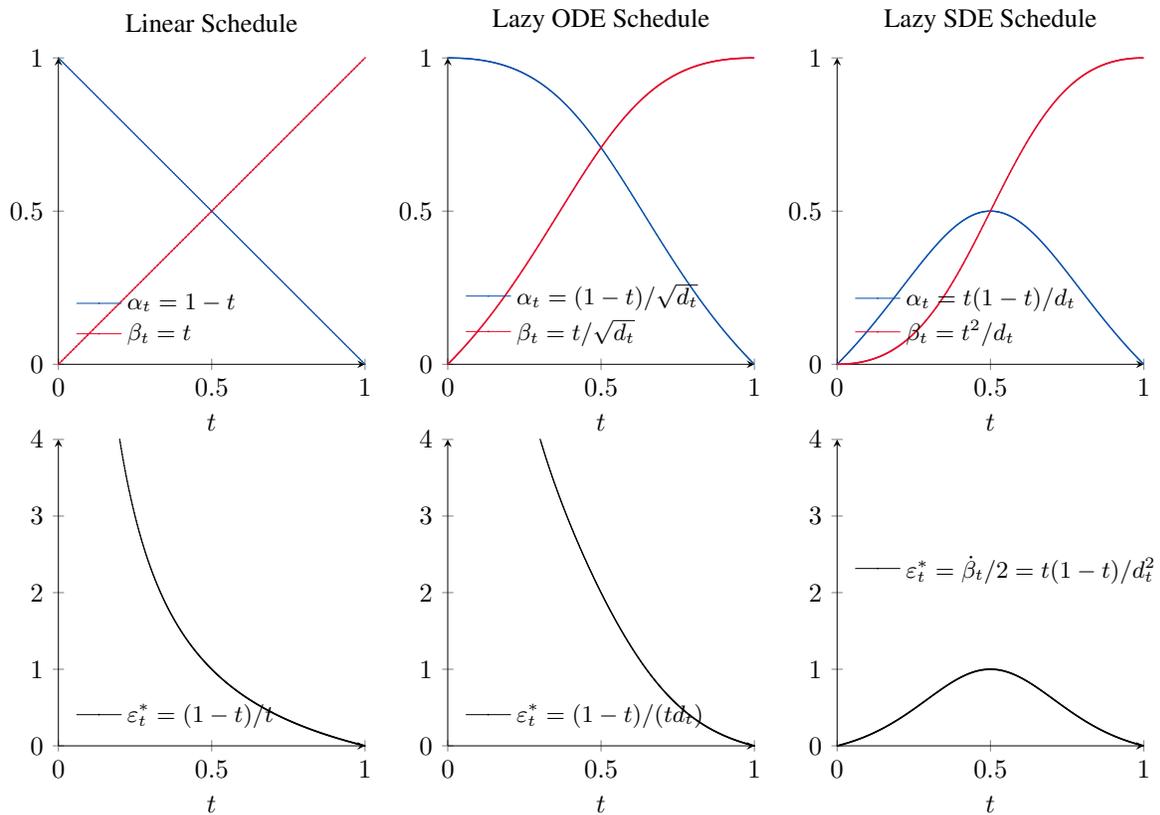
\begin{figure}[ht]
\centering
\begin{tikzpicture}

\pgfplotsset{
  every axis plot/.style={line width=0.30pt},
  every axis/.style={axis line style={line width=0.30pt}, tick style={line width=0.30pt}},
  sched/.style={
    width=0.33\textwidth,
    height=0.33\textwidth,
    xmin=0, xmax=1,
    ymin=0, ymax=1,
    xtick={0,0.5,1},
    ytick={0,0.5,1},
    xtick={0,0.5,1},
    ytick={0,0.5,1},
    xlabel={$t$},
    axis lines=left,
    legend style={
      draw=none,
      fill=none,
      font=\footnotesize,
      cells={anchor=west},
    },
    legend cell align=left,
  },
  epsplot/.style={
    width=0.33\textwidth,
    height=0.33\textwidth,
    xmin=0, xmax=1,
    ymin=0, ymax=4,
    xtick={0,0.5,1},
    ytick={0,1,2,3,4},
    xlabel={$t$},
    axis lines=left,
    legend style={
      draw=none,
      fill=none,
      font=\footnotesize,
      cells={anchor=west},
    },
    legend cell align=left,
  },
}

\definecolor{AlphaColor}{HTML}{064EA8}
\definecolor{BetaColor}{HTML}{E30C2C}

\begin{groupplot}[
  group style={
    group size=3 by 2,
    horizontal sep=1.1cm,
    vertical sep=1.0cm,
  },
]

\nextgroupplot[
  sched,
  title={Linear Schedule},
  legend style={at={(0.03,0.03)}, anchor=south west, draw=none, fill=none, font=\footnotesize, cells={anchor=west}},
]
\addplot[color=AlphaColor, mark=*, mark size=0.1pt, mark options={solid},
         domain=0:1, samples=200]
  {1 - x};
\addlegendentry{$\alpha_t=1-t$}
\addplot[color=BetaColor, mark=square*, mark size=0.1pt, mark options={solid},
         domain=0:1, samples=200]
  {x};
\addlegendentry{$\beta_t=t$}

\nextgroupplot[
  sched,
  title={Lazy ODE Schedule},
  legend style={at={(0.03,0.03)}, anchor=south west, draw=none, fill=none, font=\footnotesize, cells={anchor=west}},
]
\addplot[color=AlphaColor, mark=*, mark size=0.1pt, mark options={solid},
         domain=0:1, samples=400]
  {(1 - x)/sqrt((1 - x)^2 + x^2)};
\addlegendentry{$\alpha_t=(1-t)/\sqrt{d_t}$}
\addplot[color=BetaColor, mark=square*, mark size=0.1pt, mark options={solid},
         domain=0:1, samples=400]
  {x/sqrt((1 - x)^2 + x^2)};
\addlegendentry{$\beta_t=t/\sqrt{d_t}$}

\nextgroupplot[
  sched,
  title={Lazy SDE Schedule},
  legend style={at={(0.03,0.03)}, anchor=south west, draw=none, fill=none, font=\footnotesize, cells={anchor=west}},
]
\addplot[color=AlphaColor, mark=*, mark size=0.1pt, mark options={solid},
         domain=0:1, samples=400]
  {(x*(1 - x))/((1 - x)^2 + x^2)};
\addlegendentry{$\alpha_t=t(1-t)/d_t$}
\addplot[color=BetaColor, mark=square*, mark size=0.1pt, mark options={solid},
         domain=0:1, samples=400]
  {(x^2)/((1 - x)^2 + x^2)};
\addlegendentry{$\beta_t=t^2/d_t$}

\nextgroupplot[
  epsplot,
  legend style={at={(0.03,0.03)}, anchor=south west, draw=none, fill=none, font=\footnotesize, cells={anchor=west}},
]
\addplot[color=black, mark=diamond*, mark size=0.1pt, mark options={solid},
         domain=0.02:1, samples=600]
  {(1 - x)/x};
\addlegendentry{$\eps_t^*=(1-t)/t$}

\nextgroupplot[
  epsplot,
  legend style={at={(0.03,0.03)}, anchor=south west, draw=none, fill=none, font=\footnotesize, cells={anchor=west}},
]
\addplot[color=black, mark=diamond*, mark size=0.1pt, mark options={solid},
         domain=0.02:1, samples=600]
  {(1 - x)/(x*((1-x)^2 + x^2))};
\addlegendentry{$\eps_t^*=(1-t)/(t d_t)$}

\nextgroupplot[
  epsplot,
  legend style={at={(0.03,0.5)}, anchor=south west, draw=none, fill=none, font=\footnotesize, cells={anchor=west}},
]
\addplot[color=black, mark=diamond*, mark size=0.1pt, mark options={solid},
         domain=0:1, samples=600]
  {x*(1 - x)/((1-x)^2 + x^2)^2};
\addlegendentry{$\eps_t^*=\dot{\beta}_t/2=t(1-t)/d_t^2$}

\end{groupplot}
\end{tikzpicture}

\caption{Schedules $(\alpha,\beta)$ (top row) and corresponding statistically optimal diffusion scale $\eps^*$ (bottom row). These are the schedules and diffusion scales used in \cref{fig:typst_interpolant_paths}.}
\label{fig:schedules}
\end{figure}

\pagebreak
\section{Pseudocode for lazy ODE and SDE sampling}
\cref{alg:ode_sample,alg:sde_sample} give pseudocode for the algorithmic formulation of \cref{prop:ode_opt_schedule_transform,prop:sde_opt_schedule_transform} using the explicit Euler and Euler-Maruyama solver, respectively. We stress that any ODE or SDE solver can be used in practice.

\FloatBarrier

\newcommand{\SDEgray}[1]{\textcolor{black!65}{#1}}
\newcommand{\AlgPad}[1]{\STATE \vphantom{$#1$}}

\newcommand{\AlgoHeader}[2]{%
  \refstepcounter{algorithm}\label{#2}%
  \noindent\textbf{Algorithm \thealgorithm}\ #1\par
}

\begin{figure*}[h]
  \centering

  \begin{minipage}[t]{0.48\textwidth}
    \vspace{0pt}
    \hrule\vspace{0.6ex}
    \AlgoHeader{ODE-Sample}{alg:ode_sample}
    \vspace{0.6ex}\hrule\vspace{0.8ex}

    \begin{algorithmic}
      \STATE {\bfseries Input:} linear velocity $\overline{b}$, step size $\Delta t$
      \AlgPad{d_{\text{next}} \gets 1 + 2\Delta t(\Delta t-1)}
      \AlgPad{\beta_{\text{next}} \gets \left(\frac{\Delta t}{d_{\text{next}}}\right)^2}
      \AlgPad{\Delta W \gets \mathcal{N}(0,\beta_{\text{next}} I)}
      \STATE $t \gets 0$
      \STATE $x \gets \text{sample } \mathcal{N}(0, I)$

      \WHILE{$t < 1$}
        \STATE $d \gets 1 + 2t(t-1)$
        \AlgPad{\beta \gets \beta_{\text{next}}}
        \STATE $b \gets \frac{1-2t}{d}\,x \;+\; \frac{1}{\sqrt{d}}\,\overline{b}\!\left(t,\sqrt{d}\,x\right)$

        \AlgPad{t_{\text{next}} \gets t + \Delta t}
        \AlgPad{d_{\text{next}} \gets 1 + 2t_{\text{next}}(t_{\text{next}}-1)}
        \AlgPad{\beta_{\text{next}} \gets \left(\frac{t_{\text{next}}}{d_{\text{next}}}\right)^2}
        \AlgPad{\Delta W \gets \mathcal{N}(0,(\beta_{\text{next}}-\beta)I)}

        \STATE $x \gets x + \Delta t\, b$ \hfill {\footnotesize (explicit Euler step)}
        \STATE $t \gets t + \Delta t$
      \ENDWHILE
      \STATE {\bfseries Return:} $x$
    \end{algorithmic}

    \vspace{0.8ex}\hrule
  \end{minipage}
  \hfill
  \begin{minipage}[t]{0.48\textwidth}
    \vspace{0pt}
    \hrule\vspace{0.6ex}
    \AlgoHeader{SDE-Sample}{alg:sde_sample}
    \vspace{0.6ex}\hrule\vspace{0.8ex}

    \begin{algorithmic}
      \STATE {\bfseries Input:} linear velocity $\overline{b}$, step size $\Delta t$
      \STATE \SDEgray{$d_{\text{next}} \gets 1 + 2\Delta t(\Delta t-1)$}
      \STATE \SDEgray{$\beta_{\text{next}} \gets \left(\frac{\Delta t}{d_{\text{next}}}\right)^2$}
      \STATE \SDEgray{$\Delta W \gets \text{sample } \mathcal{N}\!\left(0,\beta_{\text{next}} \id\right)$}
      \STATE $t \gets \Delta t$
      \STATE $x \gets \Delta W$ \hfill {\footnotesize ($X^*_0 = b^*(0,0)=0$)}

      \WHILE{$t < 1$}
        \STATE $d \gets d_{\text{next}}$
        \STATE \SDEgray{$\beta \gets \beta_{\text{next}}$}
        \STATE $b^* \gets \frac{2}{d}\Big((1-2t)x + t\,\overline{b}\!\left(t,\frac{d}{t}x\right)\Big)$
        \STATE \SDEgray{$t_{\text{next}} \gets t + \Delta t$}
        \STATE \SDEgray{$d_{\text{next}} \gets 1 + 2t_{\text{next}}(t_{\text{next}}-1)$}
        \STATE \SDEgray{$\beta_{\text{next}} \gets \left(\frac{t_{\text{next}}}{d_{\text{next}}}\right)^2$}
        \STATE \SDEgray{$\Delta W \gets \text{sample } \mathcal{N}\!\left(0,(\beta_{\text{next}}-\beta)\id\right)$}
        \STATE $x \gets x + \Delta t\, b^* + \Delta W$ \hfill {\footnotesize (Euler-Maruyama step)}
        \STATE $t \gets t + \Delta t$
      \ENDWHILE
      \STATE {\bfseries Return:} $x$
    \end{algorithmic}

    \vspace{0.8ex}\hrule
  \end{minipage}
\end{figure*}

\FloatBarrier

\pagebreak
\section{ODE and SDE numerical integration schemes} \label{sec:app_schemes}

\FloatBarrier

Below we give numerical schemes for solving the SDE from \cref{theorem:sde} with drift $b^\eps$ and additive diffusion scale $\eps$,
\[
    \odif{X^\eps_t} = b^\eps (t, X^\eps_t) \odif{t} + \sqrt{2 \eps_t} \odif{W_t}.
\]
With $\eps \equiv 0$ the SDE degenerates to an ODE and e.g.~the Euler-Maruyama scheme becomes the explicit Euler scheme. Note that the experiments either use $\eps \equiv 0$ or $\eps = \eps^*$, and with the schedules we consider the quadratic variation from time $s$ to $t$ ($0 \leq s \leq t \leq 1$) under the latter diffusion scale, $\int_s^t 2\eps_u^* \odif{u}$, can be calculated analytically (see \cref{prop:lazy_schedule_families} and \cref{prop:linear_eps_optimal}) and is therefore used in the below schemes and in our experiments.

For a total of $N > 1$ solver steps we use a fixed step size of $\Delta t = \frac{1}{N}$. We denote the solution after $n \in \set{0, 1, \ldots, N}$ steps by $Y_n$ at time $t_n = n \Delta t$. For $n < N$ the independent Wiener process increments from time $t_n$ to $t_{n+1}$ are denoted $\Delta W_n \sample \normal(0, \Delta t \id)$ and we define $V_n \define \ps{\int_{t_n}^{t_{n+1}} 2\eps_t \odif{t}} \Delta W_n / \Delta t$. With the ``corrected'' solution at time $n$ denoted by $\tilde{Y}_n$ and initially defined as $\tilde{Y}_0 \define Y_0$ the update step from $Y_n$ to $Y_{n+1}$ is then defined as below.

\textbf{Euler-Maruyama}
    \[
        Y_{n+1} = Y_n + \Delta t b^\eps (t_n, Y_n) + V_n.
    \]
\textbf{Predictor-Corrector}
    \begin{align*}
        Y_{n+1} &= \tilde{Y}_n + \Delta t b^\eps (t_n, Y_n) + V_n, \\
        \tilde{Y}_{n+1} &= \tilde{Y}_n + \frac{1}{2} \Delta t \ps{b^\eps (t_n, Y_n) + b^\eps (t_{n+1}, Y_{n+1})} + V_n.
    \end{align*}
\textbf{Heun}
    \begin{align*}
        Y_{n+1} &= \tilde{Y}_n + \Delta t b^\eps (t_n, \tilde{Y}_n) + V_n, \\
        \tilde{Y}_{n+1} &= \tilde{Y}_n + \frac{1}{2} \Delta t \ps{b^\eps (t_n, Y_n) + b^\eps (t_{n+1}, Y_{n+1})} + V_n.
    \end{align*}

We use a variant of the predictor-corrector and Heun scheme in which the final, returned solution is $Y_N$ so that the corrected final solution $\tilde{Y}_N$ is never computed. This prevents us from ever having to evaluate the drift $b^\eps$ at $t=1$.\footnote{Evaluating the drift at $t=1$ is in fact not a problem for us. The PRX flow model we use uses linear ODE velocity parameterization and no division by zero (which can potentially occur since $\alpha_1 = 0$) is introduced in our conversion formulas from this linear velocity to the ODE velocities and SDE drifts we consider.}

Note that the ``correct'' step for the predictor-corrector scheme is equivalent to the Heun scheme. They both use an Euler-Maruyama ``predict'' step and only differ in whether the drift in this step is evaluated in $Y_n$ or $\tilde{Y}_n$. This detail means that the Heun scheme uses two drift evaluations per step while the Euler-Maruyama and predictor-corrector schemes only use one. Since evaluating this drift function in practice amount to doing an expensive forward pass in a neural network the Heun scheme is therefore roughly twice as slow as it requires $2N - 1$ function evaluations instead of $N$. To compare the three schemes fairly, we used half the amount of steps for the Heun scheme compared to the predictor-corrector and Euler-Maruyama schemes.

For the PRX model and settings used in our experiments we find that the predictor-corrector scheme generally converges faster as a function of drift evaluations than the Euler-Maruyama and Heun scheme, especially for SDE generation. This is indicated by \cref{fig:scheme_comparison_ode_flow} and \cref{fig:scheme_comparison_sde_flow} which consider sample images generated with the standard linear flow model schedule using the ODE generation ($\eps \equiv 0$) and statistically optimal SDE generation ($\eps = \eps^*$), respectively, for the same prompt ``Two husky's hanging out of the car windows''. One sees that the predictor-corrector scheme converges to the reference image at $4096$ steps (at which the Heun scheme has used $2048$ steps) slightly faster than the Euler-Maruyama and Heun scheme. All schemes more or less converge to the same reference image, as expected.

\begin{figure}[ht]
    \centering
    \includegraphics[width=\linewidth]{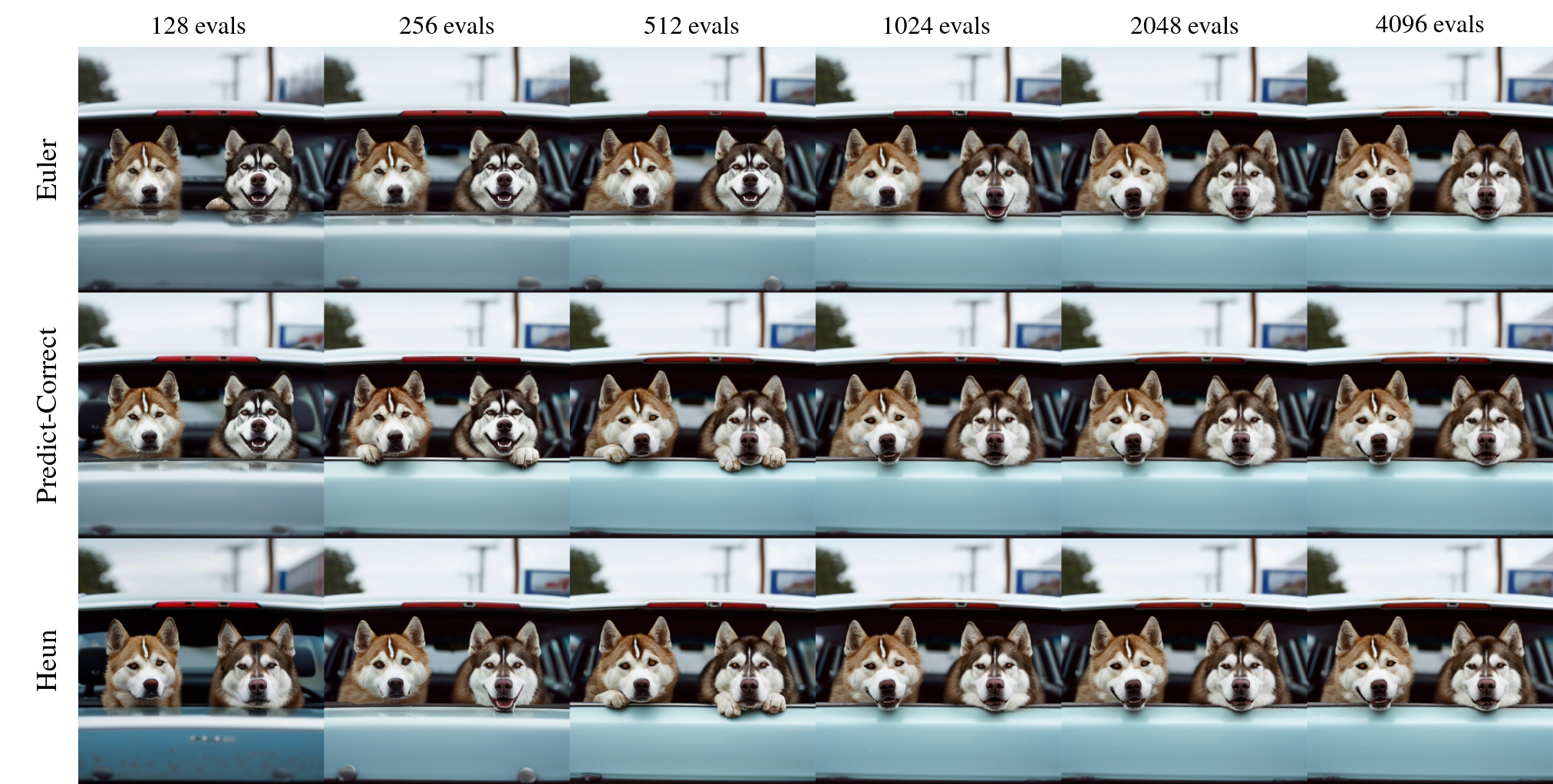}
    \caption{Sample images from the PRX flow model with varying number of steps using different integration schemes for the ODE with original linear flow model schedule. A guidance strength of $5$ with the text prompt ``Two husky's hanging out of the car windows'' was used.}
    \label{fig:scheme_comparison_ode_flow}
\end{figure}

\begin{figure}[ht]
    \centering
    \includegraphics[width=\linewidth]{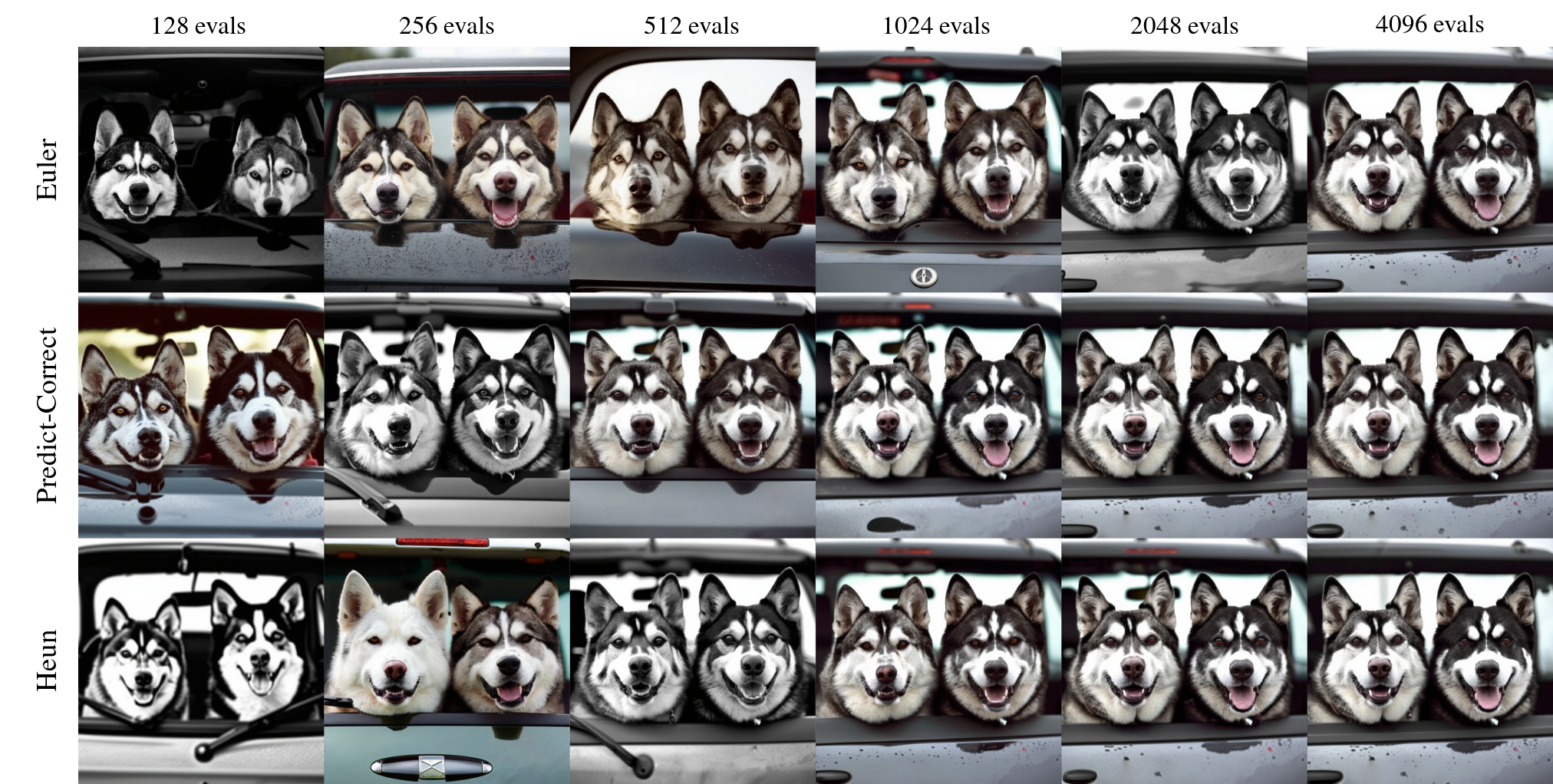}
    \caption{Same as \cref{fig:scheme_comparison_ode_flow} but for the statistically optimal SDE with original linear flow model schedule.}
    \label{fig:scheme_comparison_sde_flow}
\end{figure}

Note that the choice of integration scheme is not important for our experiments since the purpose is not to investigate which scheme leads to better image generation (faster convergence) but rather to assess the effect of transforming to the lazy schedule whilst keeping the integration scheme fixed.

\FloatBarrier

\pagebreak
\section{Further experimental results}\label{sec:app_exp_results}%
\FloatBarrier

\subsection{Solver convergence animations}
The solver convergence is better visualized as an animation than as a static image. \cref{fig:n64_samples} shows sample images for $5$ prompts generated with $64$ predictor-corrector solver steps. An animation over multiple solver steps is available at \href{\zenodourl}{this Zenodo link}. Heuristically, we find that after $64$ steps the SDE solutions are comparable to the ODE solutions in fidelity, but this is of course a subjective matter.

\begin{figure*}[ht]
    \centering
    \includegraphics[width=\linewidth]{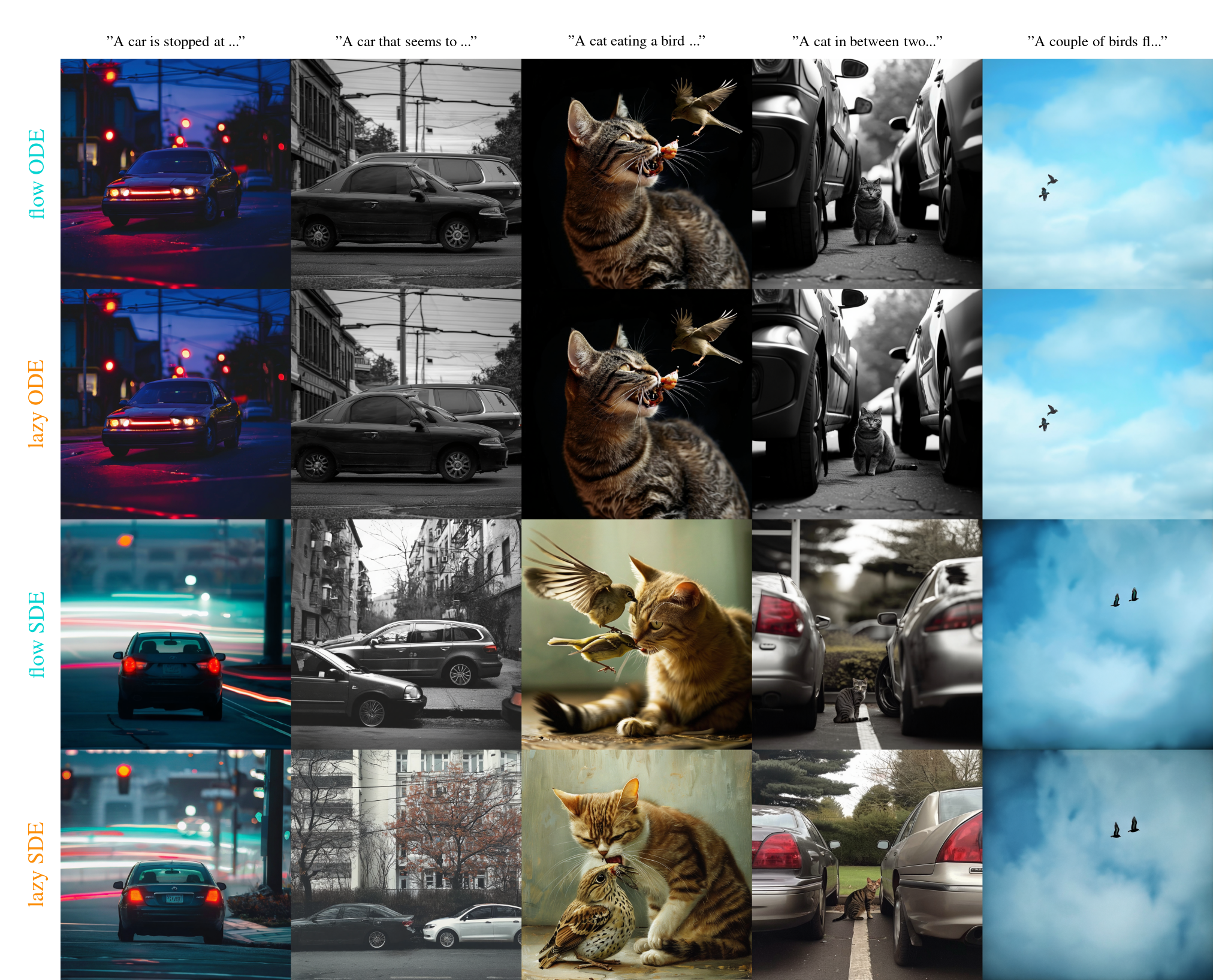}
    
    \caption{Sample images generated using $64$ predictor-corrector steps for $5$ different prompts from our experiment. Each row corresponds to whether an ODE or SDE is used and whether the original linear \textcolor{retrocyan}{flow} model schedule or the conversion to the \textcolor{retroorange}{lazy} schedule is used. An animation over all solver step amounts $\set{4, 8, \ldots, 4096}$ is available at \href{\zenodourl}{this Zenodo link}.}
    \label{fig:n64_samples}
    \vskip -0.1in
\end{figure*}

\FloatBarrier

\subsection{Equivalence of convergence}
Here we test whether the ODE and SDE solutions converge to the same reference image irrespectively of whether the flow or lazy schedule is used. For each $n_i$ amount of solver steps we calculate the RMSE between the image generated with the original linear flow model schedule against the image generated with the conversion to the lazy one using $n_i$ steps, i.e.~we calculate
\[
    \texttt{RMSE}(\texttt{img}^\text{ODE,flow}_{n_i}, \texttt{img}^\text{ODE,lazy}_{n_i})
\]
for each $i$ and similarly for the SDE. We then average over all $100$ prompts and plot this average as a function of $n_i$ for the ODE and SDE, respectively. The result is shown in \cref{fig:convergence_plot_within_step}. The figure indicates that both the ODE and SDE converge to the same reference image irrespectively of which schedule is used, but this convergence is much faster for the ODE than for the SDE. This is probably due to the fact that the flow and lazy schedule are somewhat similar for ODE generation (in particular they both satisfy $\alpha_0 = 1$), while for SDE generation the lazy schedule is a point mass schedule (see \cref{fig:schedules}). Since the flow model velocity is not learned perfectly, approximation errors make the flow-to-lazy schedule transform inexact which amplifies more for SDE generation relative to ODE generation.

\begin{figure*}[ht]
    \centering
    \includegraphics[width=\linewidth]{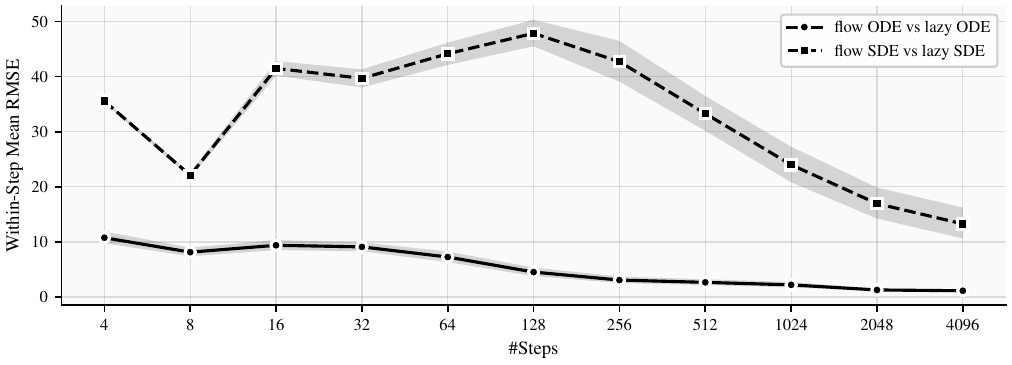}
    
    \caption{Within-step mean RMSE as a function of the number of solver steps using the predictor-corrector scheme. By ``within-step RMSE'' we mean the RMSE between the image generated under the original linear flow model schedule versus that generated using the conversion to the lazy schedule for a fixed number of solver steps, for the ODE and SDE, respectively. Shaded area indicates a 95\% confidence interval calculated by bootstrapping with $\num[group-separator={,}]{10000}$ samples. The plot indicates that the RMSE converges to zero as the number of solver steps grows, as one would expect, although the convergence is much faster for the ODE than for the SDE.}
    \label{fig:convergence_plot_within_step}
    \vskip -0.1in
\end{figure*}

\FloatBarrier

\subsection{Convergence results using the flow image as a common reference}
\begin{figure*}[ht]
    \centering
    \includegraphics[width=0.49\linewidth]{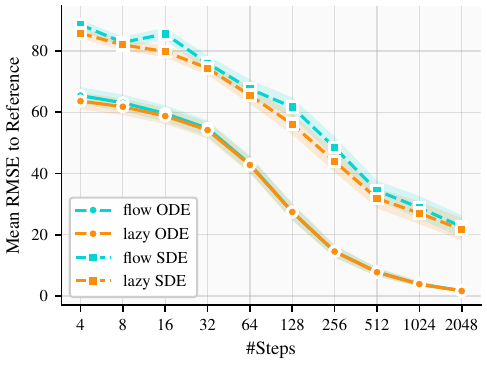}
    \includegraphics[width=0.49\linewidth]{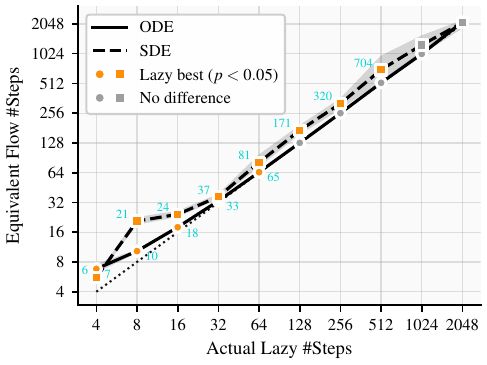}
    
    \caption{Same as \cref{fig:convergence_plot} but with $\texttt{img}_\text{reference}^\text{ODE} \define \texttt{img}^\text{ODE,flow}_{4096}$ instead of $\texttt{img}_\text{reference}^\text{ODE} \define (\texttt{img}^\text{ODE,flow}_{4096} + \texttt{img}^\text{ODE,lazy}_{4096})/2$, and similarly for the SDE.}
    \label{fig:convergence_plot_flow_ref}%
    \vskip -0.1in
\end{figure*}

Here we recreate \cref{fig:convergence_plot} but instead of defining
\[
    \texttt{img}_\text{reference}^\text{ODE} \define (\texttt{img}^\text{ODE,flow}_{4096} + \texttt{img}^\text{ODE,lazy}_{4096})/2,
\]
we instead define
\[
    \texttt{img}_\text{reference}^\text{ODE} \define \texttt{img}^\text{ODE,flow}_{4096},
\]
and similarly for the SDE. Clearly this puts the lazy schedule method at a disadvantage. However, \cref{fig:convergence_plot_flow_ref} shows that even with this disadvantage the lazy schedule converges statistically significantly faster most of the time. As one would expect, the results are slightly worse, e.g.~the lazy schedule is not statistically significantly better for SDE generation at $1024$ solver steps compared to the linear flow schedule, whereas when using the average reference as common reference it is (see \cref{fig:convergence_plot}).

\pagebreak
\section{COCO captions} \label{sec:app_coco_captions}
The COCO validation dataset consists of images with five human generated captions each. We extract the first caption from the first $100$ images and remove any potential trailing period. We use the resulting $100$ sentences as text prompts for the PRX flow model. These are shown below sorted alphabetically (ASCII-wise).

\FloatBarrier

\begin{Verbatim}[numbers=left,xleftmargin=5mm]
A bathroom sink with toiletries on the counter
A bathroom with a poster of an ugly face above the toilette
A bathroom with a sink and shower curtain with a map print
A bathroom with a toilet, sink, and shower
A bathroom with a traditional toilet next to a floor toilet
A beautiful dessert waiting to be shared by two people
A bicycle replica with a clock as the front wheel
A bike leaning against a sign in Scotland
A black Honda motorcycle parked in front of a garage
A black and white photo of an older man skiing
A black and white toilet with a plunger
A black cat is inside a white toilet
A boat that looks like a car moves through the water
A brown and black horse in the middle of the city eating grass
A brown purse is sitting on a green bench
A car is stopped at a red light
A car that seems to be parked illegally behind a legally parked car
A cat eating a bird it has caught
A cat in between two cars in a parking lot
A couple of birds fly through a blue cloudy sky
A cute kitten is sitting in a dish on a table
A delivery truck with an advertisement for Entourage
A dirt bike rider doing a stunt jump in the air
A dog and a goat with their noses touching at fence
A dog sitting between its masters feet on a footstool watching tv
A door with a sticker of a cat door on it
A dual sink vanity with mirrors above the sinks
A few items sit on top of a toilet in a bathroom stall
A fighter jet is flying at a fast speed
A fireplace with a fire built in it
A gas stove next to a stainless steel kitchen sink and countertop
A green car on display next to a busy street
A group of people preparing food in a kitchen
A group of people riding mopeds in a busy street
A kitchen is shown with wooden cabinets and a wooden celling
A large U.S Air Force plain sits on an asphalt ramp
A large passenger airplane flying through the air
A large passenger jet taking off from an airport
A little girl in a public bathroom for kids
A long empty, minimal modern skylit home kitchen
A man getting a drink from a water fountain that is a toilet
A man in a wheelchair and another sitting on a bench that is overlooking the water
A man is sitting on a bench next to a bike
A man sits with a traditionally decorated cow
A man standing in a kitchen while closing a cupboard door
A parade of motorcycles is going through a group of tall trees
A person holding a skateboard overlooks a dead field of crops
A person walking in the rain on the sidewalk
A photograph of a kitchen inside a house
A picture advertising Arizona tourism in an airport
A picture of a man playing a violin in a kitchen
A pinewood and green modern themed kitchen area
A public bathroom with a bunch of urinals
A random plane in the sky flying alone
A room with blue walls and a white sink and door
A shot of an elderly man inside a kitchen
A small car is parked in front of a scooter
A small child climbs atop a large motorcycle
A small closed toilet in a cramped space
A tiny bathroom with only a toilet and a shelf
A toilet sits next to a window and in front of a shower
A trio of dogs sitting in their owner's lap in a red convertible
A variety of pots are stored in a nook by a fireplace
A woman is walking a dog in the city
An airplane with its landing wheels out landing
An all white kitchen with an electric stovetop
An office cubicle with four different types of computers
An office cubicle with multiple computers in it
An old toilet with a hello kitty cover top
An old-fashioned green station wagon is parked on a shady driveway
Fog is in the air at an intersection with several traffic lights
Half of a white cake with coconuts on top
Jet liner flying off into the distance on an overcast day
Little birds sitting on the shoulder of a giraffe
Man in motorcycle leathers standing in front of a group of bikes
Office space with office equipment on desk top
Pedestrians walking down a sidewalk next to a small street
People are waiting for the bus near a bus stop
Posted signs point the way through a parking garage
Riding a motorcycle down a street that has no one else on it
Rows of motor bikes and helmets in a city
Several motorcycles riding down the road in formation
The airplane is on the runway with two young men standing by
The back door with a window in the kitchen
The sign of a restaurant in the outside of the store
The telephone has a banana where the receiver should be
The woman in the kitchen is holding a huge pan
The young man is stirring his pot of food with a wooden spoon
This is an open box containing four cucumbers
Two husky's hanging out of the car windows
Two women waiting at a bench next to a street
Women father around a desk and machinery in a factory
a bike leaning on a metal fence next to some flowing water
a counter with vegetables, knife and cutting board on it
a man sleeping with his cat next to him
a man with a bike at a marina
a modern flush toilet in a bathroom with tile
a small toilet stall with a toilet brush and 3 rolls of toilet paper
an airport with one plane flying away and the other sitting on the runway
four urinals in a public restroom with a window
\end{Verbatim}

\FloatBarrier


\end{document}